\def\srk{\mathsf{r}}
\newcolumntype{L}{>{\centering\arraybackslash} m{0.04\columnwidth}} % -cotter
\newcolumntype{R}{>{\centering\arraybackslash} m{0.48\columnwidth}} % -cotter
\newcolumntype{S}{>{\centering\arraybackslash} m{0.32\columnwidth}} % -cotter
\newtheorem{assumption}{Assumption}
\def\A{{\bf A}}
\def\B{{\bf B}}
\def\bb{{\bf b}}
\def\C{{\bf C}}
\def\D{{\bf D}}
\def\e{{\bf e}}
\def\G{{\bf G}}
\def\H{{\bf H}}
\def\I{{\bf I}}
\def\j{{\bf j}}
\def\k{{\bf k}}
\def\M{{\bf M}}
\def\PP{{\bf P}}
\def\S{{\bf S}}
\def\s{{\bf s}}
\def\U{{\bf U}}
\def\V{{\bf V}}
\def\v{{\bf v}}
\def\W{{\bf W}}
\def\w{{\bf w}}
\def\X{{\bf X}}
\def\x{{\bf x}}
\def\0{{\bf 0}}
\def\1{{\bf 1}}
\def\AM{{ A}}
\def\FM{{\mathscr F}}
\def\JM{{\mathscr J}}
\def\NM{{\mathcal N}}
\def\OM{{\mathcal O}}
\def\SM{{\mathscr S}}
\def\GB{{\mathbb G}}
\def\EB{{\mathbb E}}
\def\RB{{\mathbb R}}
\def\PB{{\mathbb P}}
\def\si{\mbox{\boldmath$\sigma$\unboldmath}}
\def\Xii{\mbox{\boldmath$\Xi$\unboldmath}}
\def\argmin{\mathop{\rm argmin}}
\def\card{\mathsf{card}}
\def\var{\mathsf{var}}
\def\cov{\mathrm{cov}}
\def\tr{\mathrm{tr}}
\def\rk{\mathrm{rank}}
\def\diag{\mathsf{diag}}
\def\ve{\varepsilon}
\newcommand{\ttop}{^{\top}}
\newcommand{\tsum}{\textstyle\sum}
\newcommand{\ts}{\textstyle}
\newtheorem{alg}{Algorithm}
\newcommand{\comment}[1]{}
\begin{document}
\title{A Bootstrap Method for Error Estimation \\in Randomized Matrix Multiplication}

\author{\name Miles E.\ Lopes \email melopes@ucdavis.edu\\ 
	\addr Department of Statistics\\ 
	University of California at Davis\\ 
	Davis, CA 95616, USA
	\AND 
	\name Shusen Wang \email shusen.wang@stevens.edu\\ 
	\addr Department of Computer Science \\ 
	Stevens Institute of Technology\\ 
	Hoboken, NJ 07030, USA 
	\AND 
	\name Michael W.\ Mahoney \email mmahoney@stat.berkeley.edu \\ 
	\addr International Computer Science Institute and Department of Statistics\\ 
	University of California at Berkeley\\ 
	Berkeley, CA 94720, USA
}

\editor{Hui Zou}

\maketitle

\begin{abstract}%
In recent years, randomized methods for numerical linear algebra have received growing interest as a general approach to large-scale problems. Typically, the essential ingredient of these methods is some form of randomized dimension reduction, which accelerates computations, but also creates random approximation error. In this way, the dimension reduction step  encodes a tradeoff between cost and accuracy. However, the exact numerical relationship between cost and accuracy is typically unknown, and consequently, it may be difficult for the user to precisely know (1) how accurate a given solution is, or (2) how much computation is needed to achieve a given level of accuracy. In the current paper, we study randomized matrix multiplication (sketching) as a prototype setting for addressing these general problems. As a solution, we develop a bootstrap method for \emph{directly estimating} the accuracy as a function of the reduced dimension (as opposed to deriving worst-case bounds on the accuracy in terms of the reduced dimension). From a computational standpoint, the proposed method does not substantially increase the cost of standard sketching methods, and this is made possible by an ``extrapolation'' technique. In addition, we provide both theoretical and empirical results to demonstrate the effectiveness of the proposed method.
\end{abstract}

\begin{keywords}
matrix sketching, randomized matrix multiplication, bootstrap methods
\end{keywords}

%%%%%%%%%%%%%%%%%%%%%%%%%%%%%%%%%%%%%%%%%%%%%%

\section{Introduction}
\label{sec:intro}

The development of randomized numerical linear algebra (RNLA or RandNLA) has led to a variety of efficient methods for solving large-scale matrix problems,  such as matrix multiplication, least-squares approximation, and low-rank matrix factorization, among others~\citep{halko2011random,mahoney2011randomized,woodruff2014sketching,drineas2016randnla}. 
 A general feature of these methods is that they apply some form of randomized dimension 
 reduction to an input matrix, which reduces the cost of subsequent computations. 
 In exchange for the reduced cost, the randomization leads to some error in the resulting solution, and consequently, there is a tradeoff between cost and accuracy.

For many canonical matrix problems, the relationship between cost and accuracy 
has been the focus of a growing body of theoretical work, and the literature provides 
many performance guarantees for RNLA  methods.
In general, these guarantees offer a good qualitative description of how 
the accuracy depends on factors such as problem size, number of iterations, 
condition numbers, and so on. 
Yet, it is also the case that such guarantees tend to be overly pessimistic for any particular problem instance --- often because the guarantees are formulated to hold in the worst case among a large class of possible inputs.
Likewise, it is often impractical to use such guarantees to determine precisely how accurate a given solution is, or precisely how much computation is needed to achieve a desired level of accuracy.

In light of this situation, it is of  interest to develop efficient methods for estimating the exact relationship between the cost and accuracy of RNLA methods on a \emph{problem-specific basis}. 
Since the literature has been somewhat quiet on this general question, the aim of this paper is to analyze randomized matrix multiplication as a prototype setting, and propose an approach that may be pursued more broadly. (Extensions are discussed at the end of the paper in Section~\ref{sec:extend}.)

\subsection{Randomized matrix multiplication}

To describe our problem setting, we briefly review the rudiments of randomized matrix multiplication, which is often known as matrix sketching~\citep{drineas06fastmonte1,mahoney2011randomized,woodruff2014sketching}. 
If $\A\in\mathbb{R}^{n\times d}$ and $\B\in\mathbb{R}^{n\times d'}$ are fixed input matrices, then sketching methods are commonly used to approximate $\A^T\B$ in the the regime where $\max\{d,d'\}\ll n$. For instance, this regime corresponds to ``big data'' applications where $\A$ and $\B$ are data matrices with very large numbers of observations. 
 
 As a way of reducing the cost of ordinary matrix multiplication, the main idea of sketching is to compute the product $\tilde{\A}^T\tilde{\B}$ of  smaller matrices $\tilde{\A}\in\RB^{t\times d}$ and $\tilde{\B}\in\RB^{t\times d'}$, for some choice of $t\ll n$. These smaller matrices are referred to as ``sketches'', and they are generated randomly according to
\begin{equation}
\small
\tilde{\A}: = \S\A \text{ \ \ \ and  \ \ \  } \tilde{\B}:=\S\B,
\end{equation}
where $\S\in\RB^{t\times n}$ is a random ``sketching matrix'' satisfying the condition
\begin{equation}\label{eqn:unbiased}
\EB[\S^T\S]=\I_n,
\end{equation}
with $\I_n$ being the identity matrix.
In particular, the relation~\eqref{eqn:unbiased} implies that the sketched product is an unbiased estimate,
$\EB[\tilde{\A}^T\tilde{\B}] = \A^T\B$.
Most commonly, the matrix $\S$ can be interpreted as acting on $\A$ and $\B$ by sampling their rows, or  by randomly projecting their columns. In Section~\ref{sec:pre}, we describe some popular examples of sketching matrices to be considered in our analysis.

\subsection{Problem formulation} 

When sketching is implemented, the choice of the sketch size $t$ plays a central role, since it directly controls the relationship between cost and accuracy. If $t$ is small, then the  sketched product $\tilde{\A}^T\tilde{\B}$ may be computed quickly, but it is unlikely to be a good approximation to $\A^T\B$. Conversely, if $t$ is large, then the sketched product is more expensive to compute, but it  is more likely to be accurate. For this reason, we will parameterize  the relationship between cost and accuracy in terms of $t$.

Conventionally, the error of an approximate matrix product is measured with a norm, and in particular, we will consider error as measured by the $\ell_{\infty}$-norm,
\begin{equation} \label{eq:alg:zt}
\varepsilon_t \; := \; \big\| \A^T \S^T \S \B - \A^T \B \big\|_\infty,
\end{equation}
where $\|\C\|_{\infty}:=\max_{i,j}|c_{ij}|$ for a matrix $\C=[c_{ij}]$. (Further background on analysis of $\ell_{\infty}$-norm or entry-wise error for matrix multiplication may be found in~\citep{higham2002,drineas06fastmonte1,demmel2007,pagh2013}, among others.)
In the context of sketching, it is crucial to note that $\ve_t$ is a random variable, due to the randomness in $\S$.
Consequently, it is natural to study the quantiles of $\varepsilon_t$, because they specify the \emph{tightest possible bounds} on $\varepsilon_t$ that hold with a prescribed probability.  
More specifically,  for any $\alpha\in(0,1)$, the $(1-\alpha)$-quantile of $\varepsilon_t$ is defined as
\begin{equation}\label{eqn:quantiledef}
\begin{split}
q_{1-\alpha}(t)
\; := \; \inf \big\{q \in[0,\infty) \: \big| \: 
\PB\big(   \ve_t \leq \, q\big) \,\geq\, 1-\alpha\big\}.
\end{split}
\end{equation}
For example, the quantity $q_{0.99} (t)$ is the tightest upper bound on $\ve_t$ that holds with probability at least 0.99.
Hence, for any fixed $\alpha$, the function $q_{1-\alpha}(t)$ represents a precise \emph{tradeoff curve} for relating cost and accuracy. Moreover, the function $q_{1-\alpha}(t)$ is specific to the input matrices $\A$ and $\B$. 

%---------------------------------Figure---------------------------------%
\begin{figure}[!t]
	\begin{center}
		\centering
		\includegraphics[width=0.49\textwidth]{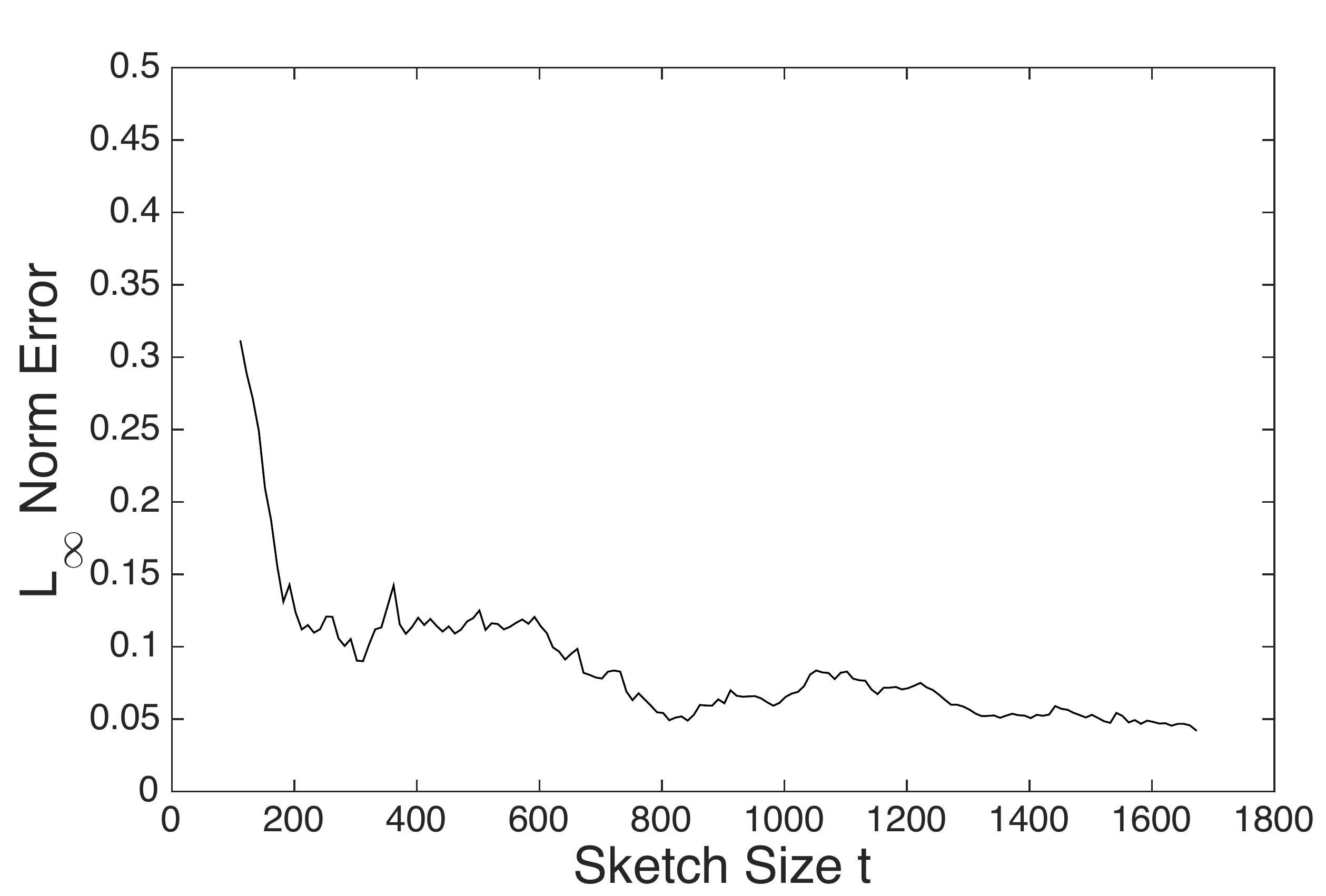}
		\includegraphics[width=0.49\textwidth]{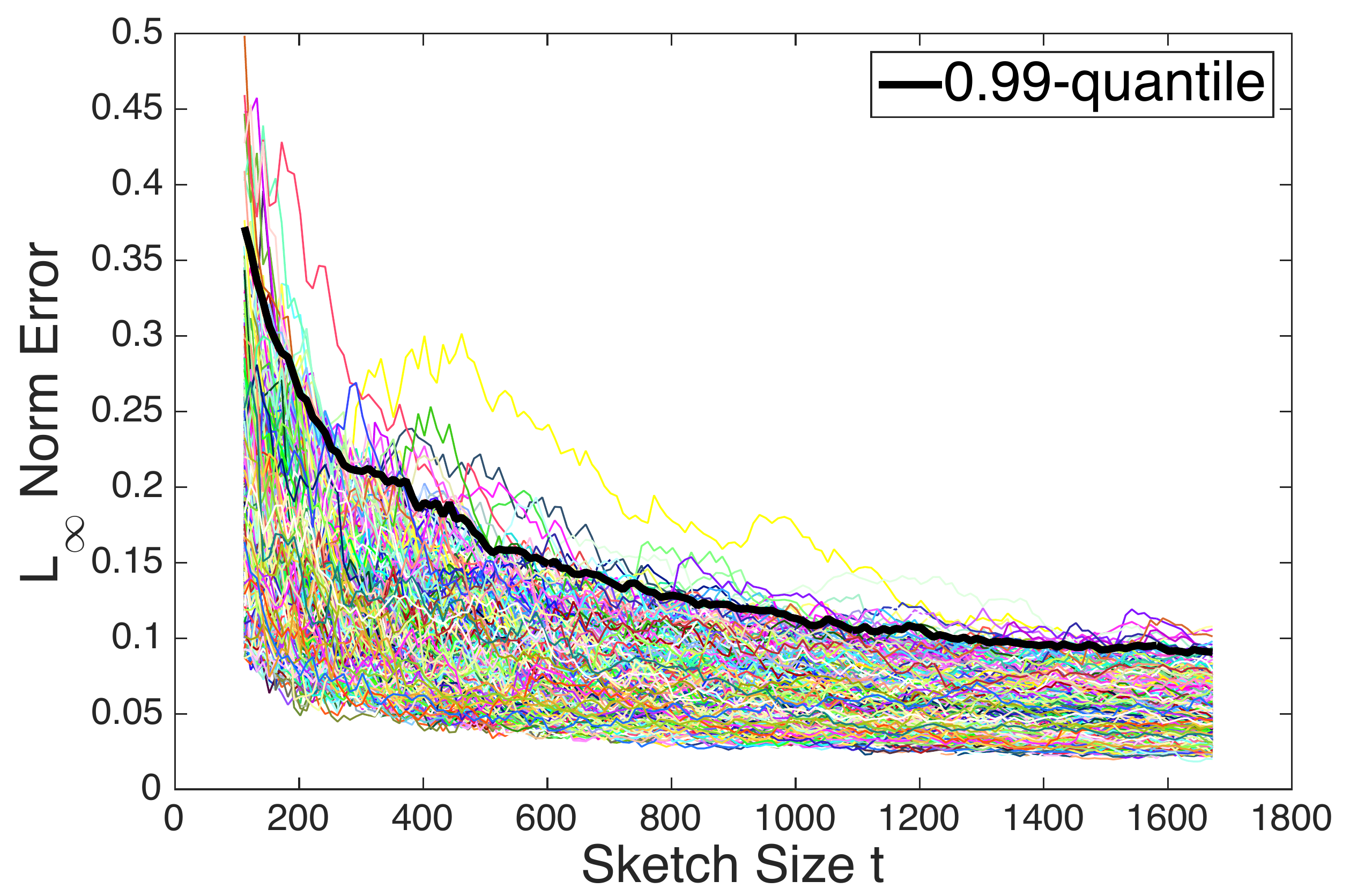}
	\end{center}
	\caption{{\textbf{Left panel:}} The curve shows how $\ve_t$ fluctuates with varying sketch size $t$, as rows are added to $\S$, with $\A$ and $\B$ held fixed. (Each row of $\A \in \RB^{8,124\times 112}$ is a feature vector of the Mushroom dataset \citep{uci2010}, and we set $\B=\A$.) The rows of $\S$ were generated randomly from a Gaussian distribution (see Section~\ref{sec:pre}), and the matrix $\A$ was scaled so that $\|\A^T \A\|_\infty = 1$. {\textbf{Right panel:}} There are 1,000 colored curves, each arising from a repetition of the simulation in the left panel. The thick black curve represents $q_{0.99}(t)$.}
	\label{fig:illustrate_zt}
\end{figure}
%---------------------------------Figure---------------------------------%
%

To clarify the interpretation of $q_{1-\alpha}(t)$, it is helpful to plot the fluctuations of $\ve_t$. In the left panel of Figure~\ref{fig:illustrate_zt}, we illustrate a simulation where randomly generated rows are incrementally added to a sketching matrix $\S$, with $\A$ and $\B$ held fixed. Each time a row is added to $\S$, the sketch size $t$ increases by 1, and we plot the corresponding value of $\ve_t$ as $t$ ranges from 100 to 1,700. (Note that the user is typically unable to observe such a curve in practice.) In the right panel, we display 1,000 repetitions of the simulation, with each colored curve corresponding to one repetition. (The variation is due only to the different draws of $\S$.)
In particular, the function $q_{0.99}(t)$ is represented by the thick black curve, delineating the top 1\% of the colored curves at each value of $t$.

In essence, the right panel of Figure~\ref{fig:illustrate_zt} shows that if the user had knowledge of the (unknown) function $q_{1-\alpha}(t)$, then two important purposes could be served. First, for any fixed value $t$, the user would have a sharp problem-specific bound on $\ve_t$. Second, for any fixed error tolerance $\epsilon$, the user could select $t$ so that that ``just enough'' computation is spent in order to achieve $\ve_t\leq \epsilon$ with probability at least $1-\alpha$.

\paragraph{The estimation problem.} The challenge we face is that a naive computation of $q_{1-\alpha}(t)$  by generating samples of $\ve_t$ would defeat the purpose of sketching. Indeed, generating samples of $\ve_t$ by brute force would require running the sketching method many times, and it would also require computing the entire product $\A^T\B$. Consequently, the technical problem of interest is to develop an efficient way to estimate $q_{1-\alpha}(t)$, without adding much cost to a \emph{single run} of the sketching method.

\subsection{Contributions} 

From a conceptual standpoint, the main novelty of our work is that it bridges two sets of ideas that are ordinarily studied in distinct communities. 
Namely, we apply the statistical technique of bootstrapping to enhance algorithms for numerical linear algebra.
To some extent, this pairing of ideas might seem counterintuitive, since bootstrap methods are sometimes labeled as ``computationally intensive'', but it will turn out that the cost of bootstrapping can be managed in our context.
Another reason our approach is novel is that we use the bootstrap to quantify error in the output of a randomized algorithm, rather than for the usual purpose of quantifying uncertainty arising from data.
In this way, our approach harnesses the versatility of bootstrap methods, and we hope that our results in the ``use case'' of matrix multiplication will encourage broader applications of bootstrap methods in randomized computations. (See also Section~\ref{sec:extend}, and note that in concurrent work, we have pursued similar approaches in the contexts of randomized least-squares and classification algorithms~\citep{LopesICML,Lopes2019}.)

From a technical standpoint, our main contributions are a method for estimating the function $q_{1-\alpha}(t)$, as well as theoretical performance guarantees.
Computationally, the proposed method is efficient in the sense that its cost is comparable to a single run of standard sketching methods (see Section~\ref{sec:pre}).
This efficiency is made possible by an ``extrapolation'' technique, which allows us to bootstrap small ``initial'' sketches with $t_0$ rows, and inexpensively estimate $q_{1-\alpha}(t)$ at larger values $t\gg t_0$. 
The empirical performance of the extrapolation technique is also quite encouraging, as discussed in Section~\ref{sec:exp}.
Lastly, with regard to theoretical analysis, our proofs circumvent some technical restrictions occurring in the analysis of related bootstrap methods in the statistics literature.

\subsection{Related work} 

Several works have considered the problem of error estimation for randomized matrix computations---mostly in the context of low-rank approximation~\citep{woolfe2008,liberty2007,halko2011random}, least squares~\citep{LopesICML}, or matrix multiplication~\citep{blum,sarlos2006}. 
With attention to matrix multiplication, the latter two papers offer methods for estimating high-probability bounds on the error $\eta_t:=\|\tilde{\A}^T\tilde{\B}-\A^T\B\|$, where $\|\cdot \|$ is either the maximum absolute row sum norm, or the Frobenius norm.
At a high level, all of the mentioned papers rely on a common technique, which is to randomly generate a sequence of ``test-vectors'', say $\v_1,\v_2,\dots$, and then use the matrix-vector products $\w_i:=\tilde{\A}^T\tilde{\B}\v_i-\A^T(\B\v_i)$ to derive an estimated bound, say $\hat{\eta}_t$, for $\eta_t$. The origin of this technique may be traced to the classical works~\citep{dixon1983,freivalds1979}.

Our approach differs from the ``test-vector approach'' in some essential ways. One difference arises because the bounds on $\hat{\eta}_t$ are generally constructed from the vectors $\{\w_i\}$ using conservative inequalities. By contrast, our approach avoids this conservativeness by \emph{directly estimating} $q_{1-\alpha}(t)$, which is an optimal bound on $\ve_t$ in the sense of equation~\eqref{eqn:quantiledef}. 

A second difference deals with computational demands. For example, in order to compute the vectors $\{\w_i\}$ in the test-vector approach, it is necessary to access the full matrices $\A$ and $\B$. On the other hand, our method does not encounter this difficulty, because it only requires access to the much smaller sketches $\tilde{\A}$ and $\tilde{\B}$. Also, in the test-vector approach, the cost to compute each vector $\w_i$  is proportional to the large dimension $n$, while the cost to compute  $\hat{q}_{1-\alpha}(t)$ with our method is \emph{independent} of $n$. Finally, the test-vector approach can only be used to check if the product $\tilde{\A}^T\tilde{\B}$ is accurate after it has been computed, whereas our approach can be used to  dynamically ``predict'' an appropriate sketch size $t$ from a small ``initial'' sketching matrix (see Section~\ref{sec:extrap}).

With regard to the statistics literature, our work builds upon a line of research dealing with ``multiplier bootstrap methods'' in  high-dimensional problems~\citep{cckMultiplier, cckSuprema,cckCLT}. Such methods are well-suited to approximating the distributions of statistics such as
$\|\bar{\x}\|_{\infty}$, where $\bar{\x}\in\mathbb{R}^p$ denotes the sample average of $n$ independent mean-zero vectors, with $n\ll p$. More recently, this approach has been substantially extended to other ``max type'' statistics arising from sample covariance matrices~\citep{chang_biometrics,chen_ustat}. 
Nevertheless, the strong results in these works do not readily translate to our context, either because the statistics are substantially different from the $\ell_{\infty}$-norm~\citep{chang_biometrics}, or because of technical assumptions~\citep{chen_ustat}. For instance, if the results in the latter work are applied to a sample covariance matrix of the form \smash{$\ts\frac{1}{n}\sum_{i=1}^n (\x_i-\bar{\x})(\x_i-\bar{\x})\ttop$,} where $\x_1,\dots,\x_n\in\mathbb{R}^p$ are mean-zero i.i.d.~vectors, with \smash{$\x_1=(X_{11},\dots,X_{1p})$,} then it is necessary to make assumptions such as \smash{$\min_{j,k} \var(X_{1j}X_{1k})\geq c$,}  for some constant $c>0$.
 As this relates to the sketching context, note that the sketched product may be written as \smash{$\tilde{\A}^T\tilde{\B}=\ts\frac{1}{t}\sum_{i=1}^t \A^T\s_i\s_i^T\B$,} where $\s_1,\dots,\s_n\in\mathbb{R}^t$ are the rows of $\sqrt{t}\S$. It follows that analogous variance assumptions would lead to conditions on the matrices $\A$ and $\B$ that could be 
violated if any column of $\A$ or $\B$ has many small entries, or is sparse. By contrast, our results do not rely on such variance assumptions, and we allow the matrices $\A$ and $\B$ to be unrestricted. 

At a more technical level, the ability to avoid restrictions on $\A$ and $\B$ comes from our use of the L\'evy-Prohorov metric for distributional approximations --- which differs from the Kolmogorov metric that has been predominantly used in previous works on multiplier bootstrap methods. More specifically, analyses based on the Kolmogorov metric typically rely on ``anti-concentration inequalities''~\citep{cckMultiplier,cckAnti}, which ultimately lead to the mentioned variance assumptions. On the other hand, our approach based on the L\'evy-Prohorov metric does not require the use of anti-concentration inequalities. Finally it should be mentioned that the techniques used to control the LP metric are related to those that have been developed for bootstrap approximations via coupling inequalities as in~\cite{cckIncreasing}.

\paragraph{Outline.}
This paper is organized as follows.
Section~\ref{sec:pre}  introduces some technical background.
Section~\ref{sec:alg} describes the proposed bootstrap algorithm.
Section~\ref{sec:theory} establishes the main theoretical results, and then numerical performance is illustrated in Section~\ref{sec:exp}. Lastly, conclusions and extensions of the method are presented in Section~\ref{sec:extend}, and all
proofs are given in the appendices.

\section{Preliminaries} \label{sec:pre}

\paragraph{Notation and terminology.} 
The set $\{1, \dots , n\}$ is denoted as $[n]$. The $i$th standard basis vector is denoted as $\e_i$.
If $\C=[c_{ij}]$ is a real matrix, then $\|\C\|_F = (\sum_{i,j} c_{ij}^2)^{1/2}$ is the Frobenius norm,
and $\|\C\|_2$ is the spectral norm (maximum singular value). If $X$ is a random variable and $p\geq1$, we write $\|X\|_p=(\EB[|X|^p])^{1/p}$ for the usual $L_p$ norm.
If $\psi:[0,\infty)\to[0,\infty)$ is a non-decreasing convex function with $\psi(0)=0$, then the $\psi$-Orlicz norm of $X$ is defined  as
\smash{$\|X\|_{\psi}:=\inf\{r>0 \ |\: \ \EB [\psi(|X|/r )]\leq 1\}$.}
In particular, we define $\psi_p(x):= \exp (x^p) -1$ for $p\geq 1$, and we say that $X$ is sub-Gaussian when $\|X\|_{\psi_2}<\infty$, or sub-exponential when $\|X\|_{\psi_1}<\infty$.
In Appendix~\ref{app:technical}, Lemma~\ref{lem:orlicz} summarizes the facts about Orlicz norms that will be used.

We will use $c$ to denote a positive absolute constant that may change from line to line. The matrices $\A$, $\B$, and $\S$ are viewed as lying in a sequence of matrices indexed by the tuple $(d,d',t,n)$.
For a pair of generic functions $f$ and $g$, we write $f(d,d',t,n)\lesssim g(d,d', t,n)$ when there is a positive absolute constant $c$ so that $f(d,d', t,n)\leq c\, g(d,d',t,n)$ holds for all large values of $d,d',t,$ and $n$. Furthermore, if $a$ and $b$ are two quantities that satisfy both $a\lesssim b$ and $b\lesssim a$, then we write $a\asymp b$. Lastly, we do not use the symbols $\lesssim$ or $\asymp$ when relating random variables.

%%%%%%%%%%%%%%%%%%%%%%%%%%%%%%%%%%%%%%%%%%%%%%%%%%%

%%%%%%%%%%%%%%%%%%%%%%%%%%%%%%%%%%%%%%%%%%%%%%%%%%%%%%%%%%%%%%%%%%%%%%%%%%%%%%
%%%%%%%%%%%%%%%%%%%%%%%%%%%%%%%%%%%%%%%%%%%%%%%%%%%%%%%%%%%%%%%%%%%%%%%%%%%%%%

\paragraph{Examples of sketching matrices.}
Our theoretical results will deal with three common types of sketching matrices, reviewed below.
\begin{itemize}
\item \emph{Row sampling}. If $(p_1,\dots,p_n)$ is a probability vector, then $\S\in\RB^{t\times n}$ can be constructed by sampling its rows i.i.d.~from the set $\{\ts\frac{1}{\sqrt{t p_1}}\e_1,\dots,\ts\frac{1}{\sqrt{t p_n}}\e_n\}\subset\RB^n$, where the vector $\ts\frac{1}{\sqrt{tp_i}}\e_i$ is selected with probability $p_i$. Some of the most well known choices for the sampling probabilities include \emph{uniform sampling}, with $p_i\equiv1/n$, \emph{length sampling} \citep{drineas06fastmonte1,magen2011low}, with
\begin{equation}\label{eqn:lengthsampling}
 p_i=\frac{\|\e_i^T\A\|_2 \|\e_i^T\B\|_2}{\sum_{j=1}^n \|\e_j^T\A\|_2\|\e_j^T\B\|_2},
\end{equation}
and \emph{leverage score sampling}, for which further background may be found in the papers \citep{drineas2006sampling,drineas2008cur,DMMW12_JMLR}.

\item \emph{Sub-Gaussian projection}. 
Gaussian projection is the most well-known random projection method, and is sometimes referred to as the Johnson-Lindenstrauss (JL) transform \citep{johnson1984extensions}.
In detail, if $\G \in \RB^{t\times n}$ is a standard Gaussian matrix, with entries that are i.i.d.~samples from $\NM (0, 1)$,
then $\S = \frac{1}{\sqrt{t}} \G$ is a Gaussian projection matrix. More generally, the entries of $\G$ can be drawn i.i.d.~from a zero-mean sub-Gaussian distribution, which often leads to similar performance characteristics in RNLA applications.

\item \emph{Subsampled randomized Hadamard transform (SRHT)}. %
Let $n$ be a power of $2$, and define the Walsh-Hadamard matrix $\H_n$ recursively\footnote{The restriction that $n$ is a power of 2 can be relaxed with variants of SRHT matrices~\citep{blendenpik,boutsidis2013}.}
$$\H_n:=\left(\begin{array}{cc} \H_{n/2} & \H_{n/2}\\ \H_{n/2} & -\H_{n/2}\end{array}\right) \text{ \ \ with \ \  } \H_2:=\left(\begin{array}{cc} 1 & 1\\ 1 & -1\end{array}\right).$$
Next,
let $\D_n^{\circ} \in \RB^{n\times n}$ be random diagonal matrix with independent $\pm 1$ Rademacher variables along the diagonal, 
and let $\PP \in \RB^{t\times n}$ have rows uniformly sampled from $\{\ts\frac{1}{\sqrt{t/n}}\e_1,\dots,\ts\frac{1}{\sqrt{t/n}}\e_n\}$. 
Then, the $t\times n$ matrix
\begin{equation}\label{eqn:srhtdef}
\S = \PP (\ts\frac{1}{\sqrt{n}}\H_n) \D_n^{\circ}
\end{equation}
is called an SRHT matrix. This type of sketching matrix was introduced in the seminal paper~\citep{ailon2006}, and additional details regarding implementation may be found in the papers~\citep{drineas2011faster,wang2015practical}. (The factor $\ts\frac{1}{\sqrt{n}}$ is used so that $\ts\frac{1}{\sqrt{n}}\H_n$ is an orthogonal matrix.) 
An important property of SRHT matrices is that they can be multiplied with any $n\times d$ matrix in $\OM (n \cdot d\cdot \log t)$ time~\citep{ailon2009fast}, which is faster than the $\mathcal{O}(n\cdot d\cdot t)$ time usually required for a dense sketching matrix.
\end{itemize}

%%%%%%%%%%%%%%%%%%%%%%%%%%%%%%%%%%%%%%%%%%%%%%%%%%%
\vspace{2mm}
\section{Methodology} \label{sec:alg}

Before presenting our method in algorithmic form, we first explain the underlying intuition.

\vspace{1mm}
\subsection{Intuition for multiplier bootstrap method}
\label{sec:intuition}

If the row vectors of $\sqrt{t}\S$ are denoted  $\s_1, \dots , \s_t \in \RB^n$, then $\S^T\S$ may be conveniently expressed as a sample average
 \begin{equation}
 \S^T \S = \ts\frac{1}{t} \sum_{i=1}^t \s_i \s_i^T .
 \end{equation}
For row sampling, Gaussian projection, and SRHT, these row vectors satisfy $\EB [\s_i \s_i^T] = \I_n$. 
Consequently, if we define the random $d\times d'$ rank-1 (dyad) matrix
\begin{equation} \label{eq:theory:m}
\D_i \; = \; \A^T \s_i \s_i^T \B,
\end{equation}
then $\EB [\D_i ]= \A^T \B$, and it follows that the difference between the sketched and unsketched products can be viewed as a sample average of zero-mean random matrices
\begin{equation}\label{eqn:cltrep}
\A^T \S^T \S \B -\A^T\B\; = \; \ts\frac{1}{t} \sum_{i=1}^t (\D_i-\A^T\B).
%\qquad \textrm{and} \qquad
%\A^T \S^T \S \B- \A^T \B \; = \; \frac{1}{t} \sum_{i=1}^t \M_i .
\end{equation}
Furthermore, in the cases of length sampling and Gaussian projection, the matrices $\D_1,\dots,\D_t$ are independent, and in the case of SRHT sketches, these matrices are ``nearly'' independent. So, in light of the central limit theorem, it is natural to suspect that the random matrix~\eqref{eqn:cltrep} will be well-approximated (in distribution) by a matrix with Gaussian entries. In particular, if we examine the $(j_1,j_2)$ entry, then we may expect that $\e_{j_1}^T\big(\A^T \S^T \S \B -\A^T\B\big)\e_{j_2}$ will approximately follow the distribution $\NM(0,\ts\frac{1}{t}\sigma_{j_1,j_2}^2)$, where the unknown parameter $\sigma_{j_1,j_2}^2$ can be estimated with 
$$\hat{\sigma}^2_{j_1,j_2} \: := \: \ts\frac{1}{t} \sum_{i=1}^t \big(\e_{j_1}^T(\D_i-\A^T\S^T\S\B)\e_{j_2}\big)^2.$$
Based on these considerations, the idea of the proposed bootstrap method is to generate a random matrix whose $(j_1,j_2)$ entry is sampled from $\NM(0,\ts\frac{1}{t}\hat{\sigma}_{j_1,j_2}^2)$. It turns out that an efficient way of generating such a matrix is to sample i.i.d.~random variables \smash{$\xi_1,\dots,\xi_t\sim \NM(0,1)$,} independent of $\S$, and then compute
\begin{equation}\label{eqn:bootstrapmimic}
\ts\frac{1}{t}\sum_{i=1}^t\xi_i \big(  \D_i - \A^T \S^T \S \B \big).
\end{equation}
In other words, if $\S$ is conditioned upon, then the distribution of the $(j_1,j_2)$ entry of the above matrix is exactly $\NM(0,\ts\frac{1}{t}\hat{\sigma}_{j_1,j_2}^2)$.\footnote{It is also possible to show that the  \emph{joint} distribution of the entries in the matrix~\eqref{eqn:bootstrapmimic} mimics that of $\A^T\S^T\S\B-\A^T\B$, but we omit such details to simplify the discussion.}
Hence, if the matrix~\eqref{eqn:bootstrapmimic} is viewed as an ``approximate sample'' of $\A^T\S^T\S \B-\A^T\B$, then it is natural to use the $\ell_{\infty}$-norm of the matrix~\eqref{eqn:bootstrapmimic} as an approximate sample of $\ve_t=\|\A^T\S^T\S \B-\A^T\B\|_{\infty}$. 
Likewise, if we define the bootstrap sample
\begin{eqnarray} \label{eq:theory:zt2}
\ve_t^\star 
& := &  
\Big\| \ts\frac{1}{t}\sum_{i=1}^t\xi_i \Big(  \D_i - \A^T \S^T \S \B \Big) \Big\|_\infty,
\end{eqnarray}
then the bootstrap algorithm will generate i.i.d.~samples of $\ve_t^{\star}$, conditionally on $\S$. In turn, the $(1-\alpha)$-quantile of the bootstrap samples, say $\hat{q}_{1-\alpha}(t)$, can be used to estimate $q_{1-\alpha}(t)$.

\vspace{2mm}
\subsection{Multiplier bootstrap algorithm}

We now explain how proposed method can be implemented in just a few lines. This description also reveals the important fact that  the algorithm only requires access to the sketches $\tilde{\A}$ and $\tilde{\B}$ (rather than the full matrices $\A$ and $\B$).
Although the formula for generating samples of $\ve_t^{\star}$ given below may appear different from equation~\eqref{eq:theory:zt2}, it is straightforward to check that these are equivalent. Lastly, the choice of the number of bootstrap samples $B$ will be discussed at the end of subsection~\ref{sec:extrap}.

\newpage

\begin{mdframed}
	\begin{alg} \label{alg:bootstrap1}
		{\bf (Multiplier bootstrap for $\ve_t$).}\\[0.0cm]
		{\bf Input:} the number of bootstrap samples $B$, and the sketches $\tilde{\A}$ and $\tilde{\B}$.\\[0.0cm]
		{\bf For } $b=1,\dots,B$\; {\bf do} 
		\vspace{-0.2cm}
		\begin{enumerate}
			\item 
			Draw an i.i.d.~sample $\xi_1,\dots,\xi_t$ from $\NM (0,1)$, independent of $\S$;
			\item
			Compute the bootstrap sample $\ve^{\star}_{t,b}:=\big\|\bar{\xi}\cdot(\tilde{\A}^T \tilde{\B}) - \tilde{\A}^T \Xii \tilde{\B}  \big\|_{\infty}$, where $\bar{\xi}:=\ts\frac{1}{t}\sum_{i=1}^t \xi_i$ and $\Xii:=\diag(\xi_1,\dots,\xi_t)$.
		\end{enumerate}
		\vspace{-0.1cm}
		{\bf Return:} 
		$\hat{q}_{1-\alpha}(t) \longleftarrow$
		the $(1-\alpha)$-quantile of the values $\ve^{\star}_{t,1},\dots,\ve^{\star}_{t,B}$. 
	\end{alg}
\end{mdframed}

\normalsize
\subsection{Saving on computation with extrapolation}\label{sec:extrap}

In its basic form, the cost of Algorithm~\ref{alg:bootstrap1} is $\OM (B\cdot t\cdot d\cdot d')$, which has the favorable property of being independent of the large dimension $n$. Also, the computation of the samples $\ve^{\star}_{t,1},\dots,\ve_{t,B}^{\star}$ is embarrassingly parallel, with the cost of each sample being $\OM(t\cdot d\cdot d')$.  Moreover, due to the way that the quantile $q_{1-\alpha}(t)$ scales with $t$, it is possible to reduce the cost of Algorithm~\ref{alg:bootstrap1} even further --- via the technique of extrapolation (also called Richardson extrapolation)~\citep{sidi,brezinski}.

The essential idea of extrapolation is to carry out Algorithm~\ref{alg:bootstrap1} for a modest ``initial'' sketch size $t_0$,  and then use an initial estimate $\hat{q}_{1-\alpha}(t_0)$ to ``look ahead'' and predict a larger value $t$ for which $q_{1-\alpha}(t)$ is small enough to satisfy the user's desired level of accuracy. The immediate benefit of this approach is that Algorithm~\ref{alg:bootstrap1} only needs to applied to small ``initial versions'' of $\tilde{\A}$ and $\tilde{\B}$, each with $t_0$ rows, which reduces the cost of the algorithm to $\OM (B\cdot  t_0\cdot  d\cdot d')$. Furthermore, this means that if Algorithm~\ref{alg:bootstrap1} is run in parallel, then it is only necessary to communicate copies of the small initial sketching matrices. (To illustrate the small size of the initial sketching matrices, our experiments include several examples where the ratio $t_0/n$ is approximately 1/100 or less.)

From a theoretical viewpoint, our use of extrapolation is based on the approximation $q_{1-\alpha}(t)\approx \ts\frac{\kappa}{\sqrt{t}}$, where $t$ is sufficiently large, and $\kappa=\kappa(\A,\B,\alpha)$ is an unknown number. A formal justification for this approximation can be made using Proposition~\ref{prop:gaussian} in Appendix~\ref{app:approx}, but it is simpler to give an intuitive explanation here. 
Recall from Section~\ref{sec:intuition} that as $t$ becomes large, the $(j_1,j_2)$ entry $[\tilde{\A}^T\tilde{\B}-\A^T\B]_{j_1,j_2}$ should be well-approximated in distribution by a Gaussian random variable of the form $\ts\frac{1}{\sqrt{t}}G_{j_1,j_2}$. In turn, this suggests that $\ve_t$ should be well-approximated in distribution by $\ts\frac{1}{\sqrt{t}}\max_{j_1,j_2}|G_{j_1,j_2}|$, which has quantiles that are proportional to $\ts\frac{1}{\sqrt{t}}$.

In order to take advantage of the theoretical scaling $q_{1-\alpha}(t)\approx \ts\frac{\kappa}{\sqrt{t}}$, we may use Algorithm~\ref{alg:bootstrap1} to compute $\hat{q}_{1-\alpha}(t_0)$ with an initial sketch size $t_0$, and then approximate the value $q_{1-\alpha}(t)$ for  $t\gg t_0$ with the following extrapolated estimator
\begin{equation}\label{extrapest}
 \hat{q}_{1-\alpha}^{\text{ ext}}(t):=\ts\frac{\sqrt{t_0}}{\sqrt{t}} \hat{q}_{1-\alpha}(t_0).
\end{equation}
Hence, if the user would like to determine a sketch size $t$ so that $q_{1-\alpha}(t)\leq \epsilon$, for some tolerance $\epsilon$, then $t$ should be selected so that $\hat{q}_{1-\alpha}^{\text{ ext}}(t)\leq \epsilon$, which is equivalent to
\begin{equation}\label{extraprule}
t\geq \Big(\ts\frac{\sqrt{t_0}}{\epsilon}\, \hat{q}_{1-\alpha}^{}(t_0)\Big)^2.
\end{equation}
 In our experiments in Section~\ref{sec:exp}, we illustrate some examples where an accurate estimate of $q_{1-\alpha}(t)$ at $t=10,\!000$ can be obtained from the rule~\eqref{extraprule} using an initial sketch size $t_0\approx 500$, yielding a roughly 20-fold speedup on the basic version of Algorithm~\ref{alg:bootstrap1}.
 
 \paragraph{Comparison with the cost of sketching.} Given that the purpose of Algorithm~\ref{alg:bootstrap1} is to enhance sketching methods, it is important to understand how the added cost of the bootstrap compares to the cost of running sketching methods in the standard way. As a point of reference, we compare with the cost of computing $\A^T\S^T\S\B$ when $\S$ is chosen to be an SRHT matrix, since this is one of the most efficient sketching methods. If we temporarily assume for simplicity that $\A$ and $\B$ are both of size $n\times d$,  then it follows from Section~\ref{sec:pre} that computing $\A^T\S^T\S\B$ has a  cost of order  $\mathcal{O}(t\cdot d^2+ n\cdot d\cdot \log(t))$. Meanwhile, the cost of running Algorithm~\ref{alg:bootstrap1} with the extrapolation speedup based on an initial sketch size $t_0$ is $\mathcal{O}(B\cdot t_0\cdot d^2)$. Consequently, the extra cost of the bootstrap does not exceed the stated cost of sketching when the number of bootstrap samples satisfies
\begin{equation}\label{bootstrapcost}
B=\mathcal{O}(\ts\frac{t}{t_0}+\ts\frac{n\log(t)}{d\, t_0}),
\end{equation}
 and in fact, this could be improved further if parallelization of Algorithm~\ref{alg:bootstrap1} is taken into account.  It is also important to note that rather small values of $B$ are shown to work well in our experiments, such as $B=20$. Hence, as long $t_0$ remains fairly small compared to $t$, then the condition~\eqref{bootstrapcost} may be expected to hold, and this is borne out in our experiments. The same reasoning also applies when $n \log(t)\gg d\cdot t_0$, which conforms with the fact that sketching methods are intended to handle situations where $n$ is very large.

 \subsection{Relation with the non-parametric bootstrap}
 For readers who are more familiar with the ``non-parametric bootstrap'' (based on sampling with replacement), the purpose of this short subsection is to explain the relationship with the multiplier bootstrap in Algorithm~\ref{alg:bootstrap1}. Indeed, an understanding of this relationship may be helpful, since the non-parametric bootstrap might be viewed as more intuitive, and perhaps easier to generalize to more complex situations. However, it turns out that Algorithm~\ref{alg:bootstrap1} is technically more convenient to analyze, and that is why the paper focuses primarily on~Algorithm~\ref{alg:bootstrap1}. Meanwhile, from a practical point of view, there is little difference between the two approaches, since both have the same order of computational cost, and in our experience, we have observed essentially the same performance in simulations. Also, the extrapolation technique can be applied to both algorithms in the same way.
 
 To spell out the connection, the only place where Algorithm~\ref{alg:bootstrap1} needs to be changed is in step 1. Rather than choosing the multiplier variables $\xi_1,\dots,\xi_t$ to be i.i.d.~$\NM(0,1)$ as in Algorithm~\ref{alg:bootstrap1}, the non-parametric bootstrap chooses $\xi_i=\zeta_i-1$, where  $(\zeta_1,\dots,\zeta_t)$ is a sample from a multinomial distribution, based on tossing $t$ balls into $t$ equally likely bins, where $\zeta_i$ is the number of balls in bin $i$. Hence, the mean and variance of each $\xi_i$ are nearly the same as before, with $\mathbb{E}[\xi_i]=0$ and $\var(\xi_i)=1-1/t$, but the variables $\xi_1,\dots,\xi_t$ are no longer independent.
 
 From a more algorithmic viewpoint, it is simple to check that the choice of $\xi_1,\dots,\xi_t$ based on the multinomial distribution is equivalent to  sampling with replacement from the rows of $\tilde{\A}$ and $\tilde{\B}$. The underlying intuition for this approach is based on the fact that for many types of sketching matrices, the rows of $\S$ are i.i.d., which makes  the rows of $\tilde{\A}$ i.i.d.,  and likewise for $\tilde{\B}$. Hence, if $\S$ is conditioned upon, then sampling with replacement from the rows of $\tilde{\A}$ and $\tilde{\B}$ imitates the random mechanism that originally generated $\tilde{\A}$ and $\tilde{\B}$.

\begin{mdframed}
	\begin{alg} \label{alg:bootstrap2}
		{\bf (Non-parametric bootstrap for $\ve_t$).}\\[0.2cm]
		{\bf Input}: the number of samples $B$, and the sketches $\tilde{\A}$ and $\tilde{\B}$.\\[0.2cm]
		{\bf For } $b=1,\dots,B$\; {\bf do} 
		\begin{enumerate}
			\item Draw a vector $(i_1,\dots,i_t)$ by sampling  $t$ numbers with replacement from $\{1,\dots,t\}$.
			\item Form matrices $\tilde{\A}^*\in\RB^{t\times d}$ and $\tilde{\B}^*\in\RB^{t\times d'}$ by  selecting (respectively) the rows from $\tilde{\A}$ and $\tilde{\B}$  that are indexed by $(i_1,\dots,i_t)$.
			\item
			Compute the bootstrap sample $\ve^{*}_{t,b}:=\big\|(\tilde{\A}^*)^{T} (\tilde{\B}^*) - \tilde{\A}^T \tilde{\B}  \big\|_{\infty}$.
		\end{enumerate}
		{\bf Return:} 
		$\hat{q}_{1-\alpha}(t) \longleftarrow$
		the $(1-\alpha)$-quantile of the values $\ve^{*}_{t,1},\dots,\ve^{*}_{t,B}$. 
	\end{alg}
\end{mdframed}
\normalsize

%%%%%%%%%%%%%%%%%%%%%%%%%%%%%%%%%%%%%%%%%%%%%%%%%%%%%%%%%%%%%%%%%%%%%%%%%%%%%%
%%%%%%%%%%%%%%%%%%%%%%%%%%%%%%%%%%%%%%%%%%%%%%%%%%%%%%%%%%%%%%%%%%%%%%%%%%%%%%
%%%%%%%%%%%%%%%%%%%%%%%%%%%%%%%%%%%%%%%%%%%%%%%%%%%%%%%%%%%%%%%%%%%%%%%%%%%%%%
%%%%%%%%%%%%%%%%%%%%%%%%%%%%%%%%%%%%%%%%%%%%%%%%%%%%%%%%%%%%%%%%%%%%%%%%%%%%%%

\section{Main results} \label{sec:theory}
Our main results quantify how well the estimate $\hat{q}_{1-\alpha}(t)$ from Algorithm 1 approximates the true value $q_{1-\alpha}(t)$, and this will be done by analyzing how well the distribution of a bootstrap sample $\ve_{t,1}^{\star}$ approximates the distribution of $\ve_t$.  For the purposes of comparing distributions, we will use the L\'evy-Prohorov metric, defined below.

%%%%%%%%%%%%%%%%%%%%%%%%%%%%%%%%%%%%%%%%%%%%%%%%%%%%%%%%%%%%%%%%%%%%%%%%%%%%%%
%%%%%%%%%%%%%%%%%%%%%%%%%%%%%%%%%%%%%%%%%%%%%%%%%%%%%%%%%%%%%%%%%%%%%%%%%%%%%%

\paragraph{L\'evy-Prohorov (LP) metric.} \label{sec:pre:metric}
Let $\mathcal{L}(U)$ denote the distribution of a random variable $U$, and
let $\mathscr{B}$ denote the collection of Borel subsets of $\RB$.  For any $\AM \in\mathscr{B}$, and $\delta>0$, define  the $\delta$-neighborhood
%\begin{eqnarray*}
$\AM^{\delta}
\; := \; 
\big\{x\in\RB \: \big|\: \inf_{y\in A}|x-y|\leq \delta \big\}$.
%\end{eqnarray*}
Then, for any two random variables $U$ and $V$, the $d_{\text{LP}}$ metric between their distributions is given by
\begin{equation*}\label{prohorovdef}
d_{\text{LP}}(\mathcal{L}(U),\mathcal{L}(V))
\; := \;
\inf\Big\{\delta>0 \: \Big| \: \PB(U\in \AM) \leq \PB(V\in \AM^{\delta})+\delta \; \text{ for all } \;
\AM \in\mathscr{B}\Big\}.
\end{equation*}
The $d_{\text{LP}}$~metric is a standard tool for comparing distributions, due to the fact that convergence with respect to $d_{\text{LP}}$ is equivalent to convergence in distribution~\citep[Theorem 2.9]{huberbook}.

\paragraph{Approximating quantiles.} 
An important property of the $d_{\text{LP}}$~metric is that if two distributions are close in this metric, then their quantiles are close in the following sense. Recall that if $F_U$ is the distribution function of a random variable $U$, then the $(1-\alpha)$-quantile of $U$ is the same as the generalized inverse  \smash{$F_U^{-1}(1-\alpha):=\inf\{q \in [0,\infty)\, | \, F_U(q)\geq 1-\alpha\}$.} Next, suppose that two random variables $U$ and $V$ satisfy
\begin{equation*}
d_{\text{LP}} \big(\mathcal{L}(U), \, \mathcal{L}(V) \big)
\; \leq \; \epsilon,
\end{equation*}
for some $\epsilon\in(0,\alpha)$ with $\alpha\in (0,1/2)$.
Then, the quantiles of $U$ and $V$  are close in the sense that
\begin{equation}\label{eqn:quantileineqs}
\big|F_{U}^{-1}(1-\alpha)-F_V^{-1}(1-\alpha)\big| \ \leq \ \psi_{\alpha}(\epsilon),
\end{equation}
where the function $\psi_{\alpha}(\epsilon):=F_U^{-1}(1-\alpha+\epsilon)-F_U^{-1}(1-\alpha-\epsilon)+\epsilon$ is strictly monotone, and satisfies $\psi_{\alpha}(0)=0$.
(For a proof, see Lemma~\ref{LPquantile} of Appendix~\ref{app:technical}.)
In light of this fact, it will be more convenient to express our results for approximating $q_{1-\alpha}(t)$ in terms of the $d_{\text{LP}}$ metric.

\subsection{Statements of results} \label{sec:theory:theorem}

Our main assumption involves three separate cases, corresponding to different choices of the sketching matrix $\S$.

\begin{assumption}\label{assumptions} The dimensions $d$ and $d'$ satisfy $d\asymp d'$. Also, there is a positive absolute constant $\kappa\geq 1$ such that $d^{1/\kappa}\lesssim t\lesssim  d^{\kappa}$, which is to say that neither $d$ nor $t$ grows exponentially with the other. In addition, one of the following sets of conditions holds, involving the parameter $\nu(\A,\B):=\sqrt{\|\A^T\A\|_{\infty}\|\B^T\B\|_{\infty}}$.\\[-.5cm]

\begin{enumerate}[(a)]
\item \noindent  (Sub-Gaussian case).
The entries of the matrix $\S=[S_{i,j}]$ are zero-mean i.i.d.~sub-Gaussian random variables, with $\EB[S_{i,j}^2]=\frac{1}{t}$, and $ \max_{i,j}\|\sqrt{t}S_{i,j}\|_{\psi_2}\lesssim 1$. Furthermore,  $t \; \gtrsim \; \nu(\A,\B)^{2/3} (\log d)^5 $. \\

\item \noindent (Length sampling case).
The matrix $\S$ is generated by length sampling, with the probabilities in equation~\eqref{eqn:lengthsampling}, and also, $t\; \gtrsim \; (\|\A\|_F\|\B\|_F)^{2/3} (\log d)^5$.\\

\item \noindent (SRHT case). The matrix $\S$ is an SRHT matrix as defined in equation~\eqref{eqn:srhtdef}, and also, $t\gtrsim \nu(\A,\B)^{2/3} (\log n)^2 (\log d)^5$.\\
\end{enumerate}
\end{assumption}

\paragraph{Clarifications on bootstrap approximation.} Before stating our main results below, it is worth clarifying a few technical items. First, since our analysis involves central limit type approximations of $\tilde{\A}^T\tilde{\B}-\A^T\B$ as a sum of $t$ independent matrices, we will rescale the error variables by a factor of $\sqrt{t}$, obtaining
\begin{equation}\label{ztdef}
Z _t:=\sqrt{t}\ve_t,
\end{equation}
as well as its bootstrap analogue,
\begin{equation}\label{ztstardef}
Z_t^{\star}:=\sqrt{t}\ve_t^{\star}.
\end{equation}
With regard to the original problem of estimating the quantile  $q_{1-\alpha}(t)$ for $\ve_t$, this rescaling makes no essential difference, since quantiles are homogenous with respect to scaling, and in particular, the $(1-\alpha)$-quantile of $Z_t$ is simply $\sqrt{t}q_{1-\alpha}(t)$. 

As a second clarification, recall that the bootstrap method generates samples $\ve_t^{\star}$ based upon a particular realization of $\S$.
For this reason, the bootstrap approximation to $\mathcal{L}(Z_t)$ is the conditional distribution $\mathcal{L}(Z_t^{\star}|\S)$. Consequently, it should be noted that $\mathcal{L}(Z_t^{\star}|\S)$ is a random probability measure, and $d_{\text{LP}}(\mathcal{L}(Z_t)\,,\: \mathcal{L}(Z_t^{\star}|\S))$ is a random variable, since they both depend on the random matrix $\S$.

\begin{theorem} \label{thm:main}
Let $h(x)=x^{1/2}+x^{3/4}$ for $x\geq 0$. If Assumption~\ref{assumptions} (a) holds, then there is an absolute constant $c>0$ such that the following bound holds with probability at least $1-\ts\frac{1}{t}-\ts\frac{1}{dd'}$, 
	\begin{eqnarray*}
		d_{\text{\emph{LP}}}
	\Big(\mathcal{L}(Z_t)\,,\: \mathcal{L}(Z_t^{\star}|\S)\Big)
	& \leq &  \frac{c \cdot  h(\nu(\A,\B))\cdot \sqrt{\log(d)} }{t^{1/8}}.
	\end{eqnarray*}
If Assumption~\ref{assumptions} (b) holds, then there is an absolute constant $c>0$ such that the following bound holds with probability at least $1-\ts\frac{1}{t}-\ts\frac{1}{dd'}$, 
		\begin{eqnarray*}
		d_{\text{\emph{LP}}}
	\Big(\mathcal{L}(Z_t)\,,\: \mathcal{L}(Z_t^{\star}|\S)\Big)
	& \leq &  \frac{c \cdot  h(\|\A\|_F\|\B\|_F)\cdot \sqrt{\log(d)} }{t^{1/8}}.
	\end{eqnarray*}
\end{theorem}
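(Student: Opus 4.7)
My plan is to compare both $\mathcal{L}(Z_t)$ and $\mathcal{L}(Z_t^{\star}\mid\S)$ to a single Gaussian surrogate and then invoke the triangle inequality for $d_{\text{LP}}$. Vectorize the $d\times d'$ matrix so that, writing $p:=dd'$, one has
$Z_t=\|\tfrac{1}{\sqrt{t}}\sum_{i=1}^t \y_i\|_\infty$ with $\y_i:=\mathrm{vec}(\D_i-\A^T\B)\in\mathbb{R}^p$ i.i.d., mean zero (this is where case~(a) or~(b) enters; the SRHT case is postponed to part~(c)). Let $\Si:=\mathrm{cov}(\y_1)$ and $\G\sim\NM(0,\Si)$, so that the target surrogate is $\|\G\|_\infty$. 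On the bootstrap side, conditional on $\S$ the multipliers make $Z_t^\star\mid\S$ \emph{exactly} distributed as $\|\hat\G\|_\infty$ where $\hat\G\sim\NM(0,\hat\Si)$ and $\hat\Si$ is the sample covariance of the centered dyads $\y_i^{\mathsf{c}}:=\mathrm{vec}(\D_i-\A^T\S^T\S\B)$.

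First I would establish a high-dimensional Gaussian approximation step
\[
d_{\text{LP}}\bigl(\mathcal{L}(Z_t),\,\mathcal{L}(\|\G\|_\infty)\bigr)\ \leq\ \text{(GA term)},
\]
by appealing to Proposition~\ref{prop:gaussian} from Appendix~\ref{app:approx}, which the excerpt flags as the formal justification for the Gaussian behavior of $\ve_t$. The key ingredient is an Orlicz-norm control of each coordinate $\e_{j_1}^T\A^T\s_i\s_i^T\B\e_{j_2}$: as a product of two sub-Gaussian linear forms in $\s_i$, it has $\psi_1$-norm of order $\|\A\e_{j_1}\|_2\|\B\e_{j_2}\|_2\leq\nu(\A,\B)$ in case~(a), and similarly $\|\A\|_F\|\B\|_F$ in case~(b) after absorbing the sampling weights. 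Plugging these scales into the CCK-style Gaussian approximation for maxima of sub-exponential sums (with the LP metric in place of Kolmogorov, as emphasized in the introduction) is what produces the $t^{-1/8}\sqrt{\log d}$ shape and the $\nu^{1/2}$ piece of $h$.

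Next I would handle the Gaussian comparison $\|\G\|_\infty\leftrightarrow\|\hat\G\|_\infty$. Here I would use a Pisier/CCK-type inequality saying that
\[
d_{\text{LP}}\bigl(\mathcal{L}(\|\G\|_\infty),\,\mathcal{L}(\|\hat\G\|_\infty)\bigr)\ \lesssim\ \bigl(\|\hat\Si-\Si\|_{\max}\log p\bigr)^{1/3}\ \text{or}\ ^{1/4},
\]
valid without anti-concentration (one uses smoothing of the max by a soft-max and a direct interpolation of Gaussian densities, which is exactly the coupling route alluded to in the discussion of \cite{cckIncreasing}). Then I would control $\|\hat\Si-\Si\|_{\max}$ entrywise by a Bernstein bound: each entry of $\hat\Si$ is an average of $t$ i.i.d.\ fourth-order products in $\s_i$, which are sub-exponential at scale $\nu(\A,\B)^2$; a union bound over $p^2=(dd')^2$ entries yields $\|\hat\Si-\Si\|_{\max}\lesssim\nu(\A,\B)^2\sqrt{(\log d)/t}$ with probability $1-1/(dd')$. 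A separate, routine step absorbs the discrepancy between the recentering at $\A^T\S^T\S\B$ versus $\A^T\B$; since $\|\A^T\S^T\S\B-\A^T\B\|_{\max}$ is itself of order $\nu(\A,\B)\sqrt{(\log d)/t}$ with the same probability, its square contributes a lower-order term. Substituting the covariance bound into the Gaussian comparison produces a term of order $(\nu(\A,\B)^2(\log d)^{3/2}/\sqrt{t})^{1/4}\asymp\nu(\A,\B)^{1/2}(\log d)^{3/8}t^{-1/8}$; a parallel route that does not pre-concentrate but rather directly bounds $\|\hat\Si-\Si\|_{\max}$ by the stronger $\nu^3$-scaled inequality gives the $\nu^{3/4}$ summand in $h$. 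Case~(b) is entirely analogous with $\nu(\A,\B)$ replaced by $\|\A\|_F\|\B\|_F$, since the length-sampling rows are deterministically bounded by this quantity.

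The main obstacle, in my view, will be Step~3 (the covariance concentration combined with the Gaussian comparison), for two reasons. The entries of $\hat\Si$ are \emph{fourth-order} polynomials in sub-Gaussians, so the relevant Bernstein parameters scale like $\nu(\A,\B)^2$ and one must be careful not to pay an extra factor of $\sqrt{p}=\sqrt{dd'}$ from a naive union bound: this is why Assumption~\ref{assumptions} only posts logarithmic dimension dependence and why Assumption~\ref{assumptions}'s polynomial link $d^{1/\kappa}\lesssim t\lesssim d^{\kappa}$ is assumed. Second, the Gaussian comparison step is delicate because one needs the \emph{LP} version (so that a soft-max smoothing argument is enough), rather than the Kolmogorov version (which would require anti-concentration and thus variance lower bounds on $\A,\B$). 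Once these two pieces are in place, a final triangle inequality $d_{\text{LP}}(\mathcal{L}(Z_t),\mathcal{L}(Z_t^\star\mid\S))\leq\text{(GA term)}+\text{(Gaussian comparison term)}$ combined with $h(x)=x^{1/2}+x^{3/4}$ (collecting the two distinct $\nu$-scalings that appear in the two steps) yields both stated bounds.
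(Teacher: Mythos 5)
Your decomposition is the same as the paper's: triangle inequality through a Gaussian surrogate $Z=\|\G\|_\infty$ with $\G\sim\NM(0,\Si)$, together with the observation that $Z_t^\star\mid\S$ is exactly $\|\hat\G\|_\infty$ for $\hat\G\sim\NM(0,\hat\Si)$, with $\hat\Si$ the (recentered) sample covariance of the dyads. Proposition~\ref{prop:gaussian} handles the first leg and Proposition~\ref{prop:bootstrap} the second, and the final bound is obtained exactly as you describe. So the architecture of your plan is right.

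There is, however, a genuine gap in your Step~3. You assert that each entry of $\hat\Si-\Si$ is ``an average of $t$ i.i.d.\ fourth-order products in $\s_i$, which are sub-exponential at scale $\nu(\A,\B)^2$,'' and then invoke Bernstein. This is not correct: each $f_{\j}(\D_i)$ is a centered sub-Gaussian quadratic form and hence sub-exponential, but the entry $f_{\j}(\D_i)f_{\k}(\D_i)$ is a \emph{product of two sub-exponentials} (a degree-$4$ polynomial in sub-Gaussians), which has no moment generating function and is not sub-exponential---so a Bernstein/Chernoff bound does not apply. The paper flags precisely this obstacle and circumvents it by bounding $\|\Delta_t'(\S)\|_p$ via Rosenthal's inequality with optimal $p$-dependence (Lemma~\ref{lem:rosenthal}), choosing $p=\log\card(\FM)$, and converting to a tail bound with Markov. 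This is why the resulting concentration is $\Delta_t'(\S)\lesssim \nu(\A,\B)^2\log(\card\FM)/\sqrt{t}$ with a full $\log$ rather than the $\sqrt{\log}$ your Bernstein heuristic would give; only the cross-term $\Delta_t''(\S)$, which is genuinely a sum of sub-exponentials, gets a true Bernstein bound (Lemma~\ref{lem:bernstein}). Your claim of a separate ``parallel route'' producing the $\nu^{3/4}$ summand also mis-attributes the two powers: in the paper the $\nu^{3/4}$ comes entirely from the Gaussian approximation step (the third-moment quantity $L_t\lesssim\nu^3$ in the CCK coupling inequality, optimized over $\delta$), while the bootstrap/covariance-comparison step yields the $\nu^{1/2}$ summand, not the other way around. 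Neither misstep changes the final shape of the bound since $h(x)=x^{1/2}+x^{3/4}$ collects both, but to make the argument go through you must replace the Bernstein claim for the fourth-order entries with a moments/Rosenthal argument.
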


\paragraph{Remarks.}
A noteworthy property of the bounds is that they are \emph{dimension-free} with respect to the large dimension $n$. Also, they have a very mild logarithmic dependence on $d$. With regard to the dependence on $t$, there are two other important factors to keep in mind. First, the practical performance of the bootstrap method (shown in Section~\ref{sec:exp}) is much better than what the $t^{-1/8}$ rate suggests. Second, the problem of finding the optimal rates of approximation for multiplier bootstrap methods is a largely open problem --- even in the simpler setting of bootstrapping the coordinate-wise maximum of vectors (rather than matrices). In the vector context, the literature has focused primarily on the Kolmogorov metric (rather than the LP metric), and some quite recent improvements beyond the $t^{-1/8}$ rate have been developed in~\cite{cckCLT} and~\cite{Lopes:2018}. However, these works also rely on model assumptions that would lead to additional restrictions on the matrices $\mathbf{A}$ and $\mathbf{B}$ in our setup. Likewise, the problem of extending our results to achieve faster rates or handle other metrics is a natural direction for future work.

\paragraph{The SRHT case.}
For the case of SRHT matrices, the analogue of Theorem~\ref{thm:main} needs to be stated in a slightly different way for technical reasons. From a qualitative standpoint, the results for SRHT and sub-Gaussian  matrices turn out to be similar.

The technical issue to be handled is that the rows of an SRHT matrix are not independent, due to their common dependence on the matrix $\D_n^{\circ}$. Fortunately, this inconvenience can be addressed by conditioning on $\D_n^{\circ}$. Theoretically, this simplifies the analysis of the bootstrap, since it ``decouples'' the rows of the SRHT matrix. Meanwhile, if we let $\tilde{q}_{1-\alpha}(t)$ denote the $(1-\alpha)$-quantile of the  distribution $\mathcal{L}(\ve_t|\D_n^{\circ})$, 
$$\tilde{q}_{1-\alpha}(t) \ := \ \inf\Big\{q\in[0,\infty)\Big| \ \PB(\ve_t \leq q\big|\D_n^{\circ}) \geq 1-\alpha \Big\},$$
 then it is simple  to check that $\tilde{q}_{1-\alpha}(t)$ acts as a ``surrogate'' for $q_{1-\alpha}(t)$, since\footnote{It is also possible to show that $\tilde{q}_{1-\alpha}(t)$ fluctuates around $q_{1-\alpha}(t)$. Indeed, if we define the random variable $V:=\mathbb{P}(\ve_t\leq q_{1-\alpha}(t)|\D_n^{\circ})$, it can be checked that the event
$V\geq 1-\alpha$
is equivalent to the event
$\tilde{q}_{1-\alpha}(t) \leq q_{1-\alpha}(t).$
Furthermore, if we suppose that $1-\alpha$ lies in the range of the c.d.f.~of $\ve_t$, then $\mathbb{E}[V]=1-\alpha$. In turn, it follows that the event $\tilde{q}_{1-\alpha}(t)\leq q_{1-\alpha}(t)$ occurs when $V\geq \mathbb{E}[V]$, and conversely, the event $\tilde{q}_{1-\alpha}(t)> q_{1-\alpha}(t)$ occurs when $V< \mathbb{E}[V]$.} 

\begin{equation}
\begin{split}
\PB(\ve_t\leq \tilde{q}_{1-\alpha}(t)) &= \EB\big[ \PB\big(\ve_t\leq \tilde{q}_{1-\alpha}(t)\big|\D_n^{\circ}\big)\big]\\[0.2cm]
& \geq \EB[1-\alpha]\\[0.2cm]
&=1-\alpha.
\end{split}
\end{equation}
For this reason,  we will view $\tilde{q}_{1-\alpha}(t)$ as the new parameter to estimate (instead of $q_{1-\alpha}(t)$), and accordingly, the aim of the following result is to quantify how well the bootstrap distribution $\mathcal{L}(Z_t^{\star}|\S)$ approximates the conditional distribution $\mathcal{L}(Z_t|\D_n^{\circ})$.

\begin{theorem}\label{thm:main2} Let $h(x)=x^{1/2}+x^{3/4}$ for $x\geq 0$. If Assumption~\ref{assumptions} (c) holds, then there is an absolute constant $c>0$ such that the following bound holds with probability at least $1-\ts\frac{1}{t}-\ts\frac{1}{dd'}-\ts\frac{c}{n}$, 
		\begin{eqnarray*}
		d_{\text{\emph{LP}}}
	\Big(\mathcal{L}(Z_t|\D_n^{\circ})\,,\: \mathcal{L}(Z_t^{\star}|\S)\Big)
	& \leq &  \frac{c \cdot  h(\nu(\A,\B)\log(n))\cdot \sqrt{\log(d)} }{t^{1/8}}.
	\end{eqnarray*}
\end{theorem}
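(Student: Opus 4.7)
The plan is to reduce the SRHT case to the independent-summands framework already handled by Theorem~\ref{thm:main}, by conditioning on the diagonal sign matrix $\D_n^{\circ}$. First introduce the ``flattened'' matrices $\A^{\circ} := (\ts\frac{1}{\sqrt{n}}\H_n)\D_n^{\circ}\A$ and $\B^{\circ} := (\ts\frac{1}{\sqrt{n}}\H_n)\D_n^{\circ}\B$, and note that $(\ts\frac{1}{\sqrt{n}}\H_n)\D_n^{\circ}$ is orthogonal, so $(\A^{\circ})^T\B^{\circ} = \A^T\B$ and $\nu(\A^{\circ},\B^{\circ}) = \nu(\A,\B)$. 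Once $\D_n^{\circ}$ is fixed, $\S = \PP\cdot(\ts\frac{1}{\sqrt{n}}\H_n)\D_n^{\circ}$ has i.i.d.\ rows, and a direct computation shows that the summands $\D_k = \A^T\s_k\s_k^T\B$ appearing in both $Z_t$ and $Z_t^\star$ take the form $n\cdot(\A^{\circ})^T\e_{i_k}\e_{i_k}^T\B^{\circ}$, where $i_1,\ldots,i_t$ are i.i.d.\ uniform on $[n]$. Thus, conditionally on $\D_n^{\circ}$, one is exactly in the uniform row-sampling regime covered by the techniques behind Theorem~\ref{thm:main}, and the comparison of $\mathcal{L}(Z_t|\D_n^{\circ})$ with $\mathcal{L}(Z_t^\star|\S)$ is the one the theorem asks for.

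The new ingredient required for the SRHT case is a ``flattening'' estimate showing that, with probability at least $1-c/n$ over $\D_n^{\circ}$,
\[
\max_{i,j}\bigl|[\A^{\circ}]_{i,j}\bigr| \ \lesssim \ \sqrt{\tfrac{\log(nd)}{n}}\,\sqrt{\|\A^T\A\|_\infty}, \qquad \max_{i,j}\bigl|[\B^{\circ}]_{i,j}\bigr| \ \lesssim \ \sqrt{\tfrac{\log(nd')}{n}}\,\sqrt{\|\B^T\B\|_\infty}.
\]
Each entry $[\A^{\circ}]_{i,j}$ is a Rademacher sum in the diagonal entries of $\D_n^{\circ}$ with variance at most $\|\A^T\A\|_\infty/n$, so Hoeffding combined with a union bound over the $n(d+d')$ entries delivers these estimates. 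On this good event, the entrywise envelope of the summands is controlled:
\[
\max_k\|\D_k\|_\infty \ \leq \ n\cdot\max_{i,j}\bigl|[\A^{\circ}]_{i,j}\bigr|\cdot\max_{i,j}\bigl|[\B^{\circ}]_{i,j}\bigr| \ \lesssim \ \log(n)\cdot\nu(\A,\B),
\]
after absorbing $\log\max\{d,d'\}$ into $\log n$ in the sketching regime governed by Assumption~\ref{assumptions}(c). This is precisely the quantity that played the role of $\nu(\A,\B)$ in the sub-Gaussian argument for Theorem~\ref{thm:main}, so the function $h$ now accordingly acts on $\nu(\A,\B)\log n$.

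With this envelope in hand, the remaining steps replay the Theorem~\ref{thm:main} pipeline conditionally on $\D_n^{\circ}$: (i) a high-dimensional CLT in $\RB^{dd'}$ to bound $d_{\text{LP}}\bigl(\mathcal{L}(Z_t|\D_n^{\circ}),\mathcal{L}(\|\G\|_\infty)\bigr)$, where $\G$ is a centered Gaussian with covariance $\cov\bigl(n\,(\A^{\circ})^T\e_{i_1}\e_{i_1}^T\B^{\circ}\bigr)$; (ii) an analogous CLT applied conditionally on $\S$ to bound $d_{\text{LP}}\bigl(\mathcal{L}(Z_t^\star|\S),\mathcal{L}(\|\hat\G\|_\infty)\bigr)$, where $\hat\G$ carries the empirical covariance formed from the realized $\D_1,\ldots,\D_t$; and (iii) a Gaussian-to-Gaussian comparison controlled by concentration of the empirical covariance around the population one. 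Each step contributes a term of order $h(\nu(\A,\B)\log n)\sqrt{\log d}/t^{1/8}$ through the envelope above, and a triangle inequality plus a union of the relevant probability events assembles the final budget $1-1/t-1/(dd')-c/n$, with the last term accounting for the flattening event.

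The hard part, I expect, is step~(iii): controlling the empirical-to-population covariance gap in a way that translates cleanly through $d_{\text{LP}}$ on the $\ell_\infty$-norm of a Gaussian vector of dimension $dd'$. This is exactly where switching from the Kolmogorov metric to the L\'evy--Prohorov metric pays off, because it removes the need for anti-concentration inequalities on $\|\G\|_\infty$ that would otherwise force lower bounds on coordinate variances and thereby exclude matrices $\A,\B$ with sparse or imbalanced columns --- precisely the regime the paper wishes to allow. A secondary subtlety worth flagging is that the target of approximation is the conditional law $\mathcal{L}(Z_t|\D_n^{\circ})$ rather than the marginal $\mathcal{L}(Z_t)$, but this is already built into the statement via the surrogate quantile $\tilde{q}_{1-\alpha}(t)$ discussed just before the theorem.
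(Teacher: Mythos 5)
Your proposal is correct and follows the same overall architecture as the paper: condition on the sign matrix $\D_n^{\circ}$ to decouple the SRHT rows, reduce to an i.i.d.\ sampling scheme, establish a high-probability bound (over $\D_n^{\circ}$) on the effective scale of the summands, and then replay the sub-Gaussian pipeline with $\nu(\A,\B)$ replaced by $\log(n)\,\nu(\A,\B)$. The paper assembles this through Propositions~\ref{prop:gaussian} and~\ref{prop:bootstrap}, part (c), plus the triangle inequality; your ``steps (i)--(iii)'' map onto that decomposition, except that your step (ii) slightly mischaracterizes the situation --- conditionally on $\S$, the bootstrap statistic $Z_t^\star$ is \emph{exactly} the maximum of a Gaussian process (since the multipliers $\xi_i$ are Gaussian), so no second CLT is invoked; only the Gaussian comparison of Theorem~3.2 in \citet{cckIncreasing}, driven by the covariance discrepancy $\Delta_t(\S)$, is needed.

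The one place you genuinely diverge from the paper is the SRHT-specific ingredient. The paper bounds the conditional Orlicz norm $\|f_{\j}(\D_1)\,|\,\D_n^{\circ}\|_{\psi_1}$ by applying the Hanson--Wright inequality directly to the quadratic forms $\boldsymbol\varepsilon_l^T(\B\C_{\j}^T\A^T)\boldsymbol\varepsilon_l$ (where $\boldsymbol\varepsilon_l$ is the $l$th Rademacher row of $\H_n\D_n^{\circ}$), then unions over $l\in[n]$; the rank-1 structure is used only to identify $\|\B\C_{\j}^T\A^T\|_2=\|\B\C_{\j}^T\A^T\|_F\leq\nu(\A,\B)$. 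You instead exploit the rank-1 structure more aggressively: you factor $\boldsymbol\varepsilon_l^T(\B\C_{\j}^T\A^T)\boldsymbol\varepsilon_l = (\boldsymbol\varepsilon_l^T\B\e_{j_2})(\boldsymbol\varepsilon_l^T\A\e_{j_1})$, observe each factor is a Rademacher \emph{linear} form (an entry of your flattened $\A^\circ$ or $\B^\circ$), and apply Hoeffding plus a union over all $n(d+d')$ entries. Both routes deliver the same effective scale $\lesssim\log(n)\,\nu(\A,\B)$ under Assumption~\ref{assumptions}, since $d\lesssim n^\kappa$ lets you absorb $\log(nd)$ into $\log n$. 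Your variant is slightly more elementary (first-order concentration rather than Hanson--Wright), and it handles the uniformity over $\j$ transparently because the entrywise max over $\A^\circ,\B^\circ$ is taken up front; the paper states its lemma for fixed $\j$ and implicitly relies on tuning the Hanson--Wright threshold constant to absorb the $2dd'$ union (which works since $dd'\lesssim n^{2\kappa}$), but does not spell this out. So your approach is a clean and correct alternative, and arguably a touch tidier on the uniformity bookkeeping.
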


\paragraph{Remarks.}Up to a factor involving $\log(n)$, the bound for SRHT matrices matches that for sub-Gaussian matrices. Meanwhile, from a more practical standpoint, our empirical results will show that the bootstrap's performance for SRHT matrices is generally similar to that for both sub-Gaussian and length-sampling matrices.
%%%%%%%%%%%%%%%%%%%%%%%%%%%%%%%%%%%%%%%%%%%%%%%%%%%

\paragraph{Further discussion of results.} To comment on the role of $\nu(\A,\B)$ and $\|\A\|_F\|\B\|_F$ in Theorems~\ref{thm:main} and~\ref{thm:main2}, it is possible to interpret them as problem-specific ``scale parameters''. Indeed, it is natural that the bounds on $d_{\text{LP}}$ should increase with the scale of $\A$ and $\B$ for the following reason. Namely, if $\A$ or $\B$ is multiplied by a scale factor $\kappa>0$, then it can be checked that the quantile error $|\hat{q}_{1-\alpha}(t)-q_{1-\alpha}(t)|$ will also change by a factor of $\kappa$, and furthermore, the inequality~\eqref{eqn:quantileineqs} demonstrates a monotone relationship between the sizes of the quantile error and the $d_{\text{LP}}$  error.  For this reason, the bootstrap may still perform well in relation to the scale of the problem when the magnitudes of the parameters $\nu(\A,\B)$ and $\|\A\|_F\|\B\|_F$ are large. Alternatively, this idea can be seen by noting that the  $d_{\text{LP}}$ bounds can be made arbitrarily small by simply changing the units used to measure the entries of $\A$ and $\B$.

Beyond these considerations, it is still of interest to compare the results for different sketching matrices once a particular scaling has been fixed. For concreteness, consider a scaling where the spectral norms of $\A$ and $\B$ satisfy $\|\mathbf{A}\|_{2}\asymp \|\mathbf{B}\|_{2}\asymp 1$. (As an example, if we view $\mathbf{A}\ttop \mathbf{A}$ as a sample covariance matrix, then the condition $\|\mathbf{A}\|_{2}\asymp 1$ simply means that the largest principal component score is of order 1.) Under this scaling, it is simple to check that $\nu(\mathbf{A},\mathbf{B})=\mathcal{O}(1)$, and $\|\mathbf{A}\|_F\|\mathbf{B}\|_F=\mathcal{O}(\sqrt{\srk(\mathbf{A})\srk(\mathbf{B})})$, where $\srk(\mathbf{A}):=\|\mathbf{A}\|_F^2/\|\mathbf{A}\|_2^2$ is the ``stable rank''.  In particular, note that if $\mathbf{A}$ and $\mathbf{B}$ are approximately low rank, as is common in applications, then $\srk(\mathbf{A})\ll d$, and \smash{$\srk(\mathbf{B})\ll d'$.} Accordingly, we may conclude that if the conditions of Theorems~\ref{thm:main} and~\ref{thm:main2} hold, then bootstrap consistency occurs under the following limits
\begin{align}
\sqrt{\log(d)}/t^{1/8}= o(1) & \ \ \text{ in the sub-Gaussian case},\\[0.3cm]
(\srk(\mathbf{A})\srk(\mathbf{B}))^{3/8}\sqrt{\log(d)}/t^{1/8}= o(1) & \ \ \text{ in the length-sampling case},\\[0.2cm]
\log(n)^{3/4}\sqrt{\log(d)}/t^{1/8}= o(1) & \ \ \text{ in the SRHT case},
\end{align}
where we have used the simplifying assumption that $d\asymp d'$.

\section{Experiments} \label{sec:exp}

This section outlines a set of experiments for evaluating the performance of Algorithm~\ref{alg:bootstrap1} with the extrapolation speed-up described in Section~\ref{sec:extrap}.
The experiments involved both synthetic and natural matrices, as described below.

%%%%%%%%%%%%%%%%%%%%%%%%%%%%%%%%%%%%%%%%%%%%%%%%%%%%%%%%%%%%%%%%%%%%%%%%%%%%%%
%%%%%%%%%%%%%%%%%%%%%%%%%%%%%%%%%%%%%%%%%%%%%%%%%%%%%%%%%%%%%%%%%%%%%%%%%%%%%%

\paragraph{Synthetic matrices.}

In order to generate the matrix $\A\in\RB^{n\times d}$ synthetically,  we selected the factors of its singular value decomposition  $\A = \U \diag (\si ) \V^T$  in the following ways, fixing $n=30,000$ and $d=1,000$. In previous work, a number of other experiments in randomized matrix computations have been designed along these lines~\citep{ma2014statistical,yang2015implementing}.

The factor $\U\in\RB^{n\times d}$ was selected as the Q factor from the reduced QR factorization of a random matrix $\X \in \RB^{n\times d}$. The rows  of $\X$ were sampled i.i.d.~from a multivariate $t$-distribution, $\boldsymbol t_{2}(\boldsymbol \mu,\C)$, with $2$ degrees of freedom, mean $\boldsymbol \mu=\0$, and covariance  $c_{ij}=2\times 0.5^{|i-j|}$ where $\C=[c_{ij}]$. (This choice causes the matrix $\A$ to have high row-coherence, which is of interest, since this is a challenging case for sampling-based sketching matrices.) Next, the factor $\V\in\RB^{d\times d}$ was selected as the Q factor from a QR factorization of a $d\times d$ matrix with i.i.d.~$\NM(0,1)$ entries. For the singular values $\boldsymbol \sigma\in\RB_+^{d}$, we chose two options, leading to either a low or high stable rank $\srk(\A )= \tfrac{\|\A\|_F^2 }{ \| \A \|_2^2 }$. In the low stable rank case, we put  $\sigma_i=10^{\kappa_i}$ for a set of equally spaced values $\kappa_i$ between 0 and -6, yielding $\srk(\A)=36.7$. Alternatively, in the high stable rank case, the entries of $\boldsymbol \sigma$ were equally spaced between $0.1$ and $1$, yielding $\srk(\A)=370.1$. Finally, to make all numerical comparisons on a common scale, we normalized $\A$ so that $\|\A^T\A\|_{\infty}=1$.

\comment{
\begin{itemize}
	\item 
	{\bf Column bases.}
	Let the rows of $\X \in \RB^{n\times d}$ be i.i.d.\ sampled from multivariate $t$-distribution
	with covariance matrix $\C$ and $v=2$ degree of freedom, 
	where the $(i,j)$-th entry of $\C \in \RB^{m\times n}$ is $2\times 0.5^{|i-j|}$.
	Let $\U$ be the orthonormal bases of $\X$.
	In this way, $\U$ and $\A$ have high row coherence.
	\vspace{-1mm}
	\item
	{\bf Row bases.}
	Let $\V \in \RB^{d\times d}$ be the orthonormal bases of a $d\times d$ standard Gaussian matrix.
\end{itemize}
We construct $\si \in \RB_+^d$ in two different ways to get very different stable rank
$ (\A ) \triangleq \tfrac{\|\A\|_F^2 }{ \| \A \|_2^2 }$,
which is smaller than or equal to $\rk (\A )$.
\begin{itemize}
	\item
	{\bf Low stable rank.} Let the entries of $\bb\in \RB^{d}$ be equally paced between $0$ and $-6$.
	Let $\sigma_i = 10^{b_i}$ for all $i\in [d]$.
	For $d = 1,000$, the stable rank is
	$36.7$---far smaller than the rank of $\A$.
	\vspace{-1mm}
	\item
	{\bf High stable rank.}
	Let the entries of $\si$ be equally paced between $0.1$ and $1$.
	For $d = 1,000$, the stable ranks is 
	$370.1$ which is of the same order as the rank of $\A$.
\end{itemize}
We fix $n = 30,000$ and $d=1,000$.
}

\paragraph{Natural matrices.}
We also conducted experiments on five natural data matrices $\A$ from the LIBSVM repository~\citet{libsvm}, named `Connect', `DNA', `MNIST', `Mushrooms', and `Protein', with the same normalization that was used for the synthetic matrices.
These  datasets are briefly summarized in Table~\ref{tab:datasets}.

\begin{table}[h]\setlength{\tabcolsep}{0.3pt}
	\caption{A summary of the natural datasets.}
	\label{tab:datasets}
	\begin{center}
		\begin{small}
			\begin{tabular}{l c c c c c c  }
				\hline
				{\bf Dataset}  &~~Connect~~&~~~DNA~~~&~~MNIST~~&~~Mushrooms~~&~~Protein~~\\
				\hline
				{\ \ \ \ \ $n$} & $67,557$& $2,000$ & $60,000$&    $8,124$  & $17,766$ \\[0.05cm]
				{\ \ \ \ \ $d$} &   $126$ &  $180$  &   $780$ &     $112$   &  $356$  \\
				\hline
			\end{tabular}
		\end{small}
	\end{center}
\end{table}
\vspace{-.5cm}

%%%%%%%%%%%%%%%%%%%%%%%%%%%%%%%%%%%%%%%%%%%%%%%%%%%%%%%%%%%%%%%%%%%%%%%%%%%%%%
%%%%%%%%%%%%%%%%%%%%%%%%%%%%%%%%%%%%%%%%%%%%%%%%%%%%%%%%%%%%%%%%%%%%%%%%%%%%%%

\comment{

%---------------------------------Figure---------------------------------%
\begin{figure}[!ht]
	\begin{center}
		\centering
		\includegraphics[width=0.99\textwidth]{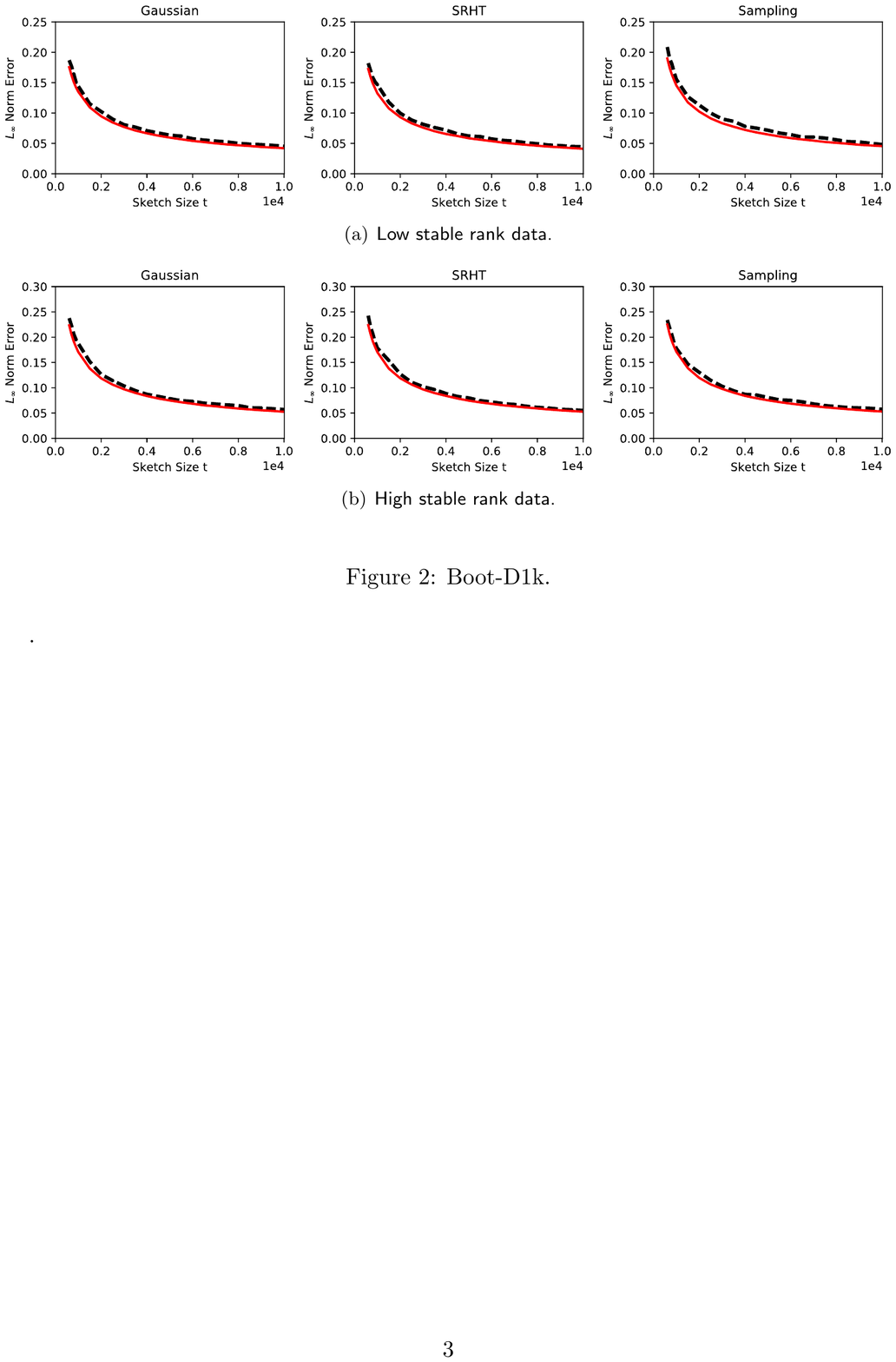}
	\end{center}
	\caption{Bootstrap experiments on the $n=30,000$ and $d=1,000$ synthetic data.
		The black dash is the empirical $0.99$ quantile of the $\ell_\infty$ norm error.
		The red line is the bootstrap estimation.}
	\label{fig:boot_d1k}
\end{figure}
%---------------------------------Figure---------------------------------%

}

\subsection{Design of experiments}\label{sec:exp:boot}
For each matrix $\A$, natural or synthetic, we considered the task of estimating the quantile $q_{0.99}(t)$ for the random sketching error $\ve_t=\big\| \A^T \A- \A^T \S^T \S \A \big\|_{\infty}$. The sketching matrix $\S\in\RB^{t\times n}$ was allowed to be one of three types: Gaussian projection, length-sampling, and SRHT, as described in Section~\ref{sec:pre}.

\paragraph{Ground truth values.}The ground truth values for $q_{0.99}(t)$ were constructed in the following way. For each matrix $\A$, a grid of $t$ values was specified, ranging from $d/2$ up to a larger number as high as $10d$ or $20d$, depending on $\A$. Next, for each $t$ value, and for each type of sketching matrix, we used 1,000 realizations of $\S\in\RB^{t\times n}$, yielding 1,000 realizations of the random variable $\ve_t$. In turn, the 0.99 sample quantile of the 1,000 realizations of $\ve_t$ was treated as the true value of $q_{0.99}(t)$, and this appears as the  black curve in all plots.

\paragraph{Extrapolated estimates.} With regard to the bootstrap extrapolation method in Section~\ref{sec:extrap}, we fixed the value $t_0=d/2$  as the initial sketch size to extrapolate from. For each $\A$, and each type of sketching matrix, we applied Algorithm~\ref{alg:bootstrap1} to each of the 1,000 realizations of $\tilde{\A}=\S\A\in \RB^{t_0\times d}$ generated previously. Each time Algorithm~\ref{alg:bootstrap1} was run, we used the modest choice of $B=20$ for the number of bootstrap samples. From each set of 20 bootstrap samples, we used the 0.99 sample quantile as the estimate $\hat{q}_{0.99}(t_0)$.\footnote{Note that since $19/20=0.95$ and $20/20=1$, the 0.99 quantile was obtained by an interpolation rule.} Hence, there were 1,000 realizations of $\hat{q}_{0.99}(t_0)$ altogether. Next, we used the scaling rule in equation~\eqref{extrapest} to obtain 1,000 realizations of the extrapolated estimate $\hat{q}_{0.99}^{ \, \text{ext} }(t)$ for values $t\geq t_0$.

In order to illustrate the variability of the estimate $\hat{q}_{0.99}^{ \, \text{ext} }(t)$ over the 1,000 realizations, we plot three different curves as a function of $t$. The  blue curve represents the average value of $\hat{q}_{0.99}^{ \, \text{ext} }(t)$, while the  green and yellow curves respectively correspond to the estimates ranking 100th an 900th out of the 1,000 realizations.

%%%%%%%%%%%%%%%%%%%%%%%%%%%%%%%%%%%%%%%%%%%%%%%%%%%%%%%%%%%%%%%%%%%%%%%%%%%%%%
%%%%%%%%%%%%%%%%%%%%%%%%%%%%%%%%%%%%%%%%%%%%%%%%%%%%%%%%%%%%%%%%%%%%%%%%%%%%%%

\subsection{Comments on numerical results}

Overall, the numerical results for the bootstrap extrapolation method are quite encouraging, and to a large extent, the method is accurate across many choices of $\A$ and $\S$. Given that the blue curves representing $\EB[\hat{q}_{0.99}^{\,\text{ext}}(t)]$ are closely aligned with the  black curves for $q_{0.99}(t)$, we see that the extrapolated estimate is essentially unbiased. Moreover, the variance of the estimate is fairly low, as indicated by the small gap between the green and yellow curves. The low variance is also notable when considered in light of the fact that only $B=20$ bootstrap samples are used to construct $\hat{q}^{\,\text{ext}}_{0.99}(t)$, since the variance should decrease as $B$ becomes larger.

With attention to the extrapolation rule~\eqref{extrapest}, there are two main points to note. First, the plots show that the extrapolation may be initiated at fairly low values of $t_0$, which are much less than the sketch sizes needed to achieve a small sketching error $\ve_t$.  Second, we see that $\hat{q}_{0.99}^{\, \text{ext}}(t)$ remains accurate for $t$ much larger than $t_0$, well up to $t=10,000$ and perhaps even farther.  Consequently, the results show that the extrapolation technique is capable of saving quite a bit of computation without much detriment to statistical performance. 

To consider the relationship between theory and practice, one basic observation is that all three types of sketching matrices obey roughly similar bounds in  Theorems~\ref{thm:main} and~\ref{thm:main2}, and indeed, we also see generally similar numerical performance among the three types. At a more fine-grained level however, the Gaussian and SRHT sketching matrices tend to produce estimates $\hat{q}^{\, \text{ext}}_{0.99}(t)$ with somewhat higher variance than in the case of length sampling. Another difference between theory and simulation, is that the actual performance of the method seems to be better than what the theory suggests --- since the estimates are accurate at values of $t_0$ that are much smaller than what would be expected from the rates in Theorems~\ref{thm:main} and~\ref{thm:main2}.

%---------------------------------Figure---------------------------------%
\begin{figure}[h]
	\begin{center}
		\centering
		\includegraphics[width=0.95\textwidth]{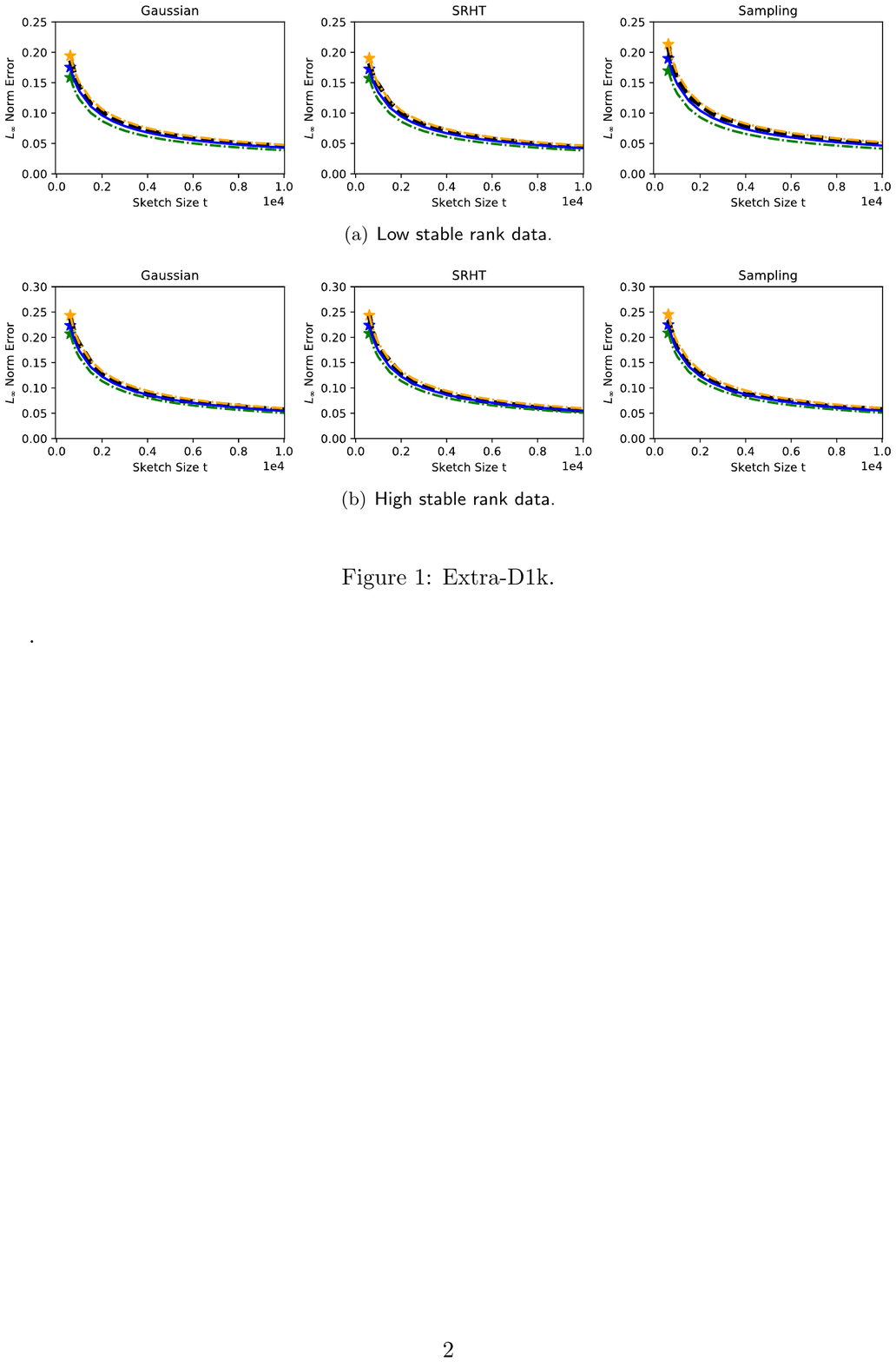}
	\end{center}
	\caption{{\textbf{Results for synthetic matrices.}} 
		The black line represents $q_{0.99}(t)$ as a function of $t$.
		The blue star is the average bootstrap estimate at the initial sketch size $t_0=d/2=500$, and the blue line represents the average extrapolated estimate $\EB[\hat{q}_{0.99}^{\,\text{ext}}(t)]$ derived from the starting value $t_0$. To display the variability of the estimates, the green and yellow curves correspond to the 100th and 900th largest among the 1,000 realizations of $\hat{q}_{0.99}^{\, \text{ext}}(t)$ at each $t$.}
	\label{fig:extra_d1k}
\vspace{-0.5cm}
\end{figure}
%---------------------------------Figure---------------------------------%

%---------------------------------Figure---------------------------------%
\begin{figure}[h!]
	\vspace{-10mm}
	\begin{center}
		\centering
		\includegraphics[width=0.9\textwidth]{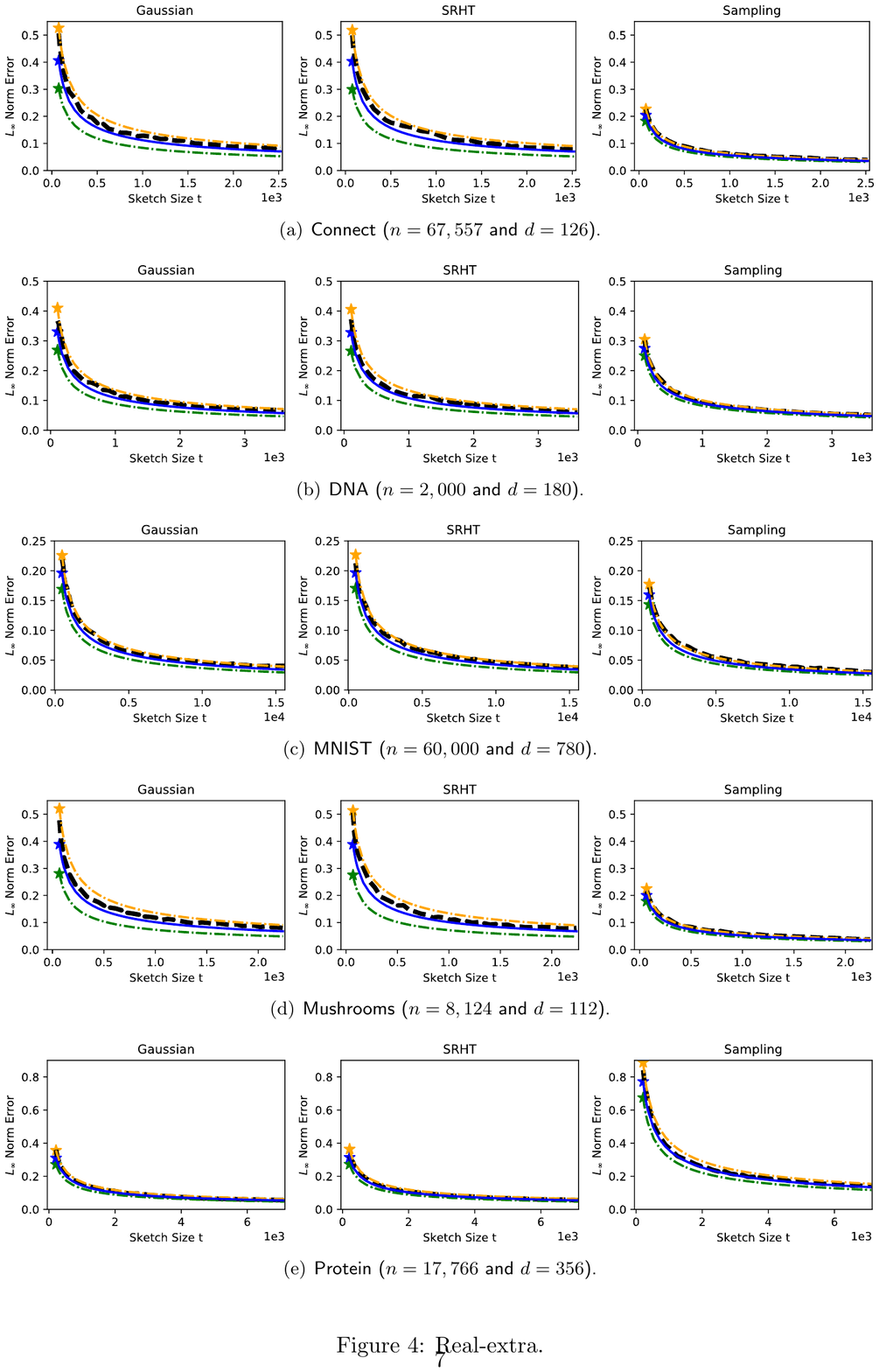}
	\end{center}
	\vspace{-8mm}
	\caption{{\textbf{Results for natural matrices.}}  The results for the natural matrices are plotted in the same way as described in the caption for the results on the synthetic matrices.
	}
	%Extrapolation experiments on natural data summarized in Table~\ref{tab:datasets}.
	%	The black dash is the empirical $0.99$ quantile of the $\ell_\infty$ norm error.
	%	The blue star is a bootstrap estimate at $t=d$.
	%	The blue line is the extrapolation result using the square root rule.}
	\label{fig:extra_real}
\end{figure}
%---------------------------------Figure---------------------------------%

\section{Conclusions and extensions}\label{sec:extend}
In this paper, we have focused on estimating the quantile $q_{1-\alpha}(t)$ as a way of addressing two fundamental  issues in randomized matrix multiplication: (1) knowing how accurate a given sketched product is, and (2) knowing how much computation is  needed to achieve a specified degree of accuracy. With regard to methodology, our approach is relatively novel in that it uses the statistical technique of bootstrapping to serve a computational purpose --- by quantifying the error of a randomized sketching algorithm. A second important component of our method is the extrapolation technique, which ensures that the cost of estimating $q_{1-\alpha}(t)$ does not substantially increase the overall cost of standard sketching methods. Furthermore, our numerical results show that the extrapolated estimate is quite accurate in a variety of different situations, suggesting that our method may offer a general way to enhance sketching algorithms in practice.

\paragraph{Extensions.} More generally, the problems we have addressed for randomized matrix multiplication arise for many other large-scale matrix computations. Hence, it is natural to consider extensions of our approach to more complex settings, and in the remainder of this section, we briefly mention a few possibilities for future study.

At a high level, each of the applications below deals with an object, say $\Theta$, that is difficult to compute, as well as a randomized approximation, say $\widetilde{\Theta}$, that is built from a sketching matrix $\S$ with $t$ rows. Next, if we consider the random error variable
$$\ve_t=\|\widetilde{\Theta}-\Theta\|,$$
for an unspecified norm $\|\cdot\|$, then the problem of estimating the relationship between accuracy and computation can again be viewed as the problem of estimating the quantile function
$q_{1-\alpha}(t)$ associated with $\ve_t$.  In turn, this leads to the question of how to develop a new bootstrap procedure that can generate approximate samples of $\ve_t$, yielding an estimate $\hat{q}_{1-\alpha}(t)$. However, instead of starting from the multiplier bootstrap (Algorithm~\ref{alg:bootstrap1}) as before, it may be conceptually easier to extend the non-parametric bootstrap (Algorithm~\ref{alg:bootstrap2}) --- because the latter bootstrap can viewed as a ``plug-in'' procedure that replaces $\A^T\B$ with $\tilde{\A}^T\tilde{\B}$, and replaces $\tilde{\A}^T\tilde{\B}$ with $(\tilde{\A}^*)^T(\tilde{\B}^*)$.

\begin{itemize}

\item \emph{Linear regression.}
Consider a multi-response linear regression problem, where the rows of $\B\in\mathbb{R}^{n\times d'}$
 are response vectors, and the rows of $\A\in\mathbb{R}^{n\times d}$ are input observations. The optimal solution to $\ell_2$-regression is given by
\begin{eqnarray*}
	\W_{\text{opt}} 
	\; = \; \argmin_{\W\in\mathbb{R}^{d\times d'}} \, \big\| \A \W - \B \big\|_F^2
	\; = \; (\A^T \A )^\dag \A^T \B ,
\end{eqnarray*}
which has $\OM (n d^2 + n d d')$ cost.
In the case where $\max\{d,d'\} \ll n$,  the matrix multiplications are a computational bottleneck, and an approximate solution can be obtained via 
$$\widetilde\W_{\text{opt}}=(\tilde{\A}^T \tilde{\A} )^\dag (\tilde{\A}^T \tilde{\B} ),$$ which has a cost $\OM (t d^2 + t d d') + C_{\textrm{sketch}}$,
where $C_{\textrm{sketch}}$ is cost of matrix sketching~\citep{drineas2006sampling,drineas2011faster,DMMW12_JMLR,clarkson2013low}. 
 In order to estimate the quantile function associated with the error variable
$\ve_t=\|\widetilde{\W}_{\text{opt}}-\W_{\text{opt}}\|,$
we could consider generating bootstrap samples of the form
$\ve_t^*=\|\widetilde{\W}_{\text{opt}}^*-\widetilde{\W}_{\text{opt}}\|,$
where 
$\widetilde{\W}_{\text{opt}}^*=( (\tilde{\A}^*)^T (\tilde{\A}^* ))^\dag (\tilde{\A}^*)^T (\tilde{\B}^*)$. For recent results in the case where $\W$ is a vector, we refer to the paper~\citep{LopesICML}.

\item \emph{Functions of covariance matrices.} If the rows of the matrix $\A$ are viewed as a sample of observations, then inferences on the population covariance structure are often based on functions of the form $\psi(\A^T\A)$. For instance, the function $\psi(\A^T\A)$ could be the top eigenvector, a set of eigenvalues, the condition number, or a test statistic. 
In any of these cases, if $\psi(\tilde{\A}^T\tilde{\A})$ is used as a fast approximation~\citep{covariancesketching}, then the sketching error $\ve_t=\|\psi(\tilde{\A}^T\tilde{\A})-\psi(\A^T\A)\|$ might be bootstrapped using $\ve_t^*=\|\psi((\tilde{\A}^*)^T(\tilde{\A}^*))-\psi(\tilde{\A}^T\tilde{\A})\|$.

\item \emph{Approximate Newton methods.}
In large-scale applications, Newton's method is often impractical, since it involves the costly processing of a Hessian matrix. 
As an example, consider an optimization problem arising in binary classification, where the rows of $\X\in\mathbb{R}^{n\times d}$ are observations $\x_1 , \dots , \x_n \in \RB^{d}$,
and $y_1, \dots , y_n \in \{0, 1\}$ are labels.
If an $\ell_2$-regularized logistic classifier is used,  this leads to minimizing the objective function $f(\w)=\sum_{i=1}^n \log \big( 1 + e^{- y_i \w^T \x_i} \big) + \frac{\gamma}{2} \|\w\|_2^2$ over coefficient vectors $\w$ in $\mathbb{R}^d$. The associated Newton step, with step size $\kappa$, is
$$ \w\leftarrow \w- \kappa\, \H^{-1}\nabla f,$$
involving the Hessian
\begin{eqnarray*}
	\H  = \A^T\A + \gamma \I_d ,
	\quad \textrm{ where \ \ \ } 
	\A = \diag \big( 1+e^{y_1 \w^T \x_1} , \dots ,1+e^{y_n \w^T \x_n} \big)^{-1}\X .
\end{eqnarray*}
If $d \ll n$, the cost of Newton's method is dominated by the formation of
$\H $ at each iteration, and the Hessian matrix can be approximated by the sketched version
%\begin{eqnarray*}
	\smash{$\tilde{\H }= \tilde{\A}^T\tilde{\A} + \gamma \I_d$,}
%\end{eqnarray*}
which reduces the per-iteration cost from $\OM (n d^2)$
to \smash{$\OM (t d^2 + nd) + C_{\textrm{sketch}}$}
\citep{pilanci2015newton,roosta2016sub,xu2016sub}.  
 In this context, the quality of the approximate Newton step could be assessed in terms of the error
$$\ve_t=\|\tilde{\H}^{-1}\nabla f \ -\ \H^{-1}\nabla f\|,$$
and in turn, this might be bootstrapped using
$\ve_t^*=\|(\tilde{\H}^*)^{-1}\nabla f \ -\ \tilde{\H}^{-1}\nabla f\|,$
where $\tilde{\H}^*=(\tilde{\A}^*)^T(\tilde{\A}^*)+\gamma \I_d$.
\end{itemize}

\acks{We thank the anonymous reviewers for their helpful suggestions.
	MEL thanks the National Science Foundation for partial support under grant DMS-1613218. 
	MWM  would like to thank the National Science Foundation,  the Army Research Office, and the Defense Advanced Research Projects Agency for providing partial support of this work.
}

\section*{Appendices}

\noindent \textbf{Outline of appendices.}
Appendix~\ref{app:approx} explains the main conceptual ideas underlying the proofs of Theorems~\ref{thm:main} and~\ref{thm:main2}. In particular, the proofs of these theorems will be decomposed into two main results: Propositions~\ref{prop:gaussian} and~\ref{prop:bootstrap}, which are given in Appendix~\ref{app:approx}.

Appendix~\ref{app:prop:gaussian} will prove the sub-Gaussian case of Proposition~\ref{prop:gaussian}, and Appendix~\ref{app:prop:bootstrap} will prove the sub-Gaussian case of Proposition~\ref{prop:bootstrap}.  Later on, Appendices~\ref{app:case:sampling} and~\ref{app:case:srht}, will explain how the arguments can be changed to handle the length-sampling and SRHT cases.\\

\noindent \textbf{Conventions used in proofs.} If either of the matrices $\A$ or $\B$ are $\0$, then $\ve_t$ has a trivial point-mass distribution at 0. In this degenerate case, it is simple to check that the bootstrap produces an exact approximation. So, without loss of generality, all proofs are written under the assumption that $\A$ and $\B$ are non-zero. Next, since Assumption~\ref{assumptions} is formulated using the $\lesssim$ notation, there is no loss of generality in carrying out calculations under the assumption that all the numbers $t,n,d,d'$ are at least 8, which will ensure that quantities such as $\log(d)$ are greater than 2. Lastly, if a numbered lemma is invoked in the middle of a proof, the lemma may be found in Appendix~\ref{app:technical}.

%%%%%%%%%%%%%%%%%%%%%%%%%%%%%%%%%%%%%%%%%%%%%%%%%%%%%%%%%%%%%%%%%%%%%%%%%%%%%%
%%%%%%%%%%%%%%%%%%%%%%%%%%%%%%%%%%%%%%%%%%%%%%%%%%%%%%%%%%%%%%%%%%%%%%%%%%%%%%
%%%%%%%%%%%%%%%%%%%%%%%%%%%%%%%%%%%%%%%%%%%%%%%%%%%%%%%%%%%%%%%%%%%%%%%%%%%%%%
%%%%%%%%%%%%%%%%%%%%%%%%%%%%%%%%%%%%%%%%%%%%%%%%%%%%%%%%%%%%%%%%%%%%%%%%%%%%%%

\appendix

%%%%%%%%%%%%%%%%%%%%%%%%%%%%%%%%%%%%%%%%%%%%%%%%%%%%%%%%%%%%%%%%%%%%%%%%%%%%%%
%%%%%%%%%%%%%%%%%%%%%%%%%%%%%%%%%%%%%%%%%%%%%%%%%%%%%%%%%%%%%%%%%%%%%%%%%%%%%%
%%%%%%%%%%%%%%%%%%%%%%%%%%%%%%%%%%%%%%%%%%%%%%%%%%%%%%%%%%%%%%%%%%%%%%%%%%%%%%
%%%%%%%%%%%%%%%%%%%%%%%%%%%%%%%%%%%%%%%%%%%%%%%%%%%%%%%%%%%%%%%%%%%%%%%%%%%%%%

\section{Gaussian and bootstrap approximations} \label{app:approx}

Section~\ref{sec:proof:process} introduces some notation that helps us to analyze the rescaled sketching error $Z_t$ from the viewpoint of empirical processes. Next, in Section~\ref{sec:proof:lemmas}, Theorem~\ref{thm:main} will be decomposed into two propositions that compare $Z_t$ and $Z_t^{\star}$ with the maximum of a suitable Gaussian process. The proofs of these propositions may be found in Appendices~\ref{app:prop:gaussian} and~\ref{app:prop:bootstrap}.

%%%%%%%%%%%%%%%%%%%%%%%%%%%%%%%%%%%%%%%%%%%%%%%%%%%%%%%%%%%%%%%%%%%%%%%%%%%%%%
%%%%%%%%%%%%%%%%%%%%%%%%%%%%%%%%%%%%%%%%%%%%%%%%%%%%%%%%%%%%%%%%%%%%%%%%%%%%%%

\subsection{Making a link between empirical processes and sketching error} \label{sec:proof:process}

The main idea of our analysis is to view $Z_t$ as the maximum of an empirical process, 
which we now define. Recall the notation 
$$\D_i=\A^T\s_i\s_i^T \B \text{ \ \ and \ \  } \M_i \; = \; \A^T (\s_i \s_i^T - \I_n) \B.$$ 
Let $\GB_t(\cdot)$ be the empirical process that acts on  \emph{linear} functions $f: \RB^{d\times d'}\to \RB$, according to
\begin{equation*}
\GB_t(f) 
\; := \; \frac{1}{\sqrt{t}} \sum_{i=1}^t \Big(f(\D_i) - f(\A^T \B)\Big)
\; = \; \frac{1}{\sqrt{t}} \sum_{i=1}^t f(\M_i).
\end{equation*}
For future reference, we also define the corresponding bootstrap process
\begin{equation*}
\GB_t^{\star}(f)
\; := \; \frac{1}{\sqrt{t}} \sum_{i=1}^t \xi_i \cdot \Big( f(\D_i)-f \big(\A^T \S^T \S \B \big)\Big),
\end{equation*}
where $\xi_1,\dots,\xi_t$ are i.i.d.\ $\NM(0,1)$ and independent of $\S$.

Next, we define a certain collection $\FM$ of linear functions from $\RB^{d\times d'}$ to $\RB$.
Let $j_1 \in [d]$, $j_2\in[d']$, $s \in \{-1, 1\}$, and $\j :=(j_1,j_2,s)$. Then, for any matrix $\W\in\RB^{d\times d'}$, we put
\begin{equation*}
f_{\j}(\W ) 
\; := \; s \cdot \tr \big(\C_{\j}^T\W \big) ,
\end{equation*}
where $\C_{\j} :=  s \, \e_{j_1} \e_{j_2}^T \in \RB^{d\times d'}$
and $\e_{j_1} \in \RB^d$, \ $\e_{j_2}\in\RB^{d'}$ are standard basis vectors. In words, the function $f_{\j}$ merely picks out the $(j_1,j_2)$ entry of $\W$, and multiplies by a sign $s$.
Likewise, let $\JM$ be the collection of all the triples $\j$, and define the class of linear functions
$$\FM \; := \big\{f_{\j}\ | \ \j \in \JM \big\}.$$
Clearly, $\card (\FM ) = 2 dd'$. 
Under this definition, it is simple to check that $Z_t$ and $Z_t^{\star}$, defined in equations \eqref{ztdef} and \eqref{ztstardef}, can be expressed as
\begin{eqnarray*}
	Z_t \; = \;  \max_{f_{\j}\in\FM}\, \GB_t(f_{\j}), 
	\qquad \textrm{and} \qquad
	Z_t^{\star} \; = \; \max_{f_{\j}\in\FM} \, \GB_t^{\star}(f_{\j}).
\end{eqnarray*}

%%%%%%%%%%%%%%%%%%%%%%%%%%%%%%%%%%%%%%%%%%%%%%%%%%%%%%%%%%%%%%%%%%%%%%%%%%%%%%
%%%%%%%%%%%%%%%%%%%%%%%%%%%%%%%%%%%%%%%%%%%%%%%%%%%%%%%%%%%%%%%%%%%%%%%%%%%%%%

\subsection{Statements of the approximation results} \label{sec:proof:lemmas}

Theorems~\ref{thm:main} and~\ref{thm:main2} are obtained by combining the following two results (Propositions~\ref{prop:gaussian} and~\ref{prop:bootstrap}) via the triangle inequality. In essence, these results are based on a comparison with the maximum of a certain Gaussian process. 
More specifically, let $\GB:\FM \to \RB$ be a zero-mean Gaussian process 
whose covariance structure is defined according to
\begin{align}\label{eq:proof:gaussiancovariance}
 \EB \big [\GB(f_{\j}) \, \GB(f_{\k}) \big] 
& = \; \cov \big( f_{\j}(\D_1), \, f_{\k}(\D_1) \big) \nonumber \\
& = \; \EB \Big[f_{\j}(\D_1) \, f_{\k}(\D_1)\Big]- f_{\j}(\A^T \B) \, f_{\k}(\A^T \B),
\end{align}
for all $\j,\k\in\JM$.
In turn, define the following random variable as the the maximum of this Gaussian process,
\begin{equation*}
Z \; := \; \max_{f_{\j} \in \FM} \, \GB(f_{\j}).
\end{equation*}

In order to handle the case of SRHT matrices,  define another zero-mean Gaussian process $\tilde{\GB}:\FM\to\RB$ (conditionally on a fixed realization of $\D_n^{\circ})$ to have its covariance structure given by
\begin{align}\label{eq:proof:gaussianconditionalcovariance}
 \EB \big [\tilde{\GB}(f_{\j}) \, \tilde{\GB}(f_{\k}) \big| \D_n^{\circ} \big] 
& = \; \cov \big( f_{\j}(\D_1), \, f_{\k}(\D_1) \big| \D_n^{\circ}\big) \nonumber \\
& = \; \EB \Big[f_{\j}(\D_1) \, f_{\k}(\D_1)\Big| \D_n^{\circ}\Big]- f_{\j}(\A^T \B) \, f_{\k}(\A^T \B),
\end{align}
and let $\tilde{Z}$ denote the maximum of the process $\tilde{\GB}$,
\begin{equation*}
\tilde{Z} \; := \; \max_{f_{\j} \in \FM} \, \tilde{\GB}(f_{\j}).
\end{equation*}

\noindent We are now in position to state the approximation results.

\begin{proposition}[Gaussian approximation]\label{prop:gaussian}
	Under Assumption~\ref{assumptions} (a), the following bound holds,
	\begin{equation*}
		d_{\text{\emph{LP}}}\big(\mathcal{L}(Z_t)\, , \,\mathcal{L}(Z)\big) 
	\; \leq \; \frac{ c \cdot \nu(\A,\B)^{3/4}\cdot \sqrt{\log(d)} }{ t^{1/8} }.
	\end{equation*}
	Under Assumption~\ref{assumptions} (b), the following bound holds,
	\begin{equation*}
		d_{\text{\emph{LP}}}\big(\mathcal{L}(Z_t)\, , \,\mathcal{L}(Z)\big) 
	\; \leq \; \frac{ c \cdot (\|\A\|_F\|\B\|_F)^{3/4}\cdot \sqrt{\log(d)} }{ t^{1/8} }.
	\end{equation*}	
		Under Assumption~\ref{assumptions} (c), the following bound holds with probability at least $1-c/n$
	\begin{equation*}
		d_{\text{\emph{LP}}}\big(\mathcal{L}(Z_t)\, , \,\mathcal{L}(\tilde{Z}|\D_n^{\circ})\big) 
	\; \leq \; \frac{ c \cdot \nu(\A,\B)^{3/4}\cdot (\log(n))^{3/4}\cdot \sqrt{\log(d)} }{ t^{1/8} }.
	\end{equation*}
\end{proposition}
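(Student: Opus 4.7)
The plan is to recognize that each case amounts to a high-dimensional CLT for the maximum coordinate of a normalized sum of (conditionally) independent, zero-mean random vectors. Write $X_i = \big(f_{\j}(\M_i)\big)_{\j \in \JM} \in \RB^{p}$ with $p = 2dd'$, so that $Z_t = \max_{\j} \big(\frac{1}{\sqrt{t}}\sum_{i=1}^t X_i\big)_{\j}$, and let $Y_1,\dots,Y_t$ be i.i.d.\ Gaussian vectors with the same covariance as the $X_i$'s so that $Z \stackrel{d}{=} \max_{\j}\big(\frac{1}{\sqrt{t}}\sum_i Y_i\big)_{\j}$. The strategy is to bound $|\EB g(Z_t) - \EB g(Z)|$ for all bounded Lipschitz test functions $g$, and then convert this to an LP bound via a standard Strassen-type lemma (so anti-concentration is avoided entirely, as the authors emphasize).

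The core step will be a smoothed-Lindeberg / coupling argument. Approximate the maximum by the soft-max $F_{\beta}(x) = \beta^{-1}\log \sum_{\j} e^{\beta x_{\j}}$, which is $(\log p)/\beta$-close to $\max$ and has bounded second and third derivatives of the right form. Replace $X_i$ by $Y_i$ one index at a time and apply a third-order Taylor expansion of $g \circ F_\beta$; this is the technique originally developed in the CCK line of work. The resulting error decomposes into $\mathrm{Lip}(g)\cdot \beta^{2}\cdot \sum_i \EB\|X_i\|_\infty^3 /t^{3/2}$ plus the smoothing error $\mathrm{Lip}(g)\log(p)/\beta$. Optimizing over $\beta$ gives the $t^{-1/8}$ rate, where $\sqrt{\log d}$ comes from $\log p \asymp \log(dd')$ and the cube root of the moment bound appears through the optimization.

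The problem-specific work is to bound $M_3 := \max_i \EB\|X_i\|_\infty^3$, which is what controls the constant multiplying $t^{-1/8}$. In case (a), each entry $f_{\j}(\M_1) = \e_{j_1}^T \A^T(\s_1\s_1^T - \I_n)\B \e_{j_2}$ is a quadratic form in sub-Gaussian variables and hence sub-exponential; using Orlicz-norm bounds (Lemma~\ref{lem:orlicz}) together with $\|\A\e_{j_1}\|_2^2\|\B\e_{j_2}\|_2^2 \le \nu(\A,\B)^2$ gives $\EB\|X_1\|_\infty^3 \lesssim \nu(\A,\B)^{3/2}\poly(\log d)$, producing the $\nu(\A,\B)^{3/4}$ factor. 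In case (b), length sampling makes $X_1$ almost surely bounded entrywise by $\|\A\|_F\|\B\|_F$ thanks to the crucial cancellation $p_i^{-1} \cdot \|\A\e_{j_1}\|_\cdot \|\B\e_{j_2}\|_\cdot \le (\sum_j \|\A_j\|\|\B_j\|)^2 \le \|\A\|_F^2\|\B\|_F^2$, which yields the $(\|\A\|_F\|\B\|_F)^{3/4}$ factor.

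For case (c), the plan is to condition on $\D_n^{\circ}$ at the outset; this decouples the rows of $\S = \PP(\frac{1}{\sqrt{n}}\H_n)\D_n^{\circ}$, making the $X_i$'s conditionally i.i.d., so the same smoothed-Lindeberg argument applies to $\mathcal{L}(Z_t|\D_n^\circ)$ vs.\ $\mathcal{L}(\tilde Z|\D_n^\circ)$. The extra $\log(n)^{3/4}$ factor in the bound appears because the conditional analogue of $\|X_1\|_\infty$ depends on $\max_i \|\A^T \tfrac{1}{\sqrt{n}}\H_n \D_n^\circ \e_i\|_\infty$, i.e.\ the worst entry of a randomly-signed Hadamard transform of $\A$'s columns. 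A Hoeffding bound plus a union bound over the $n\cdot d$ entries gives this quantity $\lesssim \sqrt{\log(n)}\cdot \nu(\A,\B)$ with probability at least $1-c/n$, and the cube of this bound, propagated through the optimization in $\beta$, produces the stated $\log(n)^{3/4}$. The main technical obstacle I anticipate is this SRHT moment bound — in particular, ensuring that the high-probability event on which $\D_n^{\circ}$ is ``well-behaved'' can be combined cleanly with the Lindeberg argument without losing additional logarithmic factors, and verifying that the bounded-Lipschitz-to-LP conversion really is free of any anti-concentration input.
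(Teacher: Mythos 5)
Your proposal follows the same CCK smoothed-Lindeberg philosophy that underlies the paper's proof, but the paper invokes the finished product---Theorem~3.1 of the coupling paper by Chernozhukov, Chetverikov, and Kato (\texttt{cckIncreasing})---as a black box, whereas you are sketching a re-derivation of that theorem. The black-box form gives, for any Borel set $\AM$ and any $\delta>0$,
\[
\PB(Z_t\in\AM)\ \leq\ \PB(Z\in\AM^{c\delta}) \;+\; \frac{c\,\log^2(d)}{\delta^3\sqrt{t}}\big(L_t + K_t(\delta) + J_t(\delta)\big),
\]
with $L_t := \max_{\j}\frac{1}{t}\sum_i \EB|f_{\j}(\M_i)|^3$ (a maximum of \emph{per-coordinate} third moments, not a third moment of the maximum) and with $K_t,J_t$ truncated max-moment tails that are shown to be subdominant. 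Setting the second term equal to $\delta$ gives $\delta^4 \asymp \log^2(d)\,L_t/\sqrt{t}$, whence $\delta\asymp\log^{1/2}(d)\,L_t^{1/4}/t^{1/8}$. This is the source of the $t^{-1/8}$ rate and of the exponent $3/4$ on the scale parameter (since $L_t\lesssim\nu(\A,\B)^3$ in case (a) and $L_t\lesssim(\|\A\|_F\|\B\|_F)^3$ in case (b)).

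There are three concrete gaps in your sketch. First, your proposed error decomposition $\beta^2\sum_i\EB\|X_i\|_\infty^3/t^{3/2} + \log(p)/\beta$, optimized over $\beta$ alone, gives a rate of $t^{-1/6}$, not $t^{-1/8}$. The exponent $1/8$ in the CCK theorem requires the $\delta^{-3}$ dependence inside the coupling bound \emph{combined} with the LP closure condition (solve for $\delta$ in $\delta^{-3}\cdot\text{stuff}/\sqrt{t}\le\delta$), which is not captured by your single-parameter balance. Second, your moment bound for case (a), $\EB\|X_1\|_\infty^3\lesssim\nu(\A,\B)^{3/2}$, is incorrect in both the quantity and the power: what is needed is $\max_{\j}\EB|f_{\j}(\M_1)|^3\lesssim\nu(\A,\B)^3$, which follows from $\|f_{\j}(\M_1)\|_{\psi_1}\lesssim\nu(\A,\B)$ (this is Lemma~\ref{lem:quadnorm}, a Hanson-Wright consequence); plugging $\nu^{3/2}$ into the $\delta\asymp L_t^{1/4}$ formula would give $\nu^{3/8}$, not the stated $\nu^{3/4}$. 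Third, for the SRHT case your Hoeffding bound is the wrong tool and gives the wrong exponent: the entries $f_{\j}(\D_1)$ conditional on $\D_n^{\circ}$ are \emph{quadratic} forms $\boldsymbol\varepsilon_l^T(\B\C_{\j}^T\A^T)\boldsymbol\varepsilon_l$ in Rademacher variables, so one needs Hanson-Wright, and since these quadratic forms are only sub-exponential (not sub-Gaussian) the union bound over $l\in[n]$ to probability $c/n$ costs a factor of $\log(n)$, not $\sqrt{\log n}$. Your $\sqrt{\log n}$ would propagate to $(\log n)^{3/8}$ in the final bound rather than the stated $(\log n)^{3/4}$; the paper's conditional $\psi_1$ bound $\|f_{\j}(\D_1)\,|\,\D_n^{\circ}\|_{\psi_1}\lesssim\log(n)\,\nu(\A,\B)$ gives exactly the right exponent.
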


\begin{proposition}[Bootstrap approximation]\label{prop:bootstrap}
	If Assumption~\ref{assumptions} (a) holds, then the following bound holds with probability at least $1-\ts\frac{1}{t}-\ts\frac{1}{dd'}$,
	\begin{equation*}
		d_{\text{\emph{LP}}}\big(\mathcal{L}(Z)\, , \, \mathcal{L}(Z_t^{\star}|\S)\big) 
	\; \leq \;  \frac{ c\cdot\nu(\A,\B)^{1/2}\sqrt{\log(d)} }{ t^{1/8} }.
	\end{equation*}
	If Assumption~\ref{assumptions} (b) holds, then the following bound holds with probability at least $1-\ts\frac{1}{t}-\ts\frac{1}{dd'}$,
	\begin{equation*}
		d_{\text{\emph{LP}}}\big(\mathcal{L}(Z)\, , \, \mathcal{L}(Z_t^{\star}|\S)\big) 
	\; \leq \;  \frac{ c\cdot(\|\A\|_F\|\B\|_F)^{1/2}\sqrt{\log(d)} }{ t^{1/8} }.
	\end{equation*}	If Assumption~\ref{assumptions} (c) holds, then the following bound holds with  probability at least \smash{$1-\ts\frac{1}{t}-\ts\frac{1}{dd'}-\ts\frac{c}{n}$,}
	\begin{equation*}
		d_{\text{\emph{LP}}}\big(\mathcal{L}(\tilde{Z}|\D_n^{\circ})\, , \, \mathcal{L}(Z_t^{\star}|\S)\big) 
	\; \leq \;  \frac{ c\cdot\nu(\A,\B)^{1/2}\cdot \log(n)^{1/2}\cdot\sqrt{\log(d)} }{ t^{1/8} }.
	\end{equation*}
\end{proposition}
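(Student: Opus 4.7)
The plan is to view Proposition~\ref{prop:bootstrap} as a Gaussian-to-Gaussian comparison. Conditional on $\S$, the multiplier process $\GB_t^{\star}$ is exactly a centered Gaussian process indexed by $\FM$, with the (random) covariance
$$\hat{\Sigma}_{\j\k}\ :=\ \frac{1}{t}\sum_{i=1}^{t}\bigl(f_{\j}(\D_i)-f_{\j}(\A\ttop \S\ttop \S\B)\bigr)\bigl(f_{\k}(\D_i)-f_{\k}(\A\ttop \S\ttop \S\B)\bigr),$$
whereas the Gaussian process $\GB$ from Proposition~\ref{prop:gaussian} has the deterministic covariance $\Sigma_{\j\k}=\cov(f_{\j}(\D_1),f_{\k}(\D_1))$. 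Both $Z$ and $Z_t^{\star}\,|\,\S$ are therefore maxima of centered Gaussian vectors of dimension $|\FM|=2dd'$, so the proof reduces to (i) controlling $\|\hat{\Sigma}-\Sigma\|_{\infty}$ with high probability, and (ii) converting such an entrywise covariance bound into an LP bound between the two Gaussian maxima.

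For Step (i), I would decompose $\hat{\Sigma}_{\j\k}-\Sigma_{\j\k}$ as a centered sample average of the i.i.d.\ variables $f_{\j}(\D_i)f_{\k}(\D_i)$ plus a lower-order correction coming from centering at the sample mean $\A\ttop\S\ttop\S\B$ rather than at $\A\ttop\B$. Each summand has the form $\bigl((\A\e_{j_1})\ttop\s_i\s_i\ttop(\B\e_{j_2})\bigr)\bigl((\A\e_{k_1})\ttop\s_i\s_i\ttop(\B\e_{k_2})\bigr)$, i.e.\ a degree-four polynomial in the sub-Gaussian vector $\s_i$ whose coefficient vectors have squared Euclidean norms bounded by $\|\A\ttop\A\|_{\infty}$ and $\|\B\ttop\B\|_{\infty}$. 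By Hanson--Wright type inequalities (or direct fourth-moment bounds together with Lemma~\ref{lem:orlicz}), each summand has $\psi_{1}$-Orlicz norm of order $\nu(\A,\B)^{2}$, and a Bernstein-type concentration combined with a union bound over the $(2dd')^{2}$ pairs yields
$$\|\hat{\Sigma}-\Sigma\|_{\infty}\ \lesssim\ \nu(\A,\B)^{2}\sqrt{\log(dd')/t}$$
on an event of probability at least $1-\tfrac{1}{t}-\tfrac{1}{dd'}$, with the $1/t$ piece absorbing the small centering correction and the $1/(dd')$ piece absorbing the union-bound tail.

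For Step (ii), on the event from Step (i) I would invoke a Gaussian comparison lemma that bounds the LP distance between maxima of two centered Gaussian vectors by a suitable power of the entrywise covariance gap times a polylogarithmic factor in $|\FM|$. Such a lemma can be obtained by coupling the two vectors through a common standard Gaussian and applying a Gaussian maximal inequality together with Markov balancing; the crucial advantage of working in the LP rather than Kolmogorov metric is that no anti-concentration inequality for the maximum of $\GB$ is needed. Plugging the Step (i) bound into this comparison recovers the rate $\nu(\A,\B)^{1/2}\sqrt{\log d}/t^{1/8}$ of case (a). Case (b) follows by redoing Step (i) under length sampling, where each summand is deterministically bounded with natural scale $\|\A\|_{F}\|\B\|_{F}$. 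Case (c) follows by first conditioning on $\D_n^{\circ}$, which makes the rows of $\sqrt{t}\S$ conditionally i.i.d., and then applying a standard Hadamard coherence bound (valid on an event of probability at least $1-c/n$) to show that the sup-norms of the rotated matrices $\tfrac{1}{\sqrt{n}}\H_n\D_n^{\circ}\A$ and $\tfrac{1}{\sqrt{n}}\H_n\D_n^{\circ}\B$ are inflated by at most a $\sqrt{\log n}$ factor, after which the sub-Gaussian argument applies verbatim.

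The main obstacle will be Step (i): obtaining the entrywise covariance concentration with the problem-specific scale $\nu(\A,\B)$ rather than a cruder surrogate such as $\|\A\|_{2}\|\B\|_{2}$, and similarly with $\|\A\|_{F}\|\B\|_{F}$ in the length-sampling case and $\nu(\A,\B)\log n$ in the SRHT case. This requires careful Orlicz-norm bookkeeping for fourth-order polynomials in sub-Gaussian (or sparse, or Hadamard-rotated) vectors, with attention to which column norms of $\A$ and $\B$ appear in the four coefficient vectors, plus a separate control of the bias from re-centering at $\A\ttop\S\ttop\S\B$. Step (ii) is largely a black-box application of Gaussian comparison, but one must consciously use the LP-metric version in order to avoid any anti-concentration hypothesis on the maximum of $\GB$, which is exactly what permits the result to hold without variance lower-bound restrictions on $\A$ and $\B$.
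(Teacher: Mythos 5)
Your overall skeleton is the right one and matches the paper: treat $Z_t^\star\,|\,\S$ and $Z$ as maxima of centered Gaussian vectors of dimension $|\FM|=2dd'$, bound the entrywise covariance gap with high probability (Step~(i)), then convert that gap into an LP bound via a Gaussian comparison result (Step~(ii); the paper uses Theorem~3.2 of \citet{cckIncreasing}, which is exactly the LP-metric comparison you have in mind). Your decomposition of the gap into a fourth-order centered average plus a lower-order re-centering correction also agrees with the paper's $\Delta_t(\S)\leq\Delta_t'(\S)+\Delta_t''(\S)^2+2\nu(\A,\B)\Delta_t''(\S)$.

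The concrete gap is in the concentration argument for the fourth-order terms. You assert that each summand $Q_{i,\j,\k}=f_{\j}(\D_i)f_{\k}(\D_i)-\EB[f_{\j}(\D_i)f_{\k}(\D_i)]$ ``has $\psi_1$-Orlicz norm of order $\nu(\A,\B)^2$'' and then apply ``Bernstein-type concentration.'' That premise is false. Each factor $f_{\j}(\D_i)=\s_i^T(\A\e_{j_1})(\B\e_{j_2})^T\s_i$ is a quadratic form in a sub-Gaussian vector and hence sub-exponential, but the product $f_{\j}(\D_i)f_{\k}(\D_i)$ is a degree-four polynomial in $\s_i$; its tails decay like $e^{-c\sqrt{u}}$, not $e^{-cu}$, so its $\psi_1$-norm is infinite (the simplest example is $X\sim\NM(0,1)$: $X^2$ is sub-exponential, $X^4$ has no moment generating function). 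Consequently, Lemma~\ref{lem:bernstein} cannot be applied to $\Delta_t'(\S)$, and the bound $\nu(\A,\B)^2\sqrt{\log(dd')/t}$ you propose does not follow; it even has the wrong logarithmic scaling. The paper treats this as the crux of the proof (see the footnote attached to the definition of $Q_{i,\j,\k}$) and circumvents it by using Rosenthal's inequality with optimal dependence on $p$ (Lemma~\ref{lem:rosenthal}) to control $\|\frac1t\sum_iQ_{i,\j,\k}\|_p$ with only finite moments, then choosing $p=\log(\card\FM)$ and applying Markov's inequality. This trades the $\sqrt{\log}$ you hoped for into a full $\log$ on the $\Delta_t'$ term, which is why the paper's bound on $\Delta_t(\S)$ is $\nu(\A,\B)^2\log(\card\FM)/\sqrt t$ rather than $\nu(\A,\B)^2\sqrt{\log(\card\FM)/t}$. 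Your Bernstein argument is only valid for the first-order term $\Delta_t''(\S)$, where the summands genuinely are sub-exponential quadratic forms, and that is exactly how the paper handles $\Delta_t''$. The remaining pieces of your outline (cases~(b) and~(c), and Step~(ii)) are fine in spirit, though in case~(c) the paper directly bounds the conditional $\psi_1$-norm of $f_{\j}(\D_1)$ given $\D_n^\circ$ via Hanson--Wright on the Rademacher quadratic forms, rather than through a coherence bound on the rotated matrices; both routes give the same $\log(n)$ inflation of the scale parameter.
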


%%%%%%%%%%%%%%%%%%%%%%%%%%%%%%%%%%%%%%%%%%%%%%%%%%%%%%%%%%%%%%%%%%%%%%%%%%%%%%
%%%%%%%%%%%%%%%%%%%%%%%%%%%%%%%%%%%%%%%%%%%%%%%%%%%%%%%%%%%%%%%%%%%%%%%%%%%%%%

\section{Proof of Proposition~\ref{prop:gaussian}, part (a)} \label{app:prop:gaussian}

Let $\AM \subset\RB$ be a Borel set. Due to Theorem~3.1 from the paper \citet{cckIncreasing},  we have for any $\delta>0$,
\begin{equation}\label{discretecomparison}
\PB(Z_t \in \AM) \leq \PB(Z\in \AM^{ c\delta})+  \ts\frac{c \log^2(d)}{\delta^3 \sqrt{t}}\Big(L_t+K_t(\delta)+J_t(\delta))\Big),
\end{equation}
where we define the following non-random quantities 
\begin{eqnarray}
L_t  
& := & \max_{f_{\j}\in \FM} \frac{1}{t} \sum_{i=1}^t \EB \Big[ \big|f_{\j}(\M_i) \big|^3 \Big]\label{ltdef} , \\
K_t(\delta)
&:= & \EB\Bigg[ \max_{f_{\j}\in\FM} |f_{\j}(\M_1)|^3  
\cdot 1\Big\{\max_{f_{\j}\in\FM}|f_{\j}(\M_1)|>\ts\frac{\delta \sqrt{t}}{\log(\text{card}(\FM))}\Big\}\Bigg]\label{ktdef} ,\\
J_t(\delta )
&:=& \EB\Bigg[ \max_{f_{\j}\in\FM} |\GB(f_{\j})|^3  
\cdot 1\Big\{\max_{f_{\j}\in\FM}|\GB(f_{\j})|>\ts\frac{\delta \sqrt{t}}{\log(\text{card}(\FM))}\Big\}\Bigg]\label{jtdef}.
\end{eqnarray}
 The remainder of the proof consists in bounding each of these quantities, and we will establish the following two bounds for all $\delta>0$,
\begin{eqnarray}
L_t 
& \leq & c\,\nu(\A,\B)^3,
\label{lbound} \\
K_t(\delta) +J_t(\delta)
& \leq &  c \Big(\ts\frac{\delta \sqrt{t}}{\log(d)}
+\log(d) \,\nu(\A,\B)\Big)^3 \cdot
\exp \Big(-\ts\frac{\delta \sqrt{t}}{c \ \nu(\A,\B) \log^2(d)} \Big).
\label{jkbound}
\end{eqnarray}
Recall also that $\text{card}(\FM)=2dd'$, and  $d\asymp d'$ under Assumption~\ref{assumptions}.

For the moment, we set aside the task of proving these bounds, and consider the choice of $\delta$. 
There are two constraints that we would like $\delta$ to satisfy. 
First, we would like to choose $\delta$ so that the bounds on $L_t$
 and $(K_t(\delta)+J_t(\delta))$ are of the same order. In particular, we desire
\begin{equation}\label{firstconstraint}
K_t(\delta)+J_t(\delta)
\; \leq \; c\, \nu(\A,\B)^3.
\end{equation}
Second, with regard to line~\eqref{discretecomparison} we would like $\delta$ to solve the equation
\begin{equation}\label{solvedelta}
\delta
\; = \; \ts\frac{1}{\delta^3}\ts\frac{ \log^2(d)\,\nu(\A,\B)^3}{ \sqrt{t}},
\end{equation}
so that the second term in line~\eqref{discretecomparison} is of order $\delta$.
The idea is that if $\delta$ satisfies both of the conditions~\eqref{firstconstraint} and~\eqref{solvedelta}, then the definition of the $d_{\text{LP}}$ metric and line~\eqref{discretecomparison} imply
\begin{equation*}
d_{\text{LP}}(\mathcal{L}(Z),\mathcal{L}(Z_t)) 
\; \leq \; c\, \delta. 
\end{equation*}
To proceed, consider the choice 
\begin{equation*}
\delta_0 \; := \; \ts\frac{\log^{1/2}(d) \,\nu(\A,\B)^{3/4}}{ t^{1/8}},
\end{equation*}
which clearly satisfies line~\eqref{solvedelta}. 
Futhermore, it can be checked that $\delta_0$ also satisfies the constraint~\eqref{firstconstraint} 
under Assumption~\ref{assumptions} (a). 
(The details of verifying this are somewhat tedious and are given in Lemma~\ref{lem:verify} in Appendix~\ref{app:technical}.) 

To finish the proof, it remains to establish the bounds~\eqref{lbound} and~\eqref{jkbound}. To handle $L_t$, note that\footnote{In this step, we use the assumption that $\|\sqrt{t}S_{i,j}\|_{\psi_2}\leq c$ for all $i$ and $j$.}
\begin{eqnarray}
\EB[|f_{\j}( \M_i)|^3] 
& = & \|f_{\j}(\M_1)\|_3^3 
\nonumber  \\[0.2cm]
&\leq & c \|f_{\j}(\M_1)\|_{\psi_1}^3, \ \ \  \ \ \ \  \ \ \ \  \ \ \ \  \text{ (Lemma~\ref{lem:orlicz})}
\label{fjorlicz} \\[0.2cm]
& \leq & c\Big(\ts\frac{\|\B \C_{\j}^T \A^T \|_{F}^2}{\|\B \C_{\j}^T \A^T\|_2}\Big)^{3}, 
\ \ \ \ \ \ \ \ \ \ \ \  \text{ (Lemma~\ref{lem:quadnorm})} 
\nonumber \\[0.2cm]
&=&  c\Big(\ts\|\B \C_{\j}^T \A^T \|_F\Big)^{3} \ \ \  \ \ \ \ \ \ \ \  \text{ (since $\|\H\|_2=\|\H\|_F$ when $\H$ is rank-1)}\nonumber\\[0.2cm]
&=& c\Big(\tr\big(\B \C_{\j}^T \A^T \A \C_{\j} \B^T\big)\Big)^{3/2}\nonumber\\[0.2cm]
& = & c\Big(\e_{j_1}^T\A^T\A \e_{j_1}\cdot\e_{j_2}^T \B^T\B \e_{j_2} \Big)^{3/2}\nonumber\\[0.2cm]
&\leq &c\ \nu(\A,\B)^3,
\label{fjorliczend} 
\end{eqnarray}
which proves the claimed bound in line~\eqref{lbound}.\\

Next, regarding $K_t(\delta)$, let us consider the random variable 
$$\eta:=\max_{f_{\j}\in\FM} |f_{\j}(\M_1)|.$$
It follows from Lemma~\ref{lem:orlicz} (part \ref{orlicztail}) and Lemma~\ref{lem:jkbounds} in Appendix~\ref{app:technical} that $K_t(\delta)$ can be bounded in terms of the Orlicz norm $\|\eta\|_{\psi_1}$,
\begin{equation*}
\begin{split}
K_t(\delta) & \leq c\Big(\ts\frac{\delta \sqrt{t}}{\log(\card(\FM))}+\|\eta\|_{\psi_1}\Big)^3\cdot\exp(-\ts\frac{\delta \sqrt{t}}{\|\eta\|_{\psi_1}\, \log(\card(\FM)}).
\end{split}
\end{equation*}
To handle $\|\eta\|_{\psi_1}$, it follows from Lemma~\ref{lem:orlicz} (part~\ref{partmax}), that
\begin{equation}\label{psibound}
\| \eta \|_{\psi_1} \leq c \log(\card(\FM)) \cdot \max_{f_{\j}\in\FM}\|f_{\j}(\M_1)\|_{\psi_1}.
\end{equation}
Furthermore, due to the earlier calculation starting at line~\eqref{fjorlicz} above, 
\begin{equation}\label{psinormquadform}
\begin{split}
\|f_{\j}(\M_1)\|_{\psi_1} & \leq c\ \nu(\A,\B).
\end{split}
\end{equation}
 Combining the last few steps, we conclude that
\begin{equation}\label{kbound}
\begin{split}
K_t(\delta) 
& \leq c\Big(\ts\frac{\delta \sqrt{t}}{\log(\text{card}(\FM))}+\log(\text{card}(\FM))\nu(\A,\B)\Big)^3\cdot
\exp \Big(-\ts\frac{\delta \sqrt{t}}{c \nu(\A,\B) \log^2(\text{card}(\FM))} \Big).
\end{split}
\end{equation}

Lastly, we turn to bounding $J_t(\delta)$. Fortunately, much of the argument for bounding $K_t(\delta)$ can be carried over. Specifically, consider the random variable  
$$\zeta:=\max_{f_{\j}\in\FM} |\GB (f_{\j})|.$$
Lemma~\ref{lem:jkbounds} in Appendix~\ref{app:technical} shows that $J_t(\delta)$ can be bounded in terms of $\|\zeta\|_{\psi_1}$,
\begin{equation*}
J_t(\delta) \leq  c\Big(\ts\frac{\delta \sqrt{t}}{\log(\text{card}(\FM))}+\|\zeta\|_{\psi_1}\Big)^3
\cdot \exp \Big(-\ts\frac{\delta \sqrt{t}}{\|\zeta\|_{\psi_1}\, \log(\text{card}(\FM))} \Big).
\end{equation*}
Proceeding in a way
 that is similar to the bound for $K_t(\delta)$, it follows from part~(\ref{partmax}) of Lemma~\ref{lem:orlicz} that
\begin{equation*}
\begin{split}
\|\zeta\|_{\psi_1}&\leq c\log(\card(\FM)) \cdot \max_{f_{\j}\in\FM} \|\GB (f_{\j})\|_{\psi_1}. \\
\end{split}
\end{equation*}
Furthermore, for every $f_{\j}\in\FM$, the facts in Lemma~\ref{lem:orlicz}  imply
\begin{eqnarray}
\|\GB (f_{\j})\|_{\psi_1} &\leq & c\, \|\GB (f_{\j})\|_{\psi_2}\nonumber\\[0.2cm]
&\leq & c \sqrt{\var(\GB (f_{\j}))}\nonumber\\[0.2cm]
& = & c\sqrt{\var(f_{\j}(\D_1))} \ \ \text{ (by definition of $\mathbb{G}$)}\nonumber\\[0.2cm]
&\leq & c\|f_{\j}(\D_1)\|_2\nonumber \\[0.2cm]
&\leq & c\|f_{\j}(\D_1)\|_{\psi_1} \ \ \label{fjd1}\\[0.2cm]
&\leq & c\, \|f_{\j}(\D_1)-\EB[f_{\j}(\D_1)\|_{\psi_1} +c\,|\EB[f_{\j}(\D_1)]|\nonumber\\[0.2cm]
&= & c\|f_{\j}(\M_1)\|_{\psi_1}+c\,|\tr(\B\C_{\j}^T\A)|\nonumber\\[0.2cm]
&\leq  & c \; \nu(\A,\B), \label{endfjd1}
\end{eqnarray}
where the last step follows from the bounds~\eqref{fjorlicz} through~\eqref{fjorliczend}, and the fact that $|\tr(\B\C_{\j}^T\A)|\leq \nu(\A,\B)$.
Consequently, up to a constant factor, $J_t(\delta)$ satisfies the same bound as $K_t(\delta)$ given in line~\eqref{kbound}, and this proves the claim in line~\eqref{jkbound}.

\hfill \BlackBox

%%%%%%%%%%%%%%%%%%%%%%%%%%%%%%%%%%%%%%%%%%%%%%%%%%%%%%%%%%%%%%%%%%%%%%%%%%%%%%
%%%%%%%%%%%%%%%%%%%%%%%%%%%%%%%%%%%%%%%%%%%%%%%%%%%%%%%%%%%%%%%%%%%%%%%%%%%%%%
%%%%%%%%%%%%%%%%%%%%%%%%%%%%%%%%%%%%%%%%%%%%%%%%%%%%%%%%%%%%%%%%%%%%%%%%%%%%%%
%%%%%%%%%%%%%%%%%%%%%%%%%%%%%%%%%%%%%%%%%%%%%%%%%%%%%%%%%%%%%%%%%%%%%%%%%%%%%%

\section{Proof of Proposition~\ref{prop:bootstrap}, part (a)} \label{app:prop:bootstrap}

We will show there is a set of ``good'' sketching matrices 
$\SM\subset \RB^{t \times n}$ with the following two properties. 
First, a randomly drawn sketching matrix $\S$ is likely to fall in $\SM$. Namely,
\begin{equation}\label{firstprop}
\PB(\S\in\SM)\geq 1-\ts\frac{1}{t}.
\end{equation}
Second, whenever the event $\{\S\in \SM\}$ occurs, we have the following bound for any $\delta>0$ 
and any Borel set $\AM \subset \RB$,
\begin{equation}\label{secondprop}
\PB
\Big(\max_{f_{\j}\in \FM} \GB_t^{\star}(f_{\j})\in \AM \Big| \S\Big) 
\; \leq \; \PB\Big(\max_{f_{\j}\in\FM}\GB(f_\j)\in \AM^{\delta}\Big)
+ \ts\frac{c\,\nu(\A,\B)\cdot\, \log(\card (\FM))}{\delta \, t^{1/4}}.\\[0.2cm]
\end{equation}
If we set $\delta$ to the particular choice
$\delta_0:=t^{-1/8}\sqrt{\nu(\A,\B)\cdot\,\log(\card(\FM))}$, 
then $\delta_0$ solves the equation
$$\delta_0
\; = \; \ts\frac{\nu(\A,\B)\cdot \,\log(\card(\FM))}{\delta_0 \, t^{1/4}}.$$
Consequently, by the definition of the $d_{\text{LP}}$ metric, 
this implies that whenever the event $\{\S\in\SM\}$ occurs, we have
\begin{equation}
d_{\text{LP}}\big(\mathcal{L}(Z_t^{\star}|\S)\, , \, \mathcal{L}(Z)\big) 
\; \leq \;  c\, t^{-1/8}\,\sqrt{\nu(\A,\B)\cdot\log(\card(\FM))},
\end{equation}
and this implies the statement of Proposition~\ref{prop:bootstrap}.

To proceed with the main argument of constructing $\SM$ and demonstrating the two properties~\eqref{firstprop} and~\eqref{secondprop}, it is helpful to think of $\GB_t^{\star}$ (conditionally on $\S$) 
and $\GB$ as Gaussian vectors of dimension $\card(\FM)=2dd'$. 
From this point of view, we can compare the maxima of these vectors 
using a result due to~\citet[Theorem 3.2]{cckIncreasing}.
Under our assumptions, this result implies that for any realization of $\S$, any number $\delta>0$, 
and any Borel set $\AM\subset \RB$, we have
\begin{eqnarray*}
\PB\Big(\max_{f_{\j}\in \FM} 
\GB_t^{\star}(f_{\j})\in \AM \Big| \S\Big)  
& \leq & \PB\Big(\max_{f_{\j}\in\FM}\GB(f_\j)\in \AM^{\delta}\Big)+\ts\frac{c\sqrt{ \Delta_t (\S) \log(\card(\FM))}}{\delta},
\end{eqnarray*}
where we define the following function of $\S$,
\begin{eqnarray}\label{deltadef}
\Delta_t(\S)
& := &
\max_{(f_{\j},f_{\k})\in\FM\times \FM} \bigg|
\EB \big[\GB_t^{\star}(f_{\j})\GB_t^{\star}(f_{\k})\big| \S\big] 
-\EB \big[\GB(f_{\j})\GB(f_{\k})\big] \bigg|.
\end{eqnarray}
When referencing Theorem 3.2 from the paper~\citet{cckIncreasing}, 
note that $\EB [\GB(f_{\j})]=0$ and $\EB [\GB_t^{\star}(f_{\j})| \S]=0$ 
for all $f_{\j}\in\FM$. 
To interpret $\Delta_t(\S)$, 
it may be viewed as the $\ell_{\infty}$-distance between 
the covariance matrices associated with $\GB_t^{\star}$ (conditionally on $\S$) and $\GB$.

Using the above notation, we define the set of sketching matrices $\SM\subset\RB^{n \times t}$ according to
\begin{equation}\label{sdef}
\S\in\SM  \qquad
\text{ if and only if } \qquad 
\Delta_t (\S)\leq \ts\frac{c}{\sqrt{t}}\cdot \nu(\A,\B)^2\cdot \log(\card (\FM)).
\end{equation} 
Based on this definition, it is simple to check that the proof is reduced to showing that the event $\{\S\in\SM\}$ occurs with probability at least $1-\ts\frac{1}{t}-\ts\frac{1}{dd'}$. This is guaranteed by the lemma below.

\hfill \BlackBox

\begin{lemma}
	Suppose Assumption~\ref{assumptions} (a) holds. 
	Then, the event 
	\begin{equation*}
	\Delta_t(\S)
	\; \leq  \; \frac{c}{\sqrt{t}}\cdot \nu(\A,\B)^2\cdot \log \big( \card (\FM) \big)
	\end{equation*}
	occurs with probability at least $1-\ts\frac{1}{t}-\ts\frac{1}{dd'}$.
\end{lemma}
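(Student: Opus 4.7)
The plan is to identify $\Delta_t(\S)$ as an $\ell_\infty$-type discrepancy between a sample covariance matrix and its population counterpart, then bound it by a union bound combined with concentration inequalities for sums of sub-exponential and sub-Weibull variables. Writing $Y_{i,\j} := f_{\j}(\M_i) = f_{\j}(\D_i) - \EB[f_{\j}(\D_1)]$ and using the fact that $\xi_1,\dots,\xi_t$ are i.i.d.~standard Gaussian independent of $\S$, the conditional bootstrap covariance reduces to the usual sample covariance of the $Y_{i,\j}$, namely
\[
\EB\bigl[\GB_t^{\star}(f_{\j})\GB_t^{\star}(f_{\k})\,\big|\, \S\bigr] \;=\; \frac{1}{t}\sum_{i=1}^t Y_{i,\j}Y_{i,\k} \;-\; \bar{Y}_{\j}\bar{Y}_{\k},
\]
where $\bar{Y}_{\j} := \frac{1}{t}\sum_i Y_{i,\j}$. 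Since $\EB[\GB(f_{\j})\GB(f_{\k})] = \EB[Y_{1,\j}Y_{1,\k}]$ by definition of $\GB$, the triangle inequality gives
\[
\Delta_t(\S)\;\leq\;\underbrace{\max_{\j,\k}\left|\tfrac{1}{t}\sum_{i=1}^t Y_{i,\j}Y_{i,\k}-\EB[Y_{1,\j}Y_{1,\k}]\right|}_{T_1} \;+\;\underbrace{\max_{\j,\k}|\bar{Y}_{\j}\bar{Y}_{\k}|}_{T_2},
\]
and it suffices to control $T_1$ and $T_2$ separately.

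For the Orlicz-norm inputs, I would recycle the bound already established in the preceding appendix (see the chain culminating in line~\eqref{psinormquadform}), namely $\|Y_{1,\j}\|_{\psi_1}\leq c\,\nu(\A,\B)$ uniformly in $\j\in\JM$, and then invoke Lemma~\ref{lem:orlicz} to deduce $\|Y_{1,\j}Y_{1,\k}\|_{\psi_{1/2}}\leq c\,\nu(\A,\B)^2$. For $T_1$, I would apply a Bernstein-type concentration inequality for sums of i.i.d.~$\psi_{1/2}$ variables to each fixed pair $(\j,\k)$, take a union bound over the $\card(\FM)^2\leq 4d^2d'^2$ pairs, and conclude that $T_1\leq c\,\nu(\A,\B)^2\log(\card(\FM))/\sqrt{t}$ with probability at least $1-1/(dd')$. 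For $T_2$, a standard sub-exponential Bernstein inequality applied coordinatewise, combined with a union bound over $\j\in\JM$, yields $\max_{\j}|\bar{Y}_{\j}|\leq c\,\nu(\A,\B)\sqrt{\log(\card(\FM))/t}$ with probability at least $1-1/t$, and squaring gives $T_2\leq c\,\nu(\A,\B)^2\log(\card(\FM))/t$, which is of smaller order than the required bound on $T_1$. A final union bound combines the two events and produces the claimed $1-1/t-1/(dd')$ guarantee.

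The main technical obstacle is the concentration analysis of $T_1$: because the products $Y_{i,\j}Y_{i,\k}$ are sub-Weibull with parameter $1/2$ rather than sub-exponential, the Bernstein tail has two regimes, a Gaussian regime $\exp(-c\,ts^2/\sigma^2)$ for small $s$ and a heavier $\exp(-c\,(ts/K)^{1/2})$ regime for large $s$. The delicate point is verifying that the deviation level at which we apply the inequality (at worst $c\,\nu(\A,\B)^2\sqrt{\log(\card(\FM))/t}$ from the sharper Bernstein, which a fortiori implies the weaker $\log(\card(\FM))/\sqrt{t}$ bound stated in the lemma) lies safely inside the Gaussian regime. A short computation shows this regime extends to all $s\lesssim (\sigma^4/(tK))^{1/3}$, so it is enough to have $\log(\card(\FM))\lesssim t^{1/3}$, which follows comfortably from $\log(\card(\FM))\asymp\log(dd')\lesssim\log(d)$ together with the hypothesis $t\gtrsim\log^5(d)$ of Assumption~\ref{assumptions}~(a). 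Once this regime check is in place, the remainder is routine bookkeeping with the Orlicz-norm facts from Lemma~\ref{lem:orlicz}.
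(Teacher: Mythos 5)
Your decomposition of $\Delta_t(\S)$ is correct — the bootstrap covariance conditional on $\S$ is indeed the sample covariance of the $Y_{i,\j}=f_{\j}(\M_i)$, and your split $\Delta_t\leq T_1+T_2$ is algebraically equivalent to (and slightly cleaner than) the paper's $\Delta_t\leq \Delta_t'(\S)+\Delta_t''(\S)^2+2\nu(\A,\B)\Delta_t''(\S)$, since working with pre-centered variables absorbs the cross-terms $\EB[f_{\j}(\D_1)]R_{t,\k}+\EB[f_{\k}(\D_1)]R_{t,\j}$ that the paper handles explicitly. Your treatment of $T_2$ is essentially identical to the paper's treatment of $\Delta_t''(\S)$ via Lemma~\ref{lem:bernstein}.

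Where you genuinely diverge is in the concentration of $T_1$. The paper deliberately avoids any exponential-tail argument for the products $f_{\j}(\D_i)f_{\k}(\D_i)$: as noted in a footnote, these are degree-$4$ polynomials in sub-Gaussian variables and therefore need not have a moment generating function, which rules out Chernoff bounds. Their route is Rosenthal's inequality with the sharp $p/\log p$ constant (Lemma~\ref{lem:rosenthal}) to control $\|\ts\frac{1}{t}\sum_iQ_{i,\j,\k}\|_p$, followed by Markov's inequality at the carefully chosen exponent $p=\log(\card(\FM))$. You instead invoke a $\psi_{1/2}$ (sub-Weibull) Bernstein tail inequality directly. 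That is a legitimate alternative — sub-Weibull concentration with exponent $\alpha<1$ is derived by truncation/symmetrization rather than by MGFs, so it is not blocked by the footnote's obstruction — and your regime check ($\log(\card(\FM))\lesssim t^{1/3}$, amply implied by $t\gtrsim(\log d)^5$) is arithmetically sound, though in fact unnecessary: the heavy-tail regime alone gives $s\gtrsim K\log^2(\card\FM)/t$, which is already dominated by the target $\nu(\A,\B)^2\log(\card\FM)/\sqrt{t}$ when $t\gtrsim(\log d)^2$.

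Two caveats, which are gaps in supporting references rather than in logic. First, Lemma~\ref{lem:orlicz} does not contain the product rule $\|XY\|_{\psi_{1/2}}\leq c\|X\|_{\psi_1}\|Y\|_{\psi_1}$; that is a true fact (a Young/H\"older-type argument for Orlicz quasi-norms), but you would need to state and prove it, or cite it externally, since the paper only defines $\psi_p$-norms for $p\geq 1$ and $\psi_{1/2}$ is not convex so $\|\cdot\|_{\psi_{1/2}}$ is only a quasi-norm. Second, Lemma~\ref{lem:bernstein} is a sub-exponential ($\psi_1$) Bernstein inequality and does not cover $\psi_{1/2}$ sums; you would need to import the appropriate sub-Weibull concentration result. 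In short: your argument is correct and is a genuine alternative to the paper's Rosenthal-plus-Markov route, trading the paper's sharp-constant Rosenthal machinery for sub-Weibull Bernstein machinery, but both of your concentration ingredients lie outside the paper's technical appendix and must be supplied.
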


\begin{proof}
We begin by bounding $\Delta_t (\S)$ with two other quantities 
(to be denoted $\Delta_t'(\S)$, $\Delta_t''(\S)$) that are easier to bound. Using the fact that $\ts\frac{1}{t}\sum_{i=1}^t \D_i=\A^T\S^T\S \B$ it can be checked that
\begin{eqnarray*}
\EB\big[\GB_t^{\star}(f_{\j})\GB_t^{\star}(f_{\k})\big| \S\big] 
& = & \Big(\ts\frac{1}{t}\tsum_{i=1}^t f_{\j}(\D_i)f_{\k}(\D_i)\Big) 
-\Big( \ts\frac{1}{t}\tsum_{i=1}^t f_{\j}(\D_i)\Big) 
\cdot \Big( \ts\frac{1}{t}\tsum_{i=1}^t f_{\k}(\D_i)\Big).
\end{eqnarray*}
Similarly, recall from line~\eqref{eq:proof:gaussiancovariance} that
\begin{equation*}
\EB[\GB(f_{\j})\GB(f_{\k})] 
\; = \; \EB\Big[f_{\j}(\D_1) f_{\k}(\D_1)\Big]
- \EB[f_{\j}(\D_1)]\cdot \EB[f_{\k}(\D_1)].
\end{equation*}
From looking at the last two lines, it is natural to define the following \emph{zero-mean} random variables for any triple $i,\j,\k$,\footnote{Note that $Q_{i,\j,\k}$ is a multivariate polynomial 
	of degree-4 in the variables $S_{i,j}$, 
	and so techniques based on moment generating functions, like Chernoff bounds, 
	are not generally applicable to controlling $Q_{i,\j,\k}$. 
	For instance, if $X\sim \NM(0,1)$, then the variable $X^4$ does not have a moment generating function. Handling this obstacle is a notable aspect of our analysis.} 
\begin{equation*}
Q_{i,\j,\k}:=f_{\j}(\D_i)f_{\k}(\D_i) - \EB\Big[f_{\j}(\D_i)f_{\k}(\D_i)\Big],
\end{equation*}
and
\begin{equation*}
R_{t,\j}:=\ts\frac{1}{t}\tsum_{i=1}^t \big( f_{\j}(\D_i) -\EB[f_{\j}(\D_i)]\big).
\end{equation*}
Then, some algebra shows that
\small
\begin{align*}
 \EB\big[\GB_t^{\star}(f_{\j}) \, \GB_t^{\star}(f_{\k}) \, \big| \, \S\big] 
-  \EB \big[ \GB(f_{\j}) \, \GB(f_{\k}) \big] &=
  \;  \Big(\ts\frac{1}{t} \sum_{i=1}^t Q_{i,\j,\k}\Big)
- R_{t,\j}\cdot R_{t,\k}\\[0.2cm]
&\ \ \ \ \ \ \ \ -\EB \big[f_{\j}(\D_1) \big]\cdot R_{t,\k}- \EB \big[f_{\k}(\D_1) \big]\cdot R_{t,\j}.
\end{align*}
\normalsize
So, if we define the quantities
\begin{align*}
\Delta_t'(\S)
& \; := \; \max_{(\j,\k)\in \JM\times \JM}\bigg| \ts\frac{1}{t}\displaystyle\sum_{i=1}^t Q_{i,\j,\k}\bigg|,\\[0.4cm]
\Delta_t''(\S)
& \; := \; \max_{\j\in\JM}\big| R_{t,\j}\big|,
\end{align*}
then  
\begin{equation*}
\Delta_t(\S) 
\; \leq \;  \Delta_t'(\S)+\Delta_t''(\S)^2+2\nu(\A,\B) \cdot \Delta_t''(\S),
\end{equation*}
where we have made use of the simple bound $|\EB[f_{\j}(\D_1)]|\leq \|\A^T\B\|_{\infty}\leq \nu(\A,\B)$.
The following lemma establishes tail bounds for $\Delta_t'(\S)$ and $\Delta_t''(\S)$, 
which lead to the statement of Proposition~\ref{prop:bootstrap}.
\end{proof}

\begin{lemma}\label{lem:deltaprime}
	Suppose Assumption~\ref{assumptions} (a) holds. 
	Then, the event
	\begin{equation}\tag{i}\label{eqn:deltaprime}
	\Delta'_t(\S)
	\; \leq \; \frac{c}{\sqrt{t}}
	\cdot \nu(\A,\B)^2
	\cdot \log \big(\card(\FM) \big)
	\end{equation}
	occurs with probability at least $1- \frac{1}{t}$, and  the event
	\begin{equation}\tag{ii}\label{eqn:deltadoubleprime}
	\Delta_t''(\S) 
	\; \leq \; 
	\frac{c}{\sqrt{t}}
	\cdot \nu(\A,\B)
	\cdot \sqrt{\log \big( \card(\FM) \big)}
	\end{equation}
	occurs with probability at least $1- \frac{1}{dd'}$.
\end{lemma}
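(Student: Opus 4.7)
The plan is to handle parts (i) and (ii) separately via concentration inequalities for sums of i.i.d.~random variables, with the key distinction that the summands in (ii) are sub-exponential while those in (i) are sub-Weibull of order $1/2$ (being products of two sub-exponentials).

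For part (ii), observe that $R_{t,\j}$ is an empirical average of the i.i.d.~zero-mean variables $f_{\j}(\M_i)$, which already satisfy $\|f_{\j}(\M_1)\|_{\psi_1} \lesssim \nu(\A,\B)$ by the chain of inequalities culminating in~\eqref{psinormquadform}. A standard Bernstein inequality for sub-exponential variables then yields
\begin{equation*}
\PB\bigl(|R_{t,\j}| > s\bigr) \;\leq\; 2\exp\Bigl(-c\,t\,\min\Bigl(\ts\frac{s^2}{\nu(\A,\B)^2},\; \ts\frac{s}{\nu(\A,\B)}\Bigr)\Bigr).
\end{equation*}
Choosing $s$ equal to the right-hand side of~\eqref{eqn:deltadoubleprime} and taking a union bound over the $\card(\JM) = \card(\FM) = 2dd'$ values of $\j$ gives the claim, with the Gaussian branch being active thanks to $\log(\card \FM) \lesssim \sqrt{t}$ (implied by Assumption~\ref{assumptions}(a)).

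For part (i), the first step is to identify the Orlicz class of the summands $Q_{i,\j,\k}$. I would apply the Orlicz product rule from Lemma~\ref{lem:orlicz} to obtain $\|f_{\j}(\D_1) f_{\k}(\D_1)\|_{\psi_{1/2}} \lesssim \|f_{\j}(\D_1)\|_{\psi_1}\|f_{\k}(\D_1)\|_{\psi_1} \lesssim \nu(\A,\B)^2$, using the $\psi_1$ bound on $f_{\j}(\D_1)$ established in~\eqref{fjd1}--\eqref{endfjd1}; centering preserves this up to constants, so $\|Q_{i,\j,\k}\|_{\psi_{1/2}} \lesssim \nu(\A,\B)^2$. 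I would then invoke a Bernstein-type tail inequality for sums of i.i.d.~$\psi_{1/2}$ random variables,
\begin{equation*}
\PB\Bigl(\Bigl|\ts\frac{1}{t}\sum_{i=1}^t Q_{i,\j,\k}\Bigr| > s\Bigr) \;\leq\; 2\exp\Bigl(-c\,\min\Bigl(\ts\frac{t s^2}{\nu(\A,\B)^4},\; \bigl(\ts\frac{t s}{\nu(\A,\B)^2}\bigr)^{1/2}\Bigr)\Bigr),
\end{equation*}
with $s$ set to the right-hand side of~\eqref{eqn:deltaprime}. Under Assumption~\ref{assumptions}(a), the Gaussian branch $t s^2/\nu(\A,\B)^4 \asymp \log^2(\card \FM)$ is the active one, yielding a per-pair probability of order $(\card \FM)^{-c\log(\card \FM)}$. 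A union bound over the $(\card \FM)^2$ pairs $(\j,\k) \in \JM\times \JM$ then delivers the claim, since the polynomial relationship $d^{1/\kappa} \lesssim t \lesssim d^\kappa$ keeps the resulting quantity well below $1/t$.

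The main obstacle is the sub-Weibull Bernstein bound used in (i), because the moment generating function of a $\psi_{1/2}$ variable is infinite in every neighborhood of zero, so classical Chernoff arguments do not apply directly. I would handle this by a truncation-plus-Bernstein approach: split $Q_{i,\j,\k}$ at level $\nu(\A,\B)^2 \log^2(\card \FM)$ into a bounded part and a tail, apply a classical Bernstein bound to the bounded (centered) part, and use the $\psi_{1/2}$ bound with a further union bound to argue that the truncation event is essentially negligible. An alternative route is a moment method, bounding $\mathbb{E}\bigl|\ts\frac{1}{t}\sum_i Q_{i,\j,\k}\bigr|^{2p}$ via a Rosenthal-type inequality adapted to $\psi_{1/2}$ variables and optimizing over $p$; both routes appear in the recent literature on concentration for sub-Weibull sums and give the same tail.
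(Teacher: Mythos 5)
Your treatment of part (ii) essentially coincides with the paper's: bound $\|f_{\j}(\M_1)\|_{\psi_1}\lesssim \nu(\A,\B)$ via the quadratic-form argument, apply Bernstein for sub-exponential averages (Lemma~\ref{lem:bernstein}), and take a union bound over $\JM$.

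For part (i), both you and the paper identify the same obstacle --- $Q_{i,\j,\k}$ is a degree-4 polynomial in sub-Gaussians, so it has no moment generating function and classical Chernoff arguments fail --- but your primary route is different. You propose a $\psi_{1/2}$ Bernstein-type tail (via truncation), giving the two-branch exponent $\min\!\big(ts^2/\nu(\A,\B)^4,\,(ts/\nu(\A,\B)^2)^{1/2}\big)$, followed by a union bound over $\JM\times\JM$. That route is viable: the product of two sub-exponentials is $\psi_{1/2}$ with norm $\lesssim\nu(\A,\B)^2$, and at $s\asymp t^{-1/2}\nu(\A,\B)^2\log(\card\FM)$ both branches exceed a constant times $\log(\card\FM)$ under Assumption~\ref{assumptions}(a), which suffices for the union bound against $1/t$ given $t\lesssim d^{\kappa}$. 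The paper instead uses the pure moment method (the ``alternative route'' you mention, but executed more simply than you anticipate): it applies Lemma~\ref{lem:orlicz} part~\ref{partmax} to pull the max inside the $L_p$ norm at cost $(\card\FM)^{2/p}$, controls $\big\|\frac{1}{t}\sum_i Q_{i,\j,\k}\big\|_p$ via Rosenthal's inequality with the optimal $p/\log(p)$ constant (Lemma~\ref{lem:rosenthal}), bounds the $L_2$ and $L_p$ moments of $Q_{1,\j,\k}$ directly through Cauchy--Schwarz and the bound $\|f_{\j}(\D_1)\|_{\psi_1}\lesssim\nu(\A,\B)$, and then applies Markov at $p=\log(\card\FM)$. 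Crucially, the paper does \emph{not} need a Rosenthal inequality adapted to $\psi_{1/2}$ variables --- the classical inequality for centered random variables with finite $p$-th moments is enough, and the sharp $p/\log(p)$ dependence is what makes the choice $p=\log(\card\FM)$ give the stated rate. So the paper's argument is somewhat more self-contained than either of your two sketched routes, since it only invokes results already collected in Appendix~\ref{app:technical}.

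Two small slips worth correcting: the ``Orlicz product rule'' $\|XY\|_{\psi_{1/2}}\lesssim\|X\|_{\psi_1}\|Y\|_{\psi_1}$ that you attribute to Lemma~\ref{lem:orlicz} does not appear there (the lemma lists only $L_p$--Orlicz comparisons, the Gaussian $\psi_2$ bound, max bounds, and tail bounds); it is a true fact but must be imported separately. Also, the $\psi_1$ bound on $f_{\j}(\D_1)$ is established in the chain~\eqref{fjorlicz}--\eqref{fjorliczend} (recorded in~\eqref{eqn:fjpsi1_bound}), whereas the display~\eqref{fjd1}--\eqref{endfjd1} that you cite concerns $\GB(f_{\j})$.
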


\paragraph{Proof of Lemma~\ref{lem:deltaprime} (i).}

Let $p>2$. Due to part~(\ref{partmax}) of Lemma~\ref{lem:orlicz} in Appendix~\ref{app:technical}, we have

\begin{equation}\label{kmaxbound}
\| \Delta_t'(\S)\|_p
\; \leq \; \big(\card(\FM )^2\big)^{1/p} 
\cdot \max_{(\j,\k)\in\JM\times \JM} \bigg\|\ts\frac{1}{t}\tsum_{i=1}^t Q_{i,\j,\k}\bigg\|_p.
\end{equation}
Note that each variable $Q_{i,\j,\k}$ has moments of all orders, and when $\j$ and $\k$ are held fixed, the sequence  $\{Q_{i,\j,\k}\}_{1\leq i\leq t}$ is i.i.d. For this reason, it is natural to use Rosenthal's inequality to bound the $L_p$ norm of the right side of the previous line. Specifically, the version of Rosenthal's inequality\footnote{Here we are using the version of Rosenthal's inequality with the optimal dependence on $p$. It is a notable aspect of our argument that it makes essential use of this scaling in $p$.} stated in Lemma~\ref{lem:rosenthal} in Appendix~\ref{app:technical} leads to
\begin{equation}\label{rosenthalbound}
\Big\|\ts\frac{1}{t}\tsum_{i=1}^t Q_{i,\j,\k}\Big\|_p \leq c\cdot \ts\frac{p/\log(p)}{t} \cdot\max\bigg\{ \Big\|\tsum_{i=1}^t Q_{i,\j,\k}\Big\|_2 \, , \, \big(\tsum_{i=1}^t \|Q_{i,\j,\k}\|_p^p\big)^{1/p}\bigg\}.
\end{equation}
The $L_2$ norm on the right side of Rosenthal's inequality~\eqref{rosenthalbound} satisfies the bound
\begin{eqnarray*}
\Big\|\tsum_{i=1}^t Q_{i,\j,\k}\Big\|_2  
& = & \sqrt{\var(\tsum_{i=1}^t Q_{i,\j,\k})}\\[0.2cm]
& = & \sqrt{t} \sqrt{\var(Q_{1,\j,\k})}\\[0.2cm]
& = & \sqrt{t}\sqrt{\var\big(f_{\j}(\D_1) \, f_{\k}(\D_1)\big)} \\[0.2cm]
& \leq & \sqrt{t}\Big\|f_{\j}(\D_1) \, f_{\k}(\D_1)\Big\|_2 \\[0.2cm]
& \leq & \sqrt{t} \big\|f_{\j}(\D_1)\big\|_4
	\cdot \big\|f_{\k}(\D_1)\big\|_4 \qquad \text{(Cauchy-Schwarz)} \\[0.2cm]
& \leq & c \sqrt{t} \big\| f_{\j}(\D_1)\big\|_{\psi_1}
	\cdot \big\|f_{\k}(\D_1)\big\|_{\psi_1} \qquad \text{(Lemma~\ref{lem:orlicz})}\\[0.2cm]
&\leq & c \; \sqrt{t} \nu(\A,\B)^2,
\end{eqnarray*}
where the last step follows from the fact
\begin{equation}\label{eqn:fjpsi1_bound}
\|f_{\j}(\D_1)\|_{\psi_1} \leq c\, \nu(\A,\B),
\end{equation}
obtained in the bounds~\eqref{fjorlicz} through~\eqref{fjorliczend}.

Next, to handle the $L_p$ norms in the bound~\eqref{rosenthalbound}, observe that 
\begin{eqnarray*}
\|Q_{1,\j,\k}\|_p 
& \leq & \big\|f_{\j}(\D_1 ) f_{\k}(\D_1)\big\|_p +\big|\EB\big[f_{\j}(\D_1)f_{\k}(\D_1)\big]\big| \\[0.2cm]
&\leq & 2 \, \big\|f_{\j}(\D_1)\big\|_{2p}\cdot \big\|f_{\k}(\D_1)\big\|_{2p} \qquad \ \ \  \text{(Cauchy-Schwarz)}\\[0.2cm]
&\leq & c\, p^2 \, \big\|f_{\j}(\D_1)\big\|_{\psi_1}\cdot \big\|f_{\k}(\D_1)\big\|_{\psi_1} 
	\ \ \  \ \ \text{(Lemma~\ref{lem:orlicz} in Appendix~\ref{app:technical})}\\[0.2cm]
&\leq & c\,p^2\, \nu(\A,\B)^2 \qquad\qquad \text{ \ \    \ \ \ \ \  \ \ \ \ \ \  (inequality~\eqref{eqn:fjpsi1_bound})}.
\end{eqnarray*}
Hence, the second term in the Rosenthal bound~\eqref{rosenthalbound} satisfies
\begin{equation*}
\big(\tsum_{i=1}^t \|Q_{i,\j,\k}\|_p^p\big)^{1/p} 
\; \leq \; c\cdot p^2\cdot t^{1/p}\cdot \nu(\A,\B)^2,
\end{equation*}
and as long as the first term in the Rosenthal bound dominates\footnote{Under the choice of $p=\log(\card(\FM))=\log(2dd')$ that will be made at the end of this argument, it is straightforward to check  that the condition~\eqref{tkcondition} holds under Assumption~\ref{assumptions}.}, i.e. 
\begin{equation}\label{tkcondition}
p^2\ t^{1/p} \lesssim \, t^{1/2} 
\end{equation}
then we conclude that for any $\j$ and $\k$,
\begin{equation*}
\Big\| \ts\frac{1}{t}\tsum_{i=1}^t Q_{i,\j,\k} \Big\|_p 
\; \leq \; \ts\frac{c\cdot(p/\log(p))\cdot \nu(\A,\B)^2}{\sqrt{t}} .
\end{equation*}

Since the previous bound does not depend on $\j$ or $\k$, combining it with the first step in line~\eqref{kmaxbound} leads to
\begin{equation*}
\big\|\Delta_t'(\S)\big\|_p 
\; \leq \;  c\cdot (p/\log(p))\cdot \card(\FM )^{2/p}
\cdot \ts\frac{\nu(\A,\B)^2}{\sqrt{t}} .
\end{equation*}
Next, we convert this norm bound 
into a tail bound. Specifically, if we consider the value
\begin{eqnarray*}
 x_p
\; := \; c \cdot  (p/\log(p)) \cdot \card(\FM)^{2/p}\cdot \ts\frac{\nu(\A,\B)^2}{\sqrt{t}} \cdot t^{1/p}
\end{eqnarray*}
then Markov's inequality gives 
\begin{equation*}
\PB \big( \Delta'_t(\S) \, \geq \,  x_p \big) \; \leq \; \ts\frac{\|\Delta'_t(\S)\|_p^p}{x_p^p}\leq \ts\frac{1}{t}.
\end{equation*}
Considering the choice of $p$ given by
\begin{equation*}
p=\log(\card(\FM)),
\end{equation*}
and noting that $\card(\FM)^{1/p}= e$, it follows that under this choice of $p$,
\begin{eqnarray*}
x_p 
\; \leq \; \Big(\ts\frac{ c\cdot \nu(\A,\B)^2
\cdot \log(\card(\FM))}{\sqrt{t}}\Big)
\cdot \Big(\ts\frac{t^{1/p}}{\log(p)}\Big).
\end{eqnarray*}
Moreover, as long as $t\lesssim \card(\FM)^{\kappa}$ for some absolute constant $\kappa \geq1$ (which holds under Assumption~\ref{assumptions}), then the last factor on the right satisfies
$$\big(\ts\frac{t^{1/p}}{\log(p)}\big) \leq \ts\frac{(\card(\FM)^{1/p})^{\kappa}}{\log(p)} = \ts\frac{e^{\kappa}}{\log(\log(\card(\FM))}\lesssim 1.$$
So, combining the last few steps, there is an absolute constant $c$ such that
$$\PB \Big( \Delta'_t(\S) \, \geq \, \ts \frac{c\cdot \nu(\A,\B)^2
\cdot \log(\card(\FM))}{\sqrt{t}} \Big)  \; \leq \; \ts\frac{1}{t},$$
as needed.
\hfill \BlackBox

\paragraph{Proof of Lemma~\ref{lem:deltaprime} (ii).}

%%%%%%%%%%%%%%%%%%%%%%%%%%%%%%%%%%%%%%%

Note that for each $i\in[t]$ and $\j\in\JM$, we have
\begin{equation}
f_{\j}(\D_i) -\EB[f_{\j}(\D_i)] 
\; = \; f_{\j}(\M_i) 
\; = \; \s_i^T (\B \C_{\j}^T \A^T) \s_i - \tr(\B \C_{\j}^T \A^T),
\end{equation}
which is a centered sub-Gaussian quadratic form. Due to the bound~\eqref{psinormquadform}, we have
\begin{equation}
\Big\|f_{\j}(\D_i)- \EB[f_{\j}(\D_i)] \Big\|_{\psi_1} 
\; \leq \; c\, \nu(\A,\B).
\end{equation}
Furthermore, this can be combined with a standard concentration bound for sums of independent sub-exponential random variables (Lemma~\ref{lem:bernstein}) to show that for any $r\geq 0$,
\begin{equation}
\PB\bigg( \Big| \ts\frac{1}{t} \tsum_{i=1}^t f_{\j}(\D_i) -\EB[f_{\j}(\D_i)] \Big| 
\; \geq \; \ r\,\nu(\A,\B) \bigg) \ \leq \ 2\exp\Big(-c \cdot t \cdot \min(r^2,r)\Big).
\end{equation}
Hence, taking a union bound over all $\j$ gives
\begin{equation}\label{tempunion}
\PB\Big( \Delta_t''(\S) \geq \ r\, \nu(\A,\B)\Big)
\; \leq \; 2\exp\Big(\log(\card(\FM))-c \cdot t \cdot \min(r^2,r)\Big).
\end{equation}
Regarding the choice of $r$, note that by Assumption~\ref{assumptions}, we have $\ts\frac{1}{\sqrt{t}}\sqrt{\log(\card(\FM))}\lesssim 1$. It follows that there is a sufficiently large absolute constant $c_1>0$ such that if we put 
$$r=\ts\frac{c_1}{\sqrt{t}}\sqrt{\log(\card(\FM))},$$
 then
$$c\, t\, \min(r^2,r) \geq 2 \log(\card(\FM)),$$
where $c$ is the same as in the bound~\eqref{tempunion}. In turn, this implies
\begin{equation}
\begin{split}
\PB\Big( \Delta_t''(\S) 
\,  \geq \, \ts \ts\frac{c_1}{\sqrt{t}}\sqrt{\log(\card(\FM))}\cdot \nu(\A,\B) \Big)
\; & \leq \; 2\exp(-\log(\card(\FM)) \; = \frac{1}{dd'},
\end{split}
\end{equation}
as desired.
\hfill \BlackBox

\section{Proof of Propositions~\ref{prop:gaussian} and~\ref{prop:bootstrap} in case (b) (length sampling)}\label{app:case:sampling}
In order to carry out the proof Propositions~\ref{prop:gaussian} and~\ref{prop:bootstrap} in the case of length sampling (Assumption~\ref{assumptions} (b)), there are only two bounds that need to be updated. Namely, we must derive new bounds on $\|f_{\j}(\D_1)-\EB[f_{\j}(\D_1)]\|_{\psi_1}$ and $\|f_{\j}(\D_1)\|_{\psi_1}$ in order to account for the new distributional assumptions in case (b). Both of the new bounds will turn out to be of order $\|\A\|_F\|\B\|_F$, and consequently, the result of the  propositions in case (b) will have the same form as in case (a), but with $\|\A\|_F\|\B\|_F$ replacing $\nu(\A,\B)$.

To derive the bound on $\|f_{\j}(\D_1)-\EB[f_{\j}(\D_1)]\|_{\psi_1}$, first note that
\begin{equation*}
\begin{split}
|\EB[f_{\j}(\D_1)]|& = |\tr(\C_{\j}^T\A^T\B)| \\[0.2cm]
&\leq \|\A\|_F\|\B\|_F.
\end{split}
\end{equation*}
Consequently,
\begin{equation}
\begin{split}
\|f_{\j}(\D_1)-\EB[f_{\j}(\D_1)]\|_{\psi_1} &\leq \|f_{\j}(\D_1)\|_{\psi_1}+\|\EB[f_{\j}(\D_1)]\|_{\psi_1}\\[0.2cm]
&\leq  \|f_{\j}(\D_1)\|_{\psi_1}+c\|\A\|_F\|\B\|_F.
\end{split}
\end{equation}
Hence, it remains to show that $ \|f_{\j}(\D_1)\|_{\psi_1}\leq c\|\A\|_F\|\B\|_F$, which is the content of Lemma~\ref{lemma:lengthpsi1} below.
\hfill \BlackBox

\begin{lemma}\label{lemma:lengthpsi1}
If $\S$ is generated by length sampling with the probabilities in line~\eqref{eqn:lengthsampling}, then for  any $\j\in\mathscr{J}$, we have the bound
\begin{equation}
\big\| f_{\j}(\D_1)\big\|_{\psi_1} \leq 2\|\A\|_F\|\B\|_F.
\end{equation}
\end{lemma}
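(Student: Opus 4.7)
} The plan is to show that $f_{\j}(\D_1)$ is bounded almost surely by $\|\A\|_F\|\B\|_F$, and then convert this uniform bound into the claimed Orlicz-norm estimate.

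First, I would unpack $f_{\j}(\D_1)$ in terms of $\s_1$. Writing $\C_{\j}=s\,\e_{j_1}\e_{j_2}^T$ with $s\in\{-1,+1\}$, a short calculation gives
\begin{equation*}
f_{\j}(\D_1) \;=\; s\cdot \tr\bigl(\C_{\j}^T \A^T\s_1\s_1^T\B\bigr) \;=\; \bigl(\e_{j_1}^T\A^T\s_1\bigr)\bigl(\s_1^T\B\e_{j_2}\bigr),
\end{equation*}
so the sign $s$ disappears. Next, I would use the distributional description of $\s_1$: under length sampling, $\s_1=\tfrac{1}{\sqrt{p_i}}\e_i$ with probability $p_i$, where $p_i$ is given by~\eqref{eqn:lengthsampling}. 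Substituting this value of $\s_1$ into the display above yields the pointwise identity
\begin{equation*}
|f_{\j}(\D_1)| \;=\; \frac{|A_{i,j_1}|\,|B_{i,j_2}|}{p_i}
\end{equation*}
on the event $\{\s_1=\tfrac{1}{\sqrt{p_i}}\e_i\}$.

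Second, I would bound this quantity uniformly in $i$. Using $|A_{i,j_1}|\leq \|\e_i^T\A\|_2$ and $|B_{i,j_2}|\leq \|\e_i^T\B\|_2$, the definition of $p_i$ gives
\begin{equation*}
\frac{|A_{i,j_1}|\,|B_{i,j_2}|}{p_i} \;\leq\; \frac{\|\e_i^T\A\|_2\,\|\e_i^T\B\|_2}{\|\e_i^T\A\|_2\,\|\e_i^T\B\|_2}\cdot \sum_{j=1}^n \|\e_j^T\A\|_2\,\|\e_j^T\B\|_2 \;\leq\; \|\A\|_F\,\|\B\|_F,
\end{equation*}
where the last inequality is Cauchy--Schwarz applied to the vectors with coordinates $\|\e_j^T\A\|_2$ and $\|\e_j^T\B\|_2$, which have Euclidean norms equal to $\|\A\|_F$ and $\|\B\|_F$ respectively. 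Hence $|f_{\j}(\D_1)|\leq \|\A\|_F\|\B\|_F$ almost surely.

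Finally, I would convert the almost sure bound into an Orlicz-norm bound. For any random variable $X$ satisfying $|X|\leq M$ almost surely, one has $\EB[\exp(|X|/r)-1]\leq \exp(M/r)-1$, which is at most $1$ whenever $r\geq M/\log 2$; hence $\|X\|_{\psi_1}\leq M/\log 2 \leq 2M$. Applying this with $M=\|\A\|_F\|\B\|_F$ yields $\|f_{\j}(\D_1)\|_{\psi_1}\leq 2\|\A\|_F\|\B\|_F$, as claimed. There is no substantive obstacle here: the essential point is the cancellation of $\|\e_i^T\A\|_2\|\e_i^T\B\|_2$ against $p_i$, which is precisely what length-sampling probabilities are designed to achieve.
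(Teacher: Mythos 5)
Your proof is correct and follows essentially the same route as the paper: both arguments use Cauchy--Schwarz together with the definition of the length-sampling probabilities to cancel the $1/p_i$ factor and obtain the almost-sure bound $|f_{\j}(\D_1)|\leq\|\A\|_F\|\B\|_F$, then convert this to a $\psi_1$-Orlicz bound. The only difference is presentational --- you isolate the almost-sure bound as an explicit intermediate step and invoke the general fact $\|X\|_{\psi_1}\leq M/\log 2\leq 2M$ for $|X|\leq M$, whereas the paper folds the same bound directly into the computation of $\EB[\exp(|f_{\j}(\D_1)|/r)]$ as a sum over the sampling atoms and then takes $r=2\|\A\|_F\|\B\|_F$.
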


\proof 
By the definition of the $\psi_1$-Orlicz norm, it suffices to find a value of $r>0$ so that $\EB\big[\exp\big(|f_{\j}(\D_1)|/r\big)\big] $ is at most 2.
Due to the Cauchy-Schwarz inequality, the non-zero length-sampling probabilities $p_l$ satisfy
$$\frac{1}{p_l}\leq \frac{\sqrt{\sum_{j=1}^n \|\e_j^T\A\|_2^2}\sqrt{\sum_{j=1}^n \|\e_j^T\B\|_2^2}}{\|\e_l^T\A\|_2\|\e_l^T\B\|_2}=\frac{\|\A\|_F\|\B\|_F}{\|\e_l^T\A\|_2\|\e_l^T\B\|_2}.$$
Consequently, for each $r>0$ we have
\begin{equation*}
\begin{split}
\EB\Big[\exp\Big(\ts\frac{|f_{\j}(\D_1)|}{r}\Big)\Big] 
&= \sum_{l\in[n]: \; p_l>0}p_l \cdot \exp\Big(\ts\frac{1}{r}\big| f_{\j}(\ts\frac{1}{p_l} \A^T \e_l \e_l^T \B)\big|\Big)\\[0.2cm]
&\leq \; \max_{l\in[n]: \; p_l>0} \;
\exp\Big(\ts\frac{1}{r}\ts\frac{1}{p_l}\big| f_{\j}( \A^T \e_l \e_l^T \B)\big|\Big)\\[0.2cm]
&\leq \; \max_{l\in[n]}\; \exp\Big(\ts\frac{1}{r}\ts\|\A\|_F\|\B\|_F\big|\frac{\e_l^T \B}{\|\e_l^T\B\|_2} \C_{\j}^T\frac{\A^T \e_l}{\|\A^T\e_l\|_2}\big|\Big)\\[0.2cm]
&\leq \; \exp\Big(\ts\frac{1}{r}\ts\|\A\|_F\|\B\|_F \ \|\C_{\j}\|_2\Big)\\[0.2cm]
&=\;  \exp\Big(\ts\frac{1}{r}\ts\|\A\|_F\|\B\|_F \Big).
\end{split}
\end{equation*}
Hence, if we take $r=2\|\A\|_F\|\B\|_F$, then the right hand side is at most $e^{1/2}\leq 2$.
\hfill\BlackBox

\section{Proof of Propositions~\ref{prop:gaussian} and~\ref{prop:bootstrap} in case (c) (SRHT)}\label{app:case:srht}
The steps needed to extend the propositions in the case of SRHT matrices follows the same pattern as in case (b). However, there is a small subtlety insofar as all of the analysis is done conditionally on the matrix of signs $\D_n^{\circ}$ in the product $\S=\PP_n (\ts\frac{1}{\sqrt{n}}\H_n)\D_n^{\circ}$. Hence, it suffices to bound the $\psi_1$ Orlicz norm of $f_{\j}(\D_1)$ conditionally on $\D_n^{\circ}$, as well as the conditional expectation $|\EB[f_{\j}(\D_1)|\D_n^{\circ}]$. Regarding the conditional expectation, it can be checked that $\EB[\S^T\S|\D_n^{\circ}]=\I_n$, and it follows that 
$$|\EB[f_{\j}(\D_1)|\D_n^{\circ}]|=|\tr(\B \C_{\j}^T\A^T)|\leq \nu(\A,\B).$$
 Since we are not aware of a standard notation for a conditional Orlicz norm, we define
$$\big\|f_{\j}(\D_1)\big|\D_n^{\circ}\big\|_{\psi_1}:=\inf\Big\{r>0 \ \Big| \ \EB \big[\psi_1\big(|f_{\j}(\D_1)|/r \big)\big| \D_n^{\circ}\big]\leq 1\Big\},$$
which is a random variable, since it is a function of $\D_n^{\circ}$.
The following lemma provides a bound on this quantity, which turns out to be of order $\log(n)\,\nu(\A,\B)$. For this reason, the SRHT case (c) of Propositions~\ref{prop:gaussian} and~\ref{prop:bootstrap} will have the same form as case (a), but with $\log(n)\,\nu(\A,\B)$ replacing $\nu(\A,\B)$. 
\hfill \BlackBox

\begin{lemma}  
If $\S$ is an SRHT matrix, then the following bound holds with probability at least $1-c/n$,
\begin{equation}
 \big\|f_{\j}(\D_1)\big|\D_n^{\circ}\big\|_{\psi_1} \leq c \cdot \log(n) \cdot \nu(\A,\B).
 \end{equation}
\end{lemma}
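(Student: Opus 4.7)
The plan is to exploit the ``flattening'' property of the randomized Hadamard transform. First, I would unpack the structure of $\s_1$ for an SRHT sketching matrix. Writing $\M := \ts\frac{1}{\sqrt{n}}\H_n\D_n^{\circ}$ and letting $l_1 \in [n]$ be the uniform index sampled by the top row of $\PP$, the convention that $\s_1$ is the first row of $\sqrt{t}\S = \sqrt{t}\PP\M$ yields $\s_1 = \sqrt{n}\,\M^{T}\e_{l_1}$. Substituting this expression and $\C_{\j} = s\e_{j_1}\e_{j_2}^T$ into $f_{\j}(\D_1) = s\,\s_1^T\B\C_{\j}^T\A^T\s_1$ gives the rank-one factorization
\[
|f_{\j}(\D_1)| \;=\; n\cdot\bigl|(\M\A\e_{j_1})_{l_1}\bigr|\cdot\bigl|(\M\B\e_{j_2})_{l_1}\bigr|,
\]
so the entire problem reduces to bounding the $\ell_\infty$-norms of the fixed vectors $\M\A\e_{j_1}$ and $\M\B\e_{j_2}$ simultaneously over $j_1 \in [d]$ and $j_2 \in [d']$.

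Next, I would invoke the standard Hadamard flattening estimate. For any fixed $\v \in \RB^n$, each entry $(\M\v)_i = \ts\frac{1}{\sqrt{n}}\sum_j (\H_n)_{ij}\,\varepsilon_j\,v_j$ is a Rademacher sum in the diagonal signs $\varepsilon_j$ of $\D_n^{\circ}$, with sub-Gaussian parameter $\|\v\|_2^2/n$. Hoeffding's inequality combined with a union bound over the $n$ coordinates gives $\PB(\|\M\v\|_\infty > \tau\|\v\|_2/\sqrt{n}) \leq 2n\,e^{-\tau^2/2}$. Choosing $\tau \asymp \sqrt{\log n}$ with a sufficiently large absolute constant and union-bounding over the $d + d'$ vectors $\{\A\e_{j_1}\}_{j_1\in[d]} \cup \{\B\e_{j_2}\}_{j_2\in[d']}$ (with $d,d'$ polynomial in $n$ in the sketching regime), I obtain an event $\mathcal{E}$ measurable with respect to $\D_n^{\circ}$ and satisfying $\PB(\mathcal{E}) \geq 1-c/n$, on which
\[
\|\M\A\e_{j_1}\|_\infty \;\leq\; c\sqrt{\tfrac{\log n}{n}}\,\|\A\e_{j_1}\|_2, \qquad \|\M\B\e_{j_2}\|_\infty \;\leq\; c\sqrt{\tfrac{\log n}{n}}\,\|\B\e_{j_2}\|_2
\]
hold simultaneously for all $j_1$ and $j_2$. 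Combining this with $\|\A\e_{j_1}\|_2^2 = (\A^T\A)_{j_1,j_1} \leq \|\A^T\A\|_\infty$ (and similarly for $\B$) and plugging into the rank-one factorization above yields the deterministic pointwise bound $|f_{\j}(\D_1)| \leq c\log(n)\,\nu(\A,\B)$, valid for every realization of $l_1$ whenever $\D_n^{\circ} \in \mathcal{E}$.

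To conclude, I would convert this almost-sure bound (with respect to the conditional law given $\D_n^{\circ}$) into an Orlicz-norm bound via the elementary fact that any random variable $X$ with $|X| \leq M$ a.s.\ satisfies $\|X\|_{\psi_1} \leq M/\log 2$: indeed $\EB[\exp(|X|/r)] \leq e^{M/r}$, which is at most $2$ as soon as $r \geq M/\log 2$. Applied to the conditional law of $f_{\j}(\D_1)$ given a realization of $\D_n^{\circ} \in \mathcal{E}$, this produces the stated bound $\|f_{\j}(\D_1)\,|\,\D_n^{\circ}\|_{\psi_1} \leq c\log(n)\,\nu(\A,\B)$. The only technical step requiring care is the flattening estimate together with its union bound, which uses in an essential way that the randomness of $\D_n^{\circ}$ is separate from (and independent of) the row-selection randomness in $\PP$; the remaining manipulations of conditional expectations and Orlicz norms are routine.
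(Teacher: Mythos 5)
Your argument is correct, and it takes a genuinely different route from the paper's. The paper writes the conditional moment generating function $\EB[\exp(|f_{\j}(\D_1)|/r)\,|\,\D_n^{\circ}]$ as an explicit average over $l\in[n]$ of the quantities $\exp(|\boldsymbol\varepsilon_l^T(\B\C_{\j}^T\A^T)\boldsymbol\varepsilon_l|/r)$, bounds the average by the exponential of the max, and then applies the Hanson--Wright inequality plus a union bound over $l$ to control $\max_l|\boldsymbol\varepsilon_l^T(\B\C_{\j}^T\A^T)\boldsymbol\varepsilon_l|$, treating the quantity as a Rademacher quadratic form. You instead exploit the rank-one factorization $|f_{\j}(\D_1)| = n\,|(\M\A\e_{j_1})_{l_1}|\,|(\M\B\e_{j_2})_{l_1}|$ and apply the classical Hadamard flattening lemma (Hoeffding for a linear Rademacher sum, plus a union bound over coordinates), obtaining a \emph{deterministic} bound on $|f_{\j}(\D_1)|$ conditional on a $\D_n^{\circ}$-measurable event, which you then convert to an Orlicz-norm bound via the elementary boundedness-implies-$\psi_1$ observation. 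Your route is somewhat more elementary --- it sidesteps Hanson--Wright entirely --- and yields a slightly stronger conclusion in that the good event $\mathcal{E}$ is uniform over all $\j\in\JM$. Two small remarks. First, the union bound over the $d+d'$ columns is not actually needed: the lemma is stated for a fixed $\j$, so it would suffice to flatten only $\A\e_{j_1}$ and $\B\e_{j_2}$, avoiding any dependence of the probability bound on $d,d'$. Second, if you do keep that uniform union bound, you should note explicitly that $d+d'\lesssim n^{O(1)}$ follows from Assumption~1 combined with the trivial constraint $t\leq n$ for an SRHT matrix (so $d\lesssim t^{\kappa}\leq n^{\kappa}$); the parenthetical remark that this holds ``in the sketching regime'' is a little informal for a proof.
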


\begin{proof}
By the definition of the conditional $\psi_1$-Orlicz norm, it suffices to find a value of $r>0$ so that $\EB\big[\exp\big(|f_{\j}(\D_1)|/r\big)\big| \D_n^{\circ}\big] $ is at most 2 (with the stated probability).\\

For an SRHT matrix $\S=\PP\ts\frac{1}{\sqrt{n}}\H_n \D_n^{\circ}$, recall that the rows of $\sqrt{t}\PP$ are sampled uniformly at random from the set $\{\ts\frac{1}{\sqrt{1/n}}\e_1,\dots,\ts\frac{1}{\sqrt{1/n}}\e_n\}$. It follows that

\begin{equation*}
\begin{split}
\EB\Big[\exp\Big(\ts\frac{|f_{\j}(\D_1)|}{r}\Big)\Big| \D_n^{\circ}\Big]  
&=\EB\Big[\exp\Big( \ts\frac{1}{r}\big| \tr(\C_{\j}^T\A^T\s_1\s_1^T\B)\big|\Big)\Big|\D_n^{\circ}\Big]\\[0.2cm]
&=\displaystyle\frac{1}{n}\displaystyle\sum_{l=1}^n  \exp\Big(\ts\frac{1}{r}\big|\e_l^T \big(\H_n \D_n^{\circ}\big)\B \C_{\j}^T \A^T \big(\D_n^{\circ}\H_n^T \big)\e_l\big|\Big).
\end{split}
\end{equation*}
Next, let $\boldsymbol\ve_l\in\RB^n$ be the $l$th row of $\H_n\D_n^{\circ}$, which gives
\begin{equation}\label{srhtorliczexpand}
\begin{split}
\EB\Big[\exp\Big(\ts\frac{|f_{\j}(\D_1)|}{r}\Big)\Big| \D_n^{\circ}\Big]  
&=\frac{1}{n}\displaystyle\sum_{l=1}^n  \exp\Big(\big|\ts\frac{\boldsymbol\varepsilon_l^T (\B\C_{\j}^T\A^T )\boldsymbol\varepsilon_l }{r} \big|\Big)\\[0.2cm]
&\leq \exp\bigg(\max_{l\in [n]} \big|\ts\frac{\boldsymbol\varepsilon_l^T (\B \C_{\j}^T\A^T)\boldsymbol\varepsilon_l }{r} \big|\bigg).
\end{split}
\end{equation}
 Recalling that $\D_n^{\circ}=\diag(\boldsymbol\varepsilon)$ where $\boldsymbol\varepsilon\in\RB^n$ is a vector of i.i.d.~Rademacher variables, and that  all entries of $\H_n$ are $\pm 1$,  it follows that $\boldsymbol \varepsilon_l$  has the same distribution as $\boldsymbol \varepsilon$ for each $l\in[n]$. %
Consequently, each quadratic form $\boldsymbol\varepsilon_l^T (\B\C_{\j}^T\A^T)\boldsymbol\varepsilon_l$ concentrates around $\tr(\B\C_{\j}^T\A^T)$, and we can use a union bound to control the maximum of these quadratic forms. 
Note also that the matrix $\B\C_{\j}^T\A^T$ is rank-1, and so
$\|\B\C_{\j}^T\A^T\|_2^2=\|\B\C_{\j}^T\A^T\|_F^2,$
Hence, by choosing the parameter $u$ to be proportional to $\log(n)\cdot  \|\B\C_{\j}^T\A^T\|_{F}$ in the Hanson-Wright inequality (Lemma~\ref{lem:hansonwright}), and using a union bound, there is an absolute constant $c>0$ such that 
\begin{equation}
\PB\bigg(\max_{l\in[n]}\ \big| \boldsymbol\varepsilon_l^T (\B\C_{\j}^T\A^T)\boldsymbol\varepsilon_l \big| \ \geq \ \tr(\B\C_{\j}^T\A^T)+c\log(n)\|\B\C_{\j}^T\A^T\|_F\bigg) \ \leq \ \ts\frac{c}{n}.
\end{equation}
Furthermore, noting that $\tr(\B\C_{j}^T\A^T)$ and $ \|\B\C_{\j}^T\A^T\|_F$ are both at most $\nu(\A,\B)$, we have
\begin{equation}
\PB\bigg(\max_{l\in[n]} \ \big| \boldsymbol\varepsilon_l^T (\B\C_{\j}^T\A^T)\boldsymbol\varepsilon_l \big| \ \geq \ 2c\log(n)\nu(\A,\B)\bigg) \ \leq \ \ts\frac{c}{n}.
\end{equation}
Finally, this means that if we take $r=4c\log(n)\nu(\A,\B)$ in the bound~\eqref{srhtorliczexpand}, then the event
$$ \EB\big[\exp\big(|f_{\j}(\D_1)|/r\big)\big| \D_n^{\circ}\big]  \leq e^{1/2}$$
holds with probability at least $1-\ts\frac{c}{n}$, which completes the proof, since $e^{1/2}\leq 2$.
\end{proof}

\section{Technical Lemmas} \label{app:technical}

\begin{lemma}[Facts about Orlicz norms]\label{lem:orlicz}
Orlicz norms have the following properties, where $c,c_1,$ and $c_2$ are positive absolute constants.
	\begin{enumerate}
		\item For any random variable $X$, and any $p\geq 1$,
		\begin{equation}\label{lppsi1}
		\|X\|_p\leq c p \|X\|_{\psi_1}
		\end{equation}
		\begin{equation}\label{lppsi2}
		\|X\|_p\leq  c\sqrt{p}\|X\|_{\psi_2}
		\end{equation}
		\begin{equation}\label{psi1psi2}
		\|X\|_{\psi_1}\leq c\|X\|_{\psi_2}.
		\end{equation}
		\item If $X\sim \NM(0,\sigma^2)$, then $\|X\|_{\psi_2}\leq c \,\sigma$.
		
		\item \label{partmax}
		Let $p\geq 1$. For any sequence of random variables $X_1,\dots,X_d$,
		$$\Big\|\max_{1\leq j\leq d} X_j\Big\|_{p} \leq d^{1/p}\max_{1\leq j\leq d} \|X_j\|_{p}$$
		and
		$$\Big\|\max_{1\leq j\leq d} X_j\Big\|_{\psi_1} \leq c\log(d)\max_{1\leq j\leq d} \|X_j\|_{\psi_1}$$
		\item \label{orlicztail} Let $X$ be any random variable. Then, for any $x>0$ and $p\geq 1$, we have
		
		\begin{equation*}
		\PB\big(|X|\geq x\big)\leq \Big(\ts\frac{\|X\|_p}{x}\Big)^p.
		\end{equation*}
		and
		\begin{equation*}
		\PB\big(|X|\geq x\big)\leq c_1 e^{-c_2x/\|X\|_{\psi_1}}.
		\end{equation*}
	\end{enumerate}
\end{lemma}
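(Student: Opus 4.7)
All of the statements in this lemma are standard facts from the theory of Orlicz norms; my plan is to give a unified proof via the series expansion of $\psi_p(x)=e^{x^p}-1$ and Markov's inequality, and I would cite a textbook (e.g., Vershynin, \emph{High-Dimensional Probability}) for readers wanting more detail. The main technical lever is the bound $\mathbb{E}[\psi_p(|X|/\|X\|_{\psi_p})]\leq 1$, which, after Taylor expansion, gives sharp control of the integer moments $\mathbb{E}[|X|^{pk}]$ in terms of $\|X\|_{\psi_p}$ and $k!$.

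For part 1, I would start with $\|X\|_p\leq cp\|X\|_{\psi_1}$: setting $r=\|X\|_{\psi_1}$, the inequality $\mathbb{E}[e^{|X|/r}-1]\leq 1$ and the term-by-term comparison $e^{|X|/r}\geq (|X|/r)^p/p!$ yield $\mathbb{E}|X|^p\leq 2p!\,r^p$, and Stirling's approximation gives $(2p!)^{1/p}\lesssim p$. The bound $\|X\|_p\leq c\sqrt{p}\|X\|_{\psi_2}$ follows by the same argument applied to $\psi_2$, proving it for even integers $p=2k$ and then extending by Lyapunov monotonicity. For $\|X\|_{\psi_1}\leq c\|X\|_{\psi_2}$, I would use the pointwise estimate $|X|/r\leq \tfrac{1}{2}(|X|/r)^2+\tfrac{1}{2}$ with a sufficiently large rescaling $r\asymp \|X\|_{\psi_2}$ so that $\mathbb{E}[e^{|X|/(cr)}]\leq e^{1/(2c)}\mathbb{E}[e^{|X|^2/(2c r)^2}]\leq 2$. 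Part 2 is the direct Gaussian integral: for $X\sim\mathcal{N}(0,\sigma^2)$, $\mathbb{E}[e^{X^2/(c\sigma^2)}]=(1-2/c)^{-1/2}$ which is at most $2$ once $c$ is a fixed large constant.

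For part 3, the $L_p$ bound is just the crude $(\mathbb{E}\max_j |X_j|^p)^{1/p}\leq (\sum_j\mathbb{E}|X_j|^p)^{1/p}\leq d^{1/p}\max_j \|X_j\|_p$. The $\psi_1$ bound is the only step that needs a small additional idea: combine part 3 with the moment characterization $\|Y\|_{\psi_1}\asymp \sup_{p\geq 1}\|Y\|_p/p$ implied by part~1. Taking $p=\log d$ so that $d^{1/p}=e$, one gets
\begin{equation*}
\bigl\|\max_j X_j\bigr\|_{\log d}\;\leq\; e\cdot \max_j\|X_j\|_{\log d}\;\leq\; c\log(d)\,\max_j\|X_j\|_{\psi_1},
\end{equation*}
and then the reverse direction of the moment characterization converts this into the desired $\psi_1$ bound.

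Finally, part 4 is immediate from Markov's inequality: the polynomial tail follows from $\mathbb{P}(|X|\geq x)\leq \mathbb{E}|X|^p/x^p$, and the exponential tail follows from
\begin{equation*}
\mathbb{P}(|X|\geq x)\;=\;\mathbb{P}\bigl(e^{|X|/\|X\|_{\psi_1}}\geq e^{x/\|X\|_{\psi_1}}\bigr)\;\leq\; 2e^{-x/\|X\|_{\psi_1}}.
\end{equation*}
None of these steps presents a serious obstacle; the only place to be a bit careful is the Stirling-type step in part~1 and the choice $p=\log d$ in part~3, which ensures the logarithmic factor on the max is sharp.
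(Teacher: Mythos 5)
Your proposal gives self-contained derivations where the paper simply cites standard references (lines 5.11, 5.14, and definition 5.13 of Vershynin's introduction, and Lemma 2.2.2 and p.~94 of van der Vaart and Wellner). Parts 1, 2, and 4, and the $L_p$ half of part 3, are handled correctly by your moment-series and Markov arguments, and they match the standard proofs behind those citations.

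The one step that needs tightening is the $\psi_1$ bound on the maximum in part 3. Your plan is to combine the $L_p$ max bound with the moment characterization $\|Y\|_{\psi_1}\asymp\sup_{p\geq 1}\|Y\|_p/p$. As written, you only control $\|\max_j X_j\|_p$ at the single value $p=\log d$, but the ``reverse direction'' of the characterization requires a bound on $\|\max_j X_j\|_p/p$ uniformly over all $p\geq 1$, and if you naively plug in $\|\max_j X_j\|_p\leq d^{1/p}\cdot c\,p\,K$ with $K=\max_j\|X_j\|_{\psi_1}$, then $\sup_{p\geq 1}d^{1/p}=d$ swamps the log. The fix is easy but has to be said: for $p\geq\log d$ you have $d^{1/p}\leq e$, so $\|\max_j X_j\|_p/p\leq ceK$; for $1\leq p<\log d$ use Lyapunov monotonicity, $\|\max_j X_j\|_p\leq\|\max_j X_j\|_{\log d}\leq ce\log(d)K$, so $\|\max_j X_j\|_p/p\leq ce\log(d)K$. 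Combining the two ranges gives $\sup_{p\geq 1}\|\max_j X_j\|_p/p\leq ce\log(d)K$, and then the converse direction of the moment characterization delivers the stated bound. Alternatively, one can argue directly from the Orlicz definition as in the cited van der Vaart--Wellner Lemma 2.2.2, where the $\log d$ factor arises as $\psi_1^{-1}(d)$.
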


\begin{proof}
In part 1, line~\eqref{lppsi1} follows from line 5.11 of~\citet{vershyninIntro}, line~\eqref{lppsi2} follows from definition 5.13 of~\citet{vershyninIntro}, and line~\eqref{psi1psi2} follows from p.94 of~\citet{vaartWellner}. Next, part 2 follows from the definition of the $\psi_2$-Orlicz norm and the moment generating function for $\NM(0,\sigma^2)$. Part 3 is due to Lemma 2.2.2 of~\citet{vaartWellner}. Lastly, part 4 follows from Markov's inequality and line 5.14 of~\citet{vershyninIntro}.
\end{proof}

\begin{lemma}[Rosenthal's inequality with best constants]\label{lem:rosenthal}
	Fix any number $p>2$. Let $Y_1,\dots,Y_t$ be independent random variables with $\EB[Y_i]=0$ 
	and $\EB \big[|Y_i|^p\big]<\infty$ for all $1\leq i\leq t$. Then,
	\begin{equation}
	\big\|\tsum_{i=1}^t Y_i\big\|_p\leq c\big(\ts\frac{p}{\log(p)}\big) \cdot \max\bigg\{ \big\|\tsum_{i=1}^t Y_i\big\|_2 \, , \, \big(\tsum_{i=1}^t \big\|Y_i\|_p^p\big)^{1/p}\bigg\}.
	\end{equation}
\end{lemma}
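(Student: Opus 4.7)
The plan is to establish Rosenthal's inequality with the sharp $p/\log p$ constant via the classical route: symmetrize, truncate at a well-chosen level, handle the bounded piece with a Bernstein-type $L_p$ bound, and handle the tail piece with a ``sparse sum'' estimate that captures the $\log p$ saving.

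First I would symmetrize: replacing each $Y_i$ by $\epsilon_i Y_i$ with $\epsilon_i$ i.i.d.\ Rademacher independent of the $Y_i$ loses only an absolute constant factor in both the $L_2$ and $L_p$ norms, so I may assume the $Y_i$ are symmetric. Next, fix a truncation level $u>0$ (to be chosen at the end) and split $Y_i = \tilde Y_i + \bar Y_i$, where $\tilde Y_i = Y_i\mathbf{1}\{|Y_i|\le u\}$ and $\bar Y_i = Y_i - \tilde Y_i$. Both families remain symmetric (hence mean-zero) and independent across~$i$.

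For the bounded piece, I would apply an $L_p$-form of Bernstein's inequality (obtained by integrating the Bennett tail bound), giving
\begin{equation*}
\Bigl\|\tsum_i \tilde Y_i\Bigr\|_p \;\le\; c\bigl(\sqrt{p}\,\sigma \,+\, p\, u\bigr),
\end{equation*}
where $\sigma^2 = \sum_i \var(\tilde Y_i) \le \|\sum_i Y_i\|_2^2$. For the tail piece, the key tool is the sparse-sum estimate
\begin{equation*}
\Bigl\|\tsum_i \bar Y_i\Bigr\|_p \;\le\; c\,\tfrac{p}{\log p}\,\Bigl(\tsum_i \|\bar Y_i\|_p^p\Bigr)^{1/p},
\end{equation*}
valid for independent symmetric random variables; since $|\bar Y_i|\le |Y_i|$, this controls the tail by $c(p/\log p)\bigl(\sum_i \|Y_i\|_p^p\bigr)^{1/p}$. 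The proof is completed by choosing $u \asymp (\sum_i \|Y_i\|_p^p)^{1/p}/\log p$, so that the $pu$ term from the Bernstein bound is absorbed into the tail estimate, and invoking the triangle inequality to combine the two pieces.

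The main obstacle is the sparse-sum bound with the sharp $p/\log p$ constant, which is precisely the content of the Johnson--Schechtman--Zinn theorem. A naive moment estimate comparing $\|\sum_i\bar Y_i\|_p$ to $\|\max_i|\bar Y_i|\|_p$ only yields the constant $O(p)$, corresponding to the classical (suboptimal) Rosenthal bound. The $\log p$ improvement can be extracted either by (a) Poissonizing the symmetric sum: replace $\sum_i \epsilon_i|\bar Y_i|$ by a Poisson sum and use the fact that a Poisson random variable with large mean $\lambda$ concentrates on a scale $\sqrt{\lambda}$, so that only $O(\lambda/\log p)$ coordinates drive the $p$-th moment; or (b) Pinelis' martingale-domination approach coupled with a careful moment recursion. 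In either route, the saving ultimately comes from an entropy/integration argument that balances the tail of $\max_i |\bar Y_i|$ against the number of ``active'' coordinates at the relevant scale.
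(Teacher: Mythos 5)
The paper does not prove this lemma at all: it simply cites the reference \citet{rosenthalbest} (Theorem 4.1 there, together with the remark on p.247 that dispenses with the symmetry hypothesis). You, by contrast, attempt to sketch a real proof. Your high-level skeleton — symmetrize, truncate at level $u$, control the bounded piece by integrating a Bernstein tail, control the tail piece by a ``sparse sum'' estimate carrying the $p/\log p$ factor, and then optimize $u$ — is indeed the classical route to the sharp-constant Rosenthal inequality, so the shape of the argument is the right one.

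However, there is a genuine gap at the crucial step: the sparse-sum estimate as you state it,
\begin{equation*}
\Bigl\|\tsum_i \bar Y_i\Bigr\|_p \;\le\; c\,\tfrac{p}{\log p}\Bigl(\tsum_i \|\bar Y_i\|_p^p\Bigr)^{1/p},
\end{equation*}
is \emph{false} for arbitrary independent symmetric random variables. Take $\bar Y_1,\dots,\bar Y_t$ to be i.i.d.\ Rademacher; by Khintchine's inequality $\|\sum_i\bar Y_i\|_p \asymp \sqrt{p}\,\sqrt{t}$, whereas the right-hand side is $\asymp (p/\log p)\, t^{1/p}$. For any fixed $p>2$ the left side grows like $t^{1/2}$ and the right side only like $t^{1/p}$, so the claimed bound fails for $t$ large. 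The sparse-sum principle behind the $p/\log p$ saving is only valid under an explicit sparsity hypothesis — typically a control on $\sum_i\PB(\bar Y_i\neq 0)$ (or on the number of coordinates that can exceed the truncation level, or disjointness of supports) — and it is precisely the truncation level $u$ that is supposed to manufacture that sparsity. You neither state the correct hypotheses on the $\bar Y_i$ nor verify that your choice $u \asymp (\sum_i\|Y_i\|_p^p)^{1/p}/\log p$ delivers them; with that choice, the naive Markov bound gives $\sum_i\PB(\bar Y_i\neq 0)\le(\log p)^p$, which is not obviously a usable sparsity bound, so the cancellation needs a more careful accounting. Finally, you explicitly defer the hard ingredient (the $\log p$ saving via Poissonization or Pinelis' method) without carrying it out. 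In short: your outline identifies the right decomposition, but the lemma on which everything hinges is misstated, and the hypotheses needed to make a corrected version of it applicable are not checked — which is exactly the nontrivial content that leads the paper to cite this result rather than reprove it.
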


\begin{proof}
See the paper \citet{rosenthalbest}.
The statement above differs slightly from the Theorem 4.1 in the paper~\citet{rosenthalbest}, which requires symmetric random variables, but the 
 remark on p.247 of that paper explains why the variables $Y_1,\dots,Y_t$ need not be symmetric as long as they have mean 0.
\end{proof}

\begin{lemma}[Hanson-Wright inequality]\label{lem:hansonwright}
	Let $\x=(X_1, \dots, X_n)$ be a vector of independent sub-Gaussian random variables with $\EB [X_j]=0$, 
	and $\|X_j\|_{\psi_2}\leq \kappa$ for all $1\leq j\leq n$. 
	Also, let $\H \in \RB^{n\times n}$ be any fixed non-zero matrix. 
	Then, there is an absolute constant $c>0$ such that for any $u\geq 0$,
	\begin{equation}
	\PB \Big(\big| \x^T \H \x - \EB[\x^T \H \x]\big| \ \geq \ u\Big) 
	\; \leq \; 
	2\exp\Big(-c \cdot \min\Big\{\ts\frac{u^2}{\kappa^2\|\H\|_F^2},
	\, \ts\frac{u}{\kappa\|\H\|_2}\Big\}\Big).
	\end{equation}
\end{lemma}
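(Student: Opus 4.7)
}
The plan is to follow the classical decoupling-plus-MGF strategy of Rudelson--Vershynin. By homogeneity it suffices to treat $\kappa = 1$; the general case then follows by replacing $X_j$ with $X_j/\kappa$ and $u$ with $u/\kappa^2$ absorbed into the matrix scaling, which yields the stated $\kappa^2\|\H\|_F^2$ and $\kappa\|\H\|_2$ in the denominators. Without loss of generality, I would also assume $\EB[X_j^2\H_{jj}]$ has already been subtracted, so the target variable is $Y := \x^T\H\x - \EB[\x^T\H\x]$.

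First, split $Y = Y_{\mathrm{diag}} + Y_{\mathrm{off}}$, where $Y_{\mathrm{diag}} = \sum_{j} H_{jj}(X_j^2 - \EB X_j^2)$ and $Y_{\mathrm{off}} = \sum_{j\neq k} H_{jk} X_j X_k$. For the diagonal piece, the variables $X_j^2 - \EB X_j^2$ are independent, mean-zero, and sub-exponential with $\|X_j^2 - \EB X_j^2\|_{\psi_1} \lesssim \|X_j\|_{\psi_2}^2 \lesssim 1$, by standard properties of Orlicz norms (Lemma~\ref{lem:orlicz}). A standard Bernstein inequality for sums of independent sub-exponential variables then gives
\begin{equation*}
\PB\bigl(|Y_{\mathrm{diag}}| \geq u/2\bigr) \; \leq \; 2\exp\!\Big(\!-c\min\!\Big\{\tfrac{u^2}{\sum_j H_{jj}^2},\,\tfrac{u}{\max_j|H_{jj}|}\Big\}\Big),
\end{equation*}
which is absorbed into the claimed bound because $\sum_j H_{jj}^2 \leq \|\H\|_F^2$ and $\max_j|H_{jj}| \leq \|\H\|_2$.

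The main obstacle, and the core of the proof, is the off-diagonal term $Y_{\mathrm{off}}$. I would handle it via the standard decoupling inequality of de la Peña--Giné/Bourgain--Tzafriri: letting $\x'$ be an independent copy of $\x$, one has $\EB\exp(\lambda Y_{\mathrm{off}}) \leq \EB\exp(4\lambda\, \x^T \H \x')$ for any $\lambda$, after zeroing the diagonal of $\H$ (denote the resulting matrix $\H^\circ$). Conditionally on $\x$, the random variable $\x^T \H^\circ \x' = \sum_k (\H^\circ{}^T\x)_k X_k'$ is a linear combination of independent sub-Gaussian variables, hence itself sub-Gaussian with parameter $\lesssim \|\H^\circ{}^T\x\|_2^2$. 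Thus
\begin{equation*}
\EB\bigl[\exp(4\lambda\, \x^T\H^\circ\x')\,\big|\,\x\bigr] \; \leq \; \exp\!\bigl(c\lambda^2 \|\H^\circ{}^T\x\|_2^2\bigr).
\end{equation*}
Taking the outer expectation, the task reduces to bounding $\EB\exp(c\lambda^2\,\x^T \H^\circ\H^\circ{}^T\x)$, a quadratic form MGF. Here I would apply the classical bound for Gaussian-type chaos: using rotation invariance after passing through a Gaussian comparison (or directly iterating a one-dimensional sub-Gaussian MGF bound along the eigenvectors of $\H^\circ \H^\circ{}^T$), one obtains
\begin{equation*}
\EB\exp\bigl(c\lambda^2\,\x^T \H^\circ\H^\circ{}^T\x\bigr) \; \leq \; \exp\!\bigl(c'\lambda^2 \|\H^\circ\|_F^2\bigr),
\end{equation*}
valid for all $|\lambda|\leq c''/\|\H^\circ\|_2$. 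Combining everything gives
$\EB\exp(\lambda Y_{\mathrm{off}}) \leq \exp(C\lambda^2 \|\H\|_F^2)$ on that range of $\lambda$, and a standard Chernoff optimization in $\lambda$ yields
\begin{equation*}
\PB\bigl(|Y_{\mathrm{off}}| \geq u/2\bigr) \; \leq \; 2\exp\!\Big(\!-c\min\!\Big\{\tfrac{u^2}{\|\H\|_F^2},\,\tfrac{u}{\|\H\|_2}\Big\}\Big).
\end{equation*}

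A union bound over the two contributions, together with reinstating $\kappa$ by the scaling argument, produces the claimed inequality. The delicate step is the chaos MGF bound: the careful bookkeeping of the constant in $|\lambda|\leq c''/\|\H^\circ\|_2$ and the control of the eigenvalues of $\H^\circ\H^\circ{}^T$ by $\|\H\|_2^2$ and $\|\H\|_F^2$ are what give the mixed Gaussian/exponential tail behavior. Everything else is assembly using the Orlicz-norm tools already collected in Lemma~\ref{lem:orlicz}.
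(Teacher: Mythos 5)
The paper does not prove this lemma itself; its ``proof'' is a one-line citation to \citet{rudelsonVershynin2013}. Your sketch is precisely the decoupling-plus-MGF argument of that reference (split into diagonal and off-diagonal parts, apply Bernstein to the diagonal, decouple the off-diagonal chaos via de la Pe\~na--Gin\'e, condition and use linearity to get a sub-Gaussian MGF, then bound the resulting quadratic-form MGF by comparison with the Gaussian chaos and Chernoff-optimize). So in substance you are supplying the proof that the paper outsources, and the approach is the same.

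One internal inconsistency worth flagging: the rescaling you describe ($X_j \mapsto X_j/\kappa$, which forces $u \mapsto u/\kappa^2$) produces $\kappa^4\|\H\|_F^2$ and $\kappa^2\|\H\|_2$ in the denominators, \emph{not} the $\kappa^2\|\H\|_F^2$ and $\kappa\|\H\|_2$ that you claim to recover. The correct powers ($\kappa^4$ and $\kappa^2$) are what appear in Theorem 1.1 of \citet{rudelsonVershynin2013} and, tellingly, are also what the present paper actually uses inside the proof of its Lemma~\ref{lem:quadnorm} (where the exponent is written with $\kappa^4\|\H\|_F^2$ and $\kappa^2\|\H\|_2$). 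So the powers of $\kappa$ in the statement of Lemma~\ref{lem:hansonwright} as printed appear to be a typo, and your scaling step, if carried out honestly, would expose it rather than reproduce it. This has no downstream effect in the paper because every invocation is with $\kappa = \mathcal{O}(1)$, so $\kappa$ is absorbed into the absolute constant; but you should not assert that the scaling ``yields the stated'' powers when it does not.
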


\begin{proof}
See the paper \citet{rudelsonVershynin2013}.
\end{proof}

\begin{lemma}[Bernstein inequality for sub-exponential variables]\label{lem:bernstein}
Let $Y_1,\dots,Y_t$ be independent random variables with $\EB[Y_i]=0$ and  $\|Y_i\|_{\psi_1} \leq \kappa$ for all $1\leq i\leq t$. Then, there is an absolute constant $c>0$, such that for any $u\geq 0$,
\begin{equation}
\PB\bigg(\Big|\ts\frac{1}{t}\sum_{i=1}^t Y_i \Big| \geq \kappa\cdot u \bigg) \; \leq \; 2\exp\Big( -c\cdot t\cdot \min(u^2,u)\Big).
\end{equation}
\end{lemma}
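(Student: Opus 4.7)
The plan is to follow the standard Chernoff/Cramér route, converting the Orlicz bound into a moment generating function (MGF) bound and then optimizing the exponential parameter in two regimes. Throughout, $c$ and $C$ denote positive absolute constants that may change from line to line.

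First, I would translate the sub-exponential norm bound $\|Y_i\|_{\psi_1}\leq \kappa$ into an MGF estimate. Specifically, I would show that there is an absolute constant $C>0$ such that for every $\lambda\in\mathbb{R}$ with $|\lambda|\leq 1/(C\kappa)$,
\begin{equation*}
\mathbb{E}\bigl[\exp(\lambda Y_i)\bigr] \;\leq\; \exp\bigl(C\lambda^2\kappa^2\bigr).
\end{equation*}
This is standard: Taylor expand $\exp(\lambda Y_i)$, use $\mathbb{E}[Y_i]=0$ to kill the linear term, bound $\mathbb{E}[|Y_i|^p]$ via part 1 of Lemma~\ref{lem:orlicz} (namely $\|Y_i\|_p\leq cp\kappa$), and sum the resulting series, which converges for $|\lambda|$ below the stated threshold.

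Next, I would apply independence to get
\begin{equation*}
\mathbb{E}\Bigl[\exp\bigl(\lambda\tsum_{i=1}^t Y_i\bigr)\Bigr] \;\leq\; \exp\bigl(C\lambda^2 t\kappa^2\bigr)
\end{equation*}
for $|\lambda|\leq 1/(C\kappa)$, and then use Markov's inequality on $\exp(\lambda\sum_i Y_i)$ to obtain, for every $s\geq 0$,
\begin{equation*}
\mathbb{P}\Bigl(\tsum_{i=1}^t Y_i\geq s\Bigr)\;\leq\; \exp\bigl(-\lambda s+C\lambda^2 t\kappa^2\bigr),\qquad 0<\lambda\leq 1/(C\kappa).
\end{equation*}
Setting $s=t\kappa u$ (which is the scale in the lemma's statement) and optimizing over $\lambda$, the unconstrained minimizer is $\lambda^\star = u/(2C\kappa)$. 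When $u\leq 1/(2)$ this choice satisfies $\lambda^\star\leq 1/(C\kappa)$ and plugging back yields the Gaussian regime $\exp(-c\,t\,u^2)$. When $u$ is larger, I would instead take $\lambda=1/(C\kappa)$ at its maximal admissible value, which gives the exponential regime $\exp(-c\,t\,u)$. Combining the two choices produces the $\min(u^2,u)$ bound. Repeating the argument with $-Y_i$ in place of $Y_i$ handles the other tail and contributes the factor of $2$ in front.

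The main (and only mild) obstacle is the MGF bound of the first step, since the Orlicz-norm formulation does not immediately give it; the careful point is keeping track of the constants so that the admissible range of $\lambda$ scales as $1/\kappa$ and the quadratic term scales as $\lambda^2\kappa^2$. Once that step is in place, the rest is just Chernoff and a two-case optimization. No assumption on the distribution of the $Y_i$ beyond mean zero and the $\psi_1$ bound is needed, and the argument is dimension-free in $t$ in the sense that the constant $c$ does not depend on $t$ or on the individual distributions.
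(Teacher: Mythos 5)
Your argument is correct and is essentially the same as the one behind the paper's proof, which simply cites Proposition 16 of Vershynin's lecture notes (\citealp{vershyninIntro}); that reference proves Bernstein's inequality for sub-exponential variables by exactly the route you describe: convert the $\psi_1$ bound into an MGF bound $\EB[e^{\lambda Y_i}]\leq e^{C\lambda^2\kappa^2}$ valid for $|\lambda|\lesssim 1/\kappa$, apply Chernoff, and optimize $\lambda$ in the two regimes to get the $\min(u^2,u)$ exponent. The only detail worth being explicit about is the truncation of the moment series: after using $\|Y_i\|_p\leq cp\kappa$ and Stirling, the series $\sum_{p\geq 2}(c\,e\,\kappa|\lambda|)^p$ is geometric and bounded by $2(c\,e\,\kappa\lambda)^2$ once $|\lambda|\leq 1/(2c\,e\,\kappa)$, which is what makes the quadratic upper bound on the log-MGF go through.
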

\begin{proof}
See Proposition 16 in~\cite{vershyninIntro}.
\end{proof}

\begin{lemma}[\citep{cckIncreasing}]\label{lem:jkbounds} 
If $\eta$ is a non-negative random variable, and there are numbers $a,b>0$ such that
$$\PB(\eta>x)\leq a e^{-x/b},$$
for all $x>0$, then the following bound holds for all $r>0$,
$$\EB[\eta^3\cdot 1\{\eta>r\}]\leq 6a(r+b)^3e^{-r/b}.$$
\end{lemma}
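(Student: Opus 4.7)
The approach is a direct tail-integral calculation. The first step is to rewrite the truncated moment using the layer-cake formula. Specifically, since $\eta \geq 0$, I would write
\begin{equation*}
\eta^{3}\cdot 1\{\eta>r\} \;=\; r^{3}\cdot 1\{\eta>r\} \,+\, \int_{r}^{\infty} 3v^{2}\cdot 1\{\eta>v\}\, dv,
\end{equation*}
and then take expectations (using Fubini on the second term) to obtain
\begin{equation*}
\EB\bigl[\eta^{3}\cdot 1\{\eta>r\}\bigr] \;=\; r^{3}\,\PB(\eta>r) \,+\, \int_{r}^{\infty} 3v^{2}\,\PB(\eta>v)\, dv.
\end{equation*}

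Next, I would plug in the hypothesized bound $\PB(\eta>v)\leq a e^{-v/b}$, yielding
\begin{equation*}
\EB\bigl[\eta^{3}\cdot 1\{\eta>r\}\bigr] \;\leq\; a r^{3} e^{-r/b} \,+\, 3a\int_{r}^{\infty} v^{2} e^{-v/b}\, dv.
\end{equation*}
The remaining integral is a standard incomplete-gamma computation: two successive integrations by parts (or the substitution $w=v/b$ and the identity $\int_{x}^{\infty} w^{2} e^{-w} dw = e^{-x}(x^{2}+2x+2)$) give
\begin{equation*}
3\int_{r}^{\infty} v^{2} e^{-v/b}\, dv \;=\; 3 b\, e^{-r/b}\bigl(r^{2} + 2 r b + 2 b^{2}\bigr) \;=\; e^{-r/b}\bigl(3 r^{2} b + 6 r b^{2} + 6 b^{3}\bigr).
\end{equation*}
Combining the two pieces gives
\begin{equation*}
\EB\bigl[\eta^{3}\cdot 1\{\eta>r\}\bigr] \;\leq\; a\, e^{-r/b}\bigl( r^{3} + 3 r^{2} b + 6 r b^{2} + 6 b^{3}\bigr).
\end{equation*}

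The last step is the clean polynomial comparison with $6(r+b)^{3} = 6 r^{3} + 18 r^{2} b + 18 r b^{2} + 6 b^{3}$. Matching monomials of equal degree, the coefficients $(1,3,6,6)$ on the left are each dominated by $(6,18,18,6)$ on the right, so $r^{3}+3r^{2}b+6rb^{2}+6b^{3}\leq 6(r+b)^{3}$ for all $r,b\geq 0$, which gives the claimed bound. There is no serious obstacle here; the only thing to be careful about is getting the constant $6$ on the $b^{3}$ term correct, since that is where the two sides are tight and where a careless integration-by-parts could lose a factor.
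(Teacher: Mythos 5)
Your proof is correct. Since the paper's ``proof'' of this lemma is just a pointer to Lemma~6.6 of Chernozhukov, Chetverikov, and Kato, you are actually supplying a self-contained argument rather than re-deriving one the paper gives; the route you take (layer-cake decomposition of $\eta^3 1\{\eta>r\}$, Fubini, the incomplete-gamma identity $\int_x^\infty w^2 e^{-w}\,dw = e^{-x}(x^2+2x+2)$, and a term-by-term coefficient comparison against $6(r+b)^3$) is the standard and essentially unavoidable calculation for a bound of this shape, and every step checks out, including the $6b^3$ term that is tight.
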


\begin{proof}
See Lemma 6.6 in~\citet{cckIncreasing}.
\end{proof}

\paragraph{Remark.}	The following lemma may be of independent interest, since it provides an explicit bound on the $\psi_1$-Orlicz norm of a centered sub-Gaussian quadratic form. Although this bound follows from the Hanson-Wright inequality, we have not seen it stated in the literature.

\begin{lemma}\label{lem:quadnorm}
  Let $\x=(X_1,\dots,X_n)$, be independent random variables satisfying 
	$\EB [X_j]=0$ and $\|X_j\|_{\psi_2}\leq \kappa$ for all $1\leq j\leq n$. 
	Also, let $\H\in\RB^{n\times n}$ be a non-zero fixed matrix. 
	Then, there is an absolute constant $c>0$ such that
	\begin{equation*}
	\Big\|\x^T \H \x -  \EB[\x^T \H \x] \Big\|_{\psi_1} 
	\; \leq \; c \kappa^2 \ts\frac{\|\H\|_F^2}{\|\H\|_2}.
	\end{equation*}
\end{lemma}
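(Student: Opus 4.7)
The strategy is to derive the Orlicz bound directly from the Hanson-Wright tail bound (Lemma~\ref{lem:hansonwright}), by invoking the definition of the $\psi_1$-norm and verifying $\EB[\exp(|Y|/R)] \leq 2$ for a suitably chosen $R$. Writing $Y := \x^T \H \x - \EB[\x^T \H \x]$, $\alpha := \kappa \|\H\|_F$, and $\beta := \kappa \|\H\|_2$, so that $\beta \leq \alpha$ always, Lemma~\ref{lem:hansonwright} gives $\PB(|Y| \geq u) \leq 2 \exp\!\big({-c_0 \min(u^2/\alpha^2,\, u/\beta)}\big)$ for an absolute constant $c_0 > 0$. I will take $R := C\,\alpha^2/\beta = C\,\kappa^2 \|\H\|_F^2/\|\H\|_2$ for a sufficiently large absolute constant $C$ and check that this $R$ makes the exponential moment at most $2$; by definition of $\|\cdot\|_{\psi_1}$, this immediately yields the claim.

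To evaluate the expectation, I will use the layer-cake identity
\begin{equation*}
\EB\!\big[e^{|Y|/R}\big] \; = \; 1 + \frac{1}{R}\int_0^\infty \PB(|Y| \geq s)\, e^{s/R}\, ds,
\end{equation*}
and split the integral at the crossover point $s_\star := \alpha^2/\beta$, which is precisely where the minimum in the Hanson-Wright bound changes branches. On the sub-Gaussian interval $[0, s_\star]$ the tail is $2\exp(-c_0 s^2/\alpha^2)$, and the factor $e^{s/R}$ is at most $e^{s_\star/R} = e^{1/C}$; pulling this out and bounding the remaining Gaussian-type integral by $c\,\alpha/\sqrt{c_0}$ yields a contribution of order $\beta/(C\alpha\sqrt{c_0})$, which is $O(1/C)$ since $\beta \leq \alpha$. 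On the sub-exponential interval $[s_\star, \infty)$ the combined exponent $s(1/R - c_0/\beta)$ is strictly negative once $C \geq 2/c_0$, because $1/R = \beta/(C\alpha^2) \leq 1/(C\beta)$; the resulting geometric integral contributes at most a constant multiple of $\beta^2/(C\alpha^2 c_0)$, which is again $O(1/C)$.

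Summing the two contributions shows $\EB[e^{|Y|/R}] - 1$ is bounded by an absolute constant divided by $C$, so taking $C$ sufficiently large gives $\EB[e^{|Y|/R}] \leq 2$, hence $\|Y\|_{\psi_1} \leq R$ as required. The main obstacle is simply keeping both regimes of the Hanson-Wright bound under control with a single choice of $R$: the sub-exponential tail alone would suggest the smaller scale $\beta = \kappa\|\H\|_2$, but this fails to absorb the Gaussian part; the stated scale $\alpha^2/\beta$ is exactly what works simultaneously for both. The argument requires no lower bound on $\kappa$, since the crucial ratio $\alpha/\beta = \|\H\|_F/\|\H\|_2$ is $\geq 1$ regardless of $\kappa$.
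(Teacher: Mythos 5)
Your proof takes essentially the same route as the paper's: verify $\EB[\exp(|Y|/R)]\leq 2$ via the tail-sum formula, invoke Hanson--Wright, split the integral at the crossover of the two regimes, and set $R$ proportional to that crossover scale; the paper just bounds the tail by $1$ on the first interval instead of keeping the Gaussian factor, which is an inessential difference.

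One algebraic slip worth fixing: with your $\alpha=\kappa\|\H\|_F$ and $\beta=\kappa\|\H\|_2$, you have $\alpha^2/\beta = \kappa\|\H\|_F^2/\|\H\|_2$, not $\kappa^2\|\H\|_F^2/\|\H\|_2$, so your $R$ is off by a factor of $\kappa$. The source is that the paper's statement of Lemma~\ref{lem:hansonwright} contains a typo --- it reads $\kappa^2\|\H\|_F^2$ and $\kappa\|\H\|_2$ where the Rudelson--Vershynin inequality (and the paper's own proof of the present lemma) have $\kappa^4\|\H\|_F^2$ and $\kappa^2\|\H\|_2$. With the corrected exponents your $\alpha,\beta$ become $\kappa^2\|\H\|_F$ and $\kappa^2\|\H\|_2$, so $\alpha^2/\beta=\kappa^2\|\H\|_F^2/\|\H\|_2$ as required, and every subsequent step of your computation goes through unchanged.
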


\begin{proof}
Define the random variable $Q:=\x^T \H\x -  \EB [\x^T \H \x]$. 
By the definition of the $\psi_1$-Orlicz norm, 
it suffices to find a value $r>0$ such that
$\EB \big[\exp(|Q|/r)\big] \leq 2$.
Using the tail-sum formula, and the change of variable $v=e^{u/r}$, we have
\begin{equation*}
\begin{split}
\EB \big[\exp(|Q|/r)\big] &\leq 1+ \int_1^{\infty} \PB(\exp(|Q|/r)> v)dv\\[0.2cm]
&=1+\ts\frac{1}{r}\displaystyle \int_0^{\infty} \PB(|Q|> u)\cdot e^{u/r}du.
\end{split}
\end{equation*}
Next, we employ the Hanson-Wright inequality (Lemma~\ref{lem:hansonwright}). By considering the ``threshold'' $ u^*:=\kappa^2\ts\frac{\|\H\|_F^2}{\|\H\|_2}$, it is helpful to note that the quantities in the exponent of the Hanson-Wright inequality satisfy  $\ts\frac{u^2}{\kappa^4 \|\H\|_F^2}\leq \ts\frac{u}{\kappa^2\|\H\|_2}$ if and only if $u\leq u^*$. Hence,
\begin{equation*}
\small
\begin{split}
\EB\big[\exp(|Q|/r)\big] &\leq 1+\ts\frac{1}{r}\displaystyle \int_0^{\infty} \exp\Big\{-c\min\big(\ts\frac{u^2}{\kappa^4\|\H\|_F^2}\, ,  \ts\frac{u}{\kappa^2\|\H\|_2}\big)\Big\}\cdot e^{u/r}du\\[0.2cm]
&\leq 1+\ts\frac{1}{r}\displaystyle \int_0^{u^*}  e^{u/r}du + \ts\frac{1}{r}\displaystyle \int_{u^*}^{\infty} \exp\Big\{\!-u\big(\ts\frac{c }{\kappa^2\|\H\|_2}-\ts\frac{1}{r}\big)\Big\}du.
\end{split}
\end{equation*}
Evaluating the last two integrals directly, if we let $ C':=\ts\frac{c }{\kappa^2\|\H\|_2}-\ts\frac{1}{r}$ and choose $r$ so that $C'>0$, then
\begin{equation*}
\begin{split}
\EB\big[\exp(|Q|/r)\big] 
&\leq e^{u^*/r}+\ts\frac{1}{r C'}e^{-u^* C'},\\[0.2cm]
& \leq  e^{u^*/r}+\ts\frac{1}{ \ts\frac{c \cdot r}{\kappa^2 \|\H\|_2}-1}.
\end{split}
\end{equation*}
\normalsize
Note that the condition $C'>0$ means that it is necessary to have $r>\ts\frac{1}{c}\kappa^2\|\H\|_2$. To finish the argument, we further require that $r$ is large enough so that (say)
\begin{equation}\label{constraints}
\ts\frac{u^*}{r} \leq \ts\frac{1}{4} \ \ \  \text{ and }  \ \ \ \   \ts\frac{c\cdot r}{\kappa^2\|\H\|_2} \geq 3,
\end{equation}
which ensures
\begin{equation*}
\EB\big[\exp(|Q|/r)\big] \leq e^{1/4}+\ts\frac{1}{2} < 2,
\end{equation*}
as desired.
Note that the constraints~\eqref{constraints} are the same as
\begin{equation*}
r \geq 4\kappa^2 \ts\frac{\|\H\|_F^2}{\|\H\|_2} \ \ \ \ \text{ and } \ \ \ \ \  r\geq \ts\frac{3}{c} \kappa^2\|\H\|_2.
\end{equation*}
Due to the basic fact that $\|\H\|_2\leq \|\H\|_F$ for all matrices $\H$, it follows that whenever $r\geq \max(4,\ts\frac{3}{c})\kappa^2\ts\frac{\|\H\|_F^2}{\|\H\|_2}$, we have $\EB\big[\exp(|Q|/r)\big] < 2$. 
\end{proof}

\paragraph{Remark.} The following lemma is a basic fact about the $d_{\text{LP}}$ metric, but may not be widely known, and so we give a proof. Recall also that we use the generalized inverse $F_V^{-1}(\alpha):=\inf\{z\in\RB \, |\, F_V(z)\geq \alpha\}$, where $F_V$ denotes the c.d.f.~of $V$.

\begin{lemma}\label{LPquantile}
	Fix $\alpha\in(0,1/2)$ and suppose there is some $\epsilon\in(0,\alpha)$ such that random variables $U$ and $V$ satisfy
	$$d_{\text{\emph{LP}}}(\mathcal{L}(U),\mathcal{L}(V))\leq \epsilon.$$
	Then, the quantiles of $U$ and $V$  satisfy
	\begin{equation}\label{quantilereln}
\big|F_{U}^{-1}(1-\alpha)-F_V^{-1}(1-\alpha)\big| \ \leq \ \psi_{\alpha}(\epsilon),
\end{equation}
where the right side is defined as
$$ \psi_{\alpha}(\epsilon):=F_U^{-1}(1-\alpha+\epsilon)-F_U^{-1}(1-\alpha-\epsilon)+\epsilon.$$
\end{lemma}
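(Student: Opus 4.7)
The plan is to derive two-sided quantile bounds by applying the defining LP inequality to half-lines of the form $(-\infty,z]$. Although the definition of $d_{\text{LP}}$ stated in the excerpt is one-sided, for any closed set $F$ the inequality $\mathbb{P}(U\in F)\leq \mathbb{P}(V\in F^{\epsilon})+\epsilon$ in fact holds in both directions. To see this, I would apply the given one-sided bound to the open set $G=\mathbb{R}\setminus F^{\epsilon}$ and verify the geometric inclusion $G^{\epsilon}\subseteq \mathbb{R}\setminus F$ (any point within $\epsilon$ of $G$ cannot lie in $F$, for otherwise it would also lie in $F^{\epsilon}$), obtaining
$$\mathbb{P}(V\in F) \;\leq\; 1-\mathbb{P}(V\in G^{\epsilon}) \;\leq\; 1-\mathbb{P}(U\in G)+\epsilon \;=\; \mathbb{P}(U\in F^{\epsilon})+\epsilon.$$
This reversed inequality is exactly what is needed in one of the two steps below.

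For the upper bound, I would take $F=(-\infty, F_U^{-1}(1-\alpha+\epsilon)]$. By right-continuity of the CDF and the definition of the generalized inverse, $\mathbb{P}(U\in F)\geq 1-\alpha+\epsilon$. Applying the LP bound in the direction $U\to V$ and noting $F^{\epsilon}=(-\infty, F_U^{-1}(1-\alpha+\epsilon)+\epsilon]$ gives $\mathbb{P}(V\leq F_U^{-1}(1-\alpha+\epsilon)+\epsilon)\geq 1-\alpha$, hence
$$F_V^{-1}(1-\alpha) \;\leq\; F_U^{-1}(1-\alpha+\epsilon)+\epsilon.$$
For the lower bound, I would apply the reversed inequality (from the first paragraph) to $F=(-\infty, F_V^{-1}(1-\alpha)]$: since $\mathbb{P}(V\in F)\geq 1-\alpha$, this yields $\mathbb{P}(U\leq F_V^{-1}(1-\alpha)+\epsilon)\geq 1-\alpha-\epsilon$, which rearranges to $F_U^{-1}(1-\alpha-\epsilon)\leq F_V^{-1}(1-\alpha)+\epsilon$, i.e.
$$F_V^{-1}(1-\alpha) \;\geq\; F_U^{-1}(1-\alpha-\epsilon)-\epsilon.$$

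Combining the two displayed inequalities and using the monotonicity chain $F_U^{-1}(1-\alpha-\epsilon)\leq F_U^{-1}(1-\alpha)\leq F_U^{-1}(1-\alpha+\epsilon)$, both $F_V^{-1}(1-\alpha)-F_U^{-1}(1-\alpha)$ and its negative are bounded above by $F_U^{-1}(1-\alpha+\epsilon)-F_U^{-1}(1-\alpha-\epsilon)+\epsilon=\psi_{\alpha}(\epsilon)$, giving the claim. The strict monotonicity of $\psi_{\alpha}$ follows from the monotonicity of $F_U^{-1}$ and the additive $+\epsilon$ term, and $\psi_{\alpha}(0)=0$ is immediate.

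The main obstacle is purely bookkeeping: one must verify the two-sided form of the LP inequality from the one-sided definition and track the endpoint conventions for the generalized inverse so that the fundamental identity $\mathbb{P}(U\leq F_U^{-1}(p))\geq p$ is used correctly. Neither step is deep, but they are where sloppiness about right-continuity or open-versus-closed sets could break the argument; everything beyond these checks is a one-line rearrangement.
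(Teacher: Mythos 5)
Your proof is correct, and it takes a genuinely different route from the paper's. The paper invokes the L\'evy metric $d_{\text{L}}$ and the standard fact that $d_{\text{L}}\leq d_{\text{LP}}$ (citing Huber), then reads off the two-sided CDF inequalities $F_U(x-\epsilon)-\epsilon\leq F_V(x)\leq F_U(x+\epsilon)+\epsilon$ directly from the L\'evy metric's definition and plugs in the two choices of $x$. You instead work entirely with the LP definition: you first prove the symmetric form $\PB(V\in F)\leq \PB(U\in F^\epsilon)+\epsilon$ from the one-sided definition via the complementation argument $G=\RB\setminus F^\epsilon$, $G^\epsilon\subseteq\RB\setminus F$ (which is valid, and is precisely the reason the LP ``metric'' as stated is in fact symmetric), and then specialize to half-lines $F=(-\infty,z]$ to recover the same two-sided quantile inequalities $F_U^{-1}(1-\alpha-\epsilon)-\epsilon\leq F_V^{-1}(1-\alpha)\leq F_U^{-1}(1-\alpha+\epsilon)+\epsilon$. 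In effect you have re-derived the relevant half of $d_{\text{L}}\leq d_{\text{LP}}$ rather than citing it. Your version is more self-contained and exposes exactly which properties of the generalized inverse are used ($F_U(F_U^{-1}(p))\geq p$ by right-continuity, and $F_V(z)\geq p\Rightarrow F_V^{-1}(p)\leq z$); the paper's version is shorter at the cost of one external citation. The final step combining the two inequalities and using monotonicity of $F_U^{-1}$ to bound both signed differences by $\psi_\alpha(\epsilon)$ is identical in both.
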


\begin{proof}
Consider the L\'evy metric, defined as
\begin{equation*}
\small
d_{\text{L}}(\mathcal{L}(U),\mathcal{L}(V)):=\inf\Big\{ \epsilon>0 \, \Big|\, F_U(x-\epsilon)-\epsilon\leq F_V(x)\leq F_U(x+\epsilon)+\epsilon \text{ for all } x\in\RB \Big\}.
\end{equation*}
It is a fact that this metric is always dominated by the $d_{\text{LP}}$ metric in the sense that
$$d_{\text{L}}(\mathcal{L}(U),\mathcal{L}(V)) \ \leq \  d_{\text{LP}}(\mathcal{L}(U),\mathcal{L}(V)),$$ for all scalar random variables $U$ and $V$~\citep[p.36]{huberbook}. Based on the definition of the $d_{\text{L}}$ metric, it is straightforward to check that the following inequalities hold under the assumption of the lemma,
$$F_U^{-1}(1-\alpha-\epsilon) - \epsilon \ \leq \  F_V^{-1}(1-\alpha) \ \leq \ F_U^{-1}(1-\alpha+\epsilon)+\epsilon.$$
(Specifically, consider the choices $x=F_V^{-1}(1-\alpha)$ and $x=F_U^{-1}(1-\alpha+\epsilon)+\epsilon$.)
Next, if we subtract $F_U^{-1}(1-\alpha)$ from each side of the inequalities above, and note that $F_U^{-1}(\cdot)$ is non-decreasing, it follows that if we put $a=F_U^{-1}(1-\alpha+\epsilon)-F_U^{-1}(1-\alpha)+\epsilon$ and $b=F_U^{-1}(1-\alpha)-F_U^{-1}(1-\alpha-\epsilon)+\epsilon$, then
$$ |F_V^{-1}(\alpha) -F_U^{-1}(\alpha)| \ \leq \ \max\{a,b\} \ \leq \ \psi_{\alpha}(\epsilon),$$
as needed.
\end{proof}

\begin{lemma}\label{lem:verify}
	Under Assumption~\ref{assumptions} (a), the quantity 
	\begin{equation*}
	\delta_0=  t^{-1/8} \log^{1/2}(d)\nu(\A,\B)^{3/4}
	\end{equation*}
	satisfies conditions~\eqref{firstconstraint} and~\eqref{solvedelta}.
\end{lemma}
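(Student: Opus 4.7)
The plan is to verify the two conditions in turn. First, condition~\eqref{solvedelta} is purely algebraic: raising $\delta_0 = t^{-1/8}\log^{1/2}(d)\,\nu(\A,\B)^{3/4}$ to the fourth power immediately gives $\delta_0^4 = t^{-1/2}\log^2(d)\,\nu(\A,\B)^3$, which is exactly the relation $\delta_0^4 = \log^2(d)\,\nu(\A,\B)^3/\sqrt{t}$ required by~\eqref{solvedelta}. So this step is a one-line check.

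The substantive part is verifying condition~\eqref{firstconstraint}. I would substitute $\delta_0$ into the bound~\eqref{jkbound} on $K_t(\delta)+J_t(\delta)$. Writing $\nu = \nu(\A,\B)$ for brevity, one computes
\[
\frac{\delta_0\sqrt{t}}{\log(d)} = \frac{t^{3/8}\nu^{3/4}}{\log^{1/2}(d)}, \qquad
\frac{\delta_0 \sqrt{t}}{c\,\nu\,\log^2(d)} = \frac{t^{3/8}}{c\,\nu^{1/4}\log^{3/2}(d)},
\]
so that~\eqref{jkbound} becomes
\[
K_t(\delta_0)+J_t(\delta_0) \;\leq\; c\Big(\tfrac{t^{3/8}\nu^{3/4}}{\log^{1/2}(d)}+\log(d)\nu\Big)^3 \exp\!\Big(-\tfrac{t^{3/8}}{c\,\nu^{1/4}\log^{3/2}(d)}\Big).
\]
Using the lower bound $t\gtrsim \nu^{2/3}(\log d)^5$ from Assumption~\ref{assumptions}(a), i.e.\ $t^{3/8}\gtrsim \nu^{1/4}(\log d)^{15/8}$, one sees that the first term in the cubed sum dominates the second, so the polynomial prefactor is of order $t^{9/8}\nu^{9/4}/\log^{3/2}(d)$.

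It will then be convenient to introduce the variable $\tau := t^{3/8}/(\nu^{1/4}\log^{3/2}(d))$, so that Assumption~\ref{assumptions}(a) reads $\tau \gtrsim (\log d)^{3/8}$ and $t^{9/8} = \tau^3\nu^{3/4}(\log d)^{9/2}$. Substituting, the right-hand side of the displayed bound reduces cleanly to $c\,\nu^3 \cdot \tau^3 (\log d)^3 e^{-\tau/c}$, so proving~\eqref{firstconstraint} amounts to showing that $\tau^3(\log d)^3 e^{-\tau/c}$ is bounded by an absolute constant on the range $\tau\gtrsim (\log d)^{3/8}$. The function $\tau\mapsto \tau^3 e^{-\tau/c}$ is decreasing once $\tau\geq 3c$, so for large $d$ it suffices to evaluate at the minimum $\tau = \Theta((\log d)^{3/8})$, giving a bound of order $(\log d)^{33/8}\exp(-c'(\log d)^{3/8})$, which tends to $0$ as $d\to\infty$ and hence is bounded on the admissible range of $d$.

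The only real obstacle is this last step: one must check that the super-polynomial decay $\exp(-c'(\log d)^{3/8})$ truly dominates the polynomial blow-up in $\log d$ on the \emph{entire} admissible range of $d$ (not just asymptotically). This is handled by noting that $y\mapsto y^{33/8}\exp(-y^{3/8}/c')$ is continuous on $[\log 8,\infty)$ and tends to $0$ at infinity, hence is bounded there, so the implicit constant in~\eqref{firstconstraint} can be chosen uniformly. The other steps (verifying that the $t^{3/8}\nu^{3/4}/\log^{1/2}d$ term in the cubed sum dominates $\log(d)\,\nu$, and the algebraic identity $t^{9/8} = \tau^3\nu^{3/4}(\log d)^{9/2}$) are routine.
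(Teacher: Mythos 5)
Your proposal is correct and mirrors the paper's own verification: the paper introduces the family $\delta_1(r)=\frac{\log^2(d)\,\nu(\A,\B)}{\sqrt{t}}\,r$ and solves $\delta_1(r)=\delta_0$, which produces exactly your substitution $\tau=t^{3/8}\nu(\A,\B)^{-1/4}\log^{-3/2}(d)$, and both arguments then reduce to showing $\tau^3\log^3(d)\,e^{-\tau/c}\lesssim 1$ on the range implied by Assumption~\ref{assumptions}(a). The only cosmetic difference is that the paper first isolates the weaker threshold $r\geq c\log(\log(d)^4)$ that suffices, then checks the assumption supplies it, whereas you carry the stronger lower bound $\tau\gtrsim(\log d)^{3/8}$ through directly.
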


\begin{proof}
 Consider the number
\begin{equation*}
\delta_1(r):=\ts\frac{ \log^2(d)\nu(\A,\B)}{ \sqrt{t}}\cdot r,
\end{equation*}
where $r\geq 1$ is a free parameter to be adjusted. Based on the bound~\eqref{jkbound}, it is easy to check that plugging $\delta_1(r)$ into $K_t(\cdot)$ and $J_t(\cdot)$ leads to
\begin{equation*}
\begin{split}
K_t(\delta_1(r))+J_t(\delta_1(r)) &\leq c\cdot \nu(\A,\B)^3 \cdot \log(d)^3\cdot \, r^3\cdot \exp(-r/c)
\end{split}
\end{equation*}
and if we take  $r\geq c \log(\log(d)^4)$, then
\begin{equation*}
K_t(\delta_1(r))+J_t(\delta_1(r)) \leq c\, \nu(\A,\B)^3,
\end{equation*}
as desired in~\eqref{firstconstraint}. Hence, as long as there is a choice of $r$ satisfying 
\begin{equation*}
r\geq c \log(\log(d)^4) \ \ \ \ \ \text{ and }  \ \ \  \ \  \delta_1(r)=\delta_0,
\end{equation*}
then $\delta_1(r)$ will satisfy both of the desired constraints~\eqref{firstconstraint} and~\eqref{solvedelta}.
Solving the equation $\delta_1(r)=\delta_0$ gives
\begin{equation*}
r= t^{3/8}\cdot \log^{-3/2}(d)\cdot \nu(\A,\B)^{-1/4},
\end{equation*}
and then the condition $r\geq c \log(\log(d)^4)$ is the same as
\begin{equation}\label{loglogbound}
\begin{split}
t &\geq \Big(c \,\nu(\A,\B)^{1/4} \log(d)^{3/2} \cdot \log(\log(d)^4)\Big)^{8/3}\\[0.2cm]
&=c\, \nu(\A,\B)^{2/3} \log(d)^{4} \cdot \log(\log(d)^4)^{8/3},
\end{split}
\end{equation}
which holds under Assumption~\ref{assumptions} (a).
\end{proof}
%

%%%%%%%%%%%%%%%%%%%%%%%%%%%%%%%%%%%%%%%%%%%%%%%%%%%%%%%%%%%%%%%%%%%%%%%%%%%%%%
%%%%%%%%%%%%%%%%%%%%%%%%%%%%%%%%%%%%%%%%%%%%%%%%%%%%%%%%%%%%%%%%%%%%%%%%%%%%%%
%%%%%%%%%%%%%%%%%%%%%%%%%%%%%%%%%%%%%%%%%%%%%%%%%%%%%%%%%%%%%%%%%%%%%%%%%%%%%%
%%%%%%%%%%%%%%%%%%%%%%%%%%%%%%%%%%%%%%%%%%%%%%%%%%%%%%%%%%%%%%%%%%%%%%%%%%%%%%

\bibliography{sketch_mult_bib_rev}

\end{document}